\icmltitlerunning{Tighter Problem-Dependent Regret Bounds}
\def\MaxReturn{\ensuremath{\mathcal G}}
\def\Vstar{\ensuremath{V^{\pi^*}}}
\def\ll{\tilde L}
\newcommand{\cc}[1]{c_{#1}}
\newcommand{\MainExplorationBonusSimplified}[1]{\ensuremath{\underbrace{\overbrace{\sqrt{\frac{\Var_{s\sim\pohata{s}{a}} \vo{t+1} }{n_k(s,a)}}}^{\substack{\textsc{Dominant Term} \\ \textsc{of Exploration Bonus}}} + \frac{H}{n_k(s,a)}}_{\textsc{Empirical Bernstein}} + \underbrace{\(\frac{\|\vo{t+1}-\vp{t+1} \|_{\pohata{s}{a}}}{\sqrt{\nsaa{s}{t}{k}}} +\frac{H}{\nsaa{s}{t}{k}}\)}_{\textsc{Correction Bonus}}}}
\newcommand{\sumall}[0]{\sum_{\substack{k \in [K] \\ t \in [H] \\ (s,a) \in \mathcal S \times \mathcal A}}}
\newcommand{\Esa}[0]{\E_{(s,a) \sim \piok}}
\newcommand{\ExtraBonus}[0]{\ensuremath{
 \frac{4J +\BpPlus}{\nsaa{s}{t}{k}} + \frac{\Bv \devihat{t+1}}{\sqrt{\nsaa{s}{t}{k}}}}}
\newcommand{\LowerOrderTerm}[0]{\ensuremath{\sqrt{S}SAH^2(\sqrt{S}+\sqrt{H})}}
\newcommand{\LOTbound}[0]{\ensuremath{\sqrt{S}SAH(F+D+H^{\frac{3}{2}}) + S^2AH}}
\newcommand{\CDObound}[0]{\ensuremath{SAH^2(F+D)^2 + SAH^5}}
\newcommand{\TDEbound}[0]{\ensuremath{\sqrt{\Complexity SAT} + JSA}}
\newcommand{\ROEbound}[0]{\ensuremath{\sqrt{\ComplexityReward{}SAT} + SA}}
\newcommand{\TDEboundPi}[0]{\ensuremath{\sqrt{\ComplexityPi SAT} + JSA}}
\newcommand{\TDObound}[0]{\ensuremath{\sqrt{\Complexity SAT} + (J+\BpPlus)SA + \DeltaGbound}}
\newcommand{\TDOboundPi}[0]{\ensuremath{\sqrt{\ComplexityPi SAT} + (J+\BpPlus)SA + \DeltaGbound}}
\newcommand{\gtruea}[2]{\ensuremath{g(\potruea{#1}{#2},\vtrue{#2+1})}}
\newcommand{\Bernstein}[0]{\ensuremath{\sqrt{\frac{2 \Var_p \vtrue{t+1} \ln\frac{2SAT}{\delta'}}{n_k(s,a)}} + \frac{H\ln\frac{2SAT}{\delta'}}{3n_k(s,a)} }}
\newcommand{\EmpiricalBernsteinRewards}[0]{\ensuremath{\sqrt{\frac{2 \widehat\Var R(s,a) \ln\frac{4SAT}{\delta'}}{n_k(s,a)}} + \frac{14\ln\frac{4SAT}{\delta'}}{3n_k(s,a)} }}
\newcommand{\EmpiricalBernsteinGenericHAlgorithm}[2]{\ensuremath{\sqrt{\frac{2 \widehat\Var_{#1} (#2) \ln\frac{4SAT}{\delta'}}{n_k(s,a)}} + \frac{H\ln\frac{4SAT}{\delta'}}{3(n_k(s,a)-1)} }}
\newcommand{\EmpiricalBernsteinRewardsAlgorithm}[0]{\ensuremath{\sqrt{\frac{2 \widehat\Var R(s,a) \ln\frac{4SAT}{\delta'}}{n_k(s,a)}} + \frac{7\ln\frac{4SAT}{\delta'}}{3(n_k(s,a)-1)} }}
\newcommand{\Ltk}[0]{\ensuremath{L_{k}}}
\newcommand{\devihat}[1]{\ensuremath{\| \vo{#1}-\vp{#1} \|_{2,\hat p}}}
\newcommand{\devi}[1]{\ensuremath{\| \vo{#1}-\vp{#1} \|_{2,p}}}
\newcommand{\devihatstar}[1]{\ensuremath{\| \vo{#1}-\vtrue{#1} \|_{2,\hat p}}}
\newcommand{\devistarv}[1]{\ensuremath{\| \vo{#1}-V \|_{2,p}}}
\newcommand{\devistarpessimisticv}[1]{\ensuremath{\| V-\vp{#1} \|_{2,p}}}
\newcommand{\bbonus}[0]{\ensuremath{b_k^{pv}}}
\newcommand{\Qrv}[0]{\ensuremath{\mathbb{Q}}}
\newcommand{\ComplexityQ}[0]{\ensuremath{\mathbb{Q^*}}}
\newcommand{\Complexity}[0]{\ensuremath{\mathbb{C^*}}}
\newcommand{\ComplexityReward}[0]{\ensuremath{\mathbb{C}^*_r}}
\newcommand{\ComplexityPi}[0]{\ensuremath{\mathbb{C^{\pi}}}}
\newcommand{\lnnsa}[0]{\ensuremath{H\ln\frac{SAH}{\delta'}}}
\newcommand{\visita}[3]{\ensuremath{w_{#2#3}(#1,a)}}
\newcommand{\Alg}[1]{\ensuremath{\textsc{Euler}_{#1}}}
\newcommand{\piok}[0]{\ensuremath{\tilde \pi_{k} }}
\newcommand{\nsaa}[3]{\ensuremath{n_{#3}(#1,a)}}
\newcommand{\phit}[2]{\ensuremath{\tilde \phi_{k#2}(#1,\tilde \pi_{k#2}(#1))}}
\newcommand{\dphit}[2]{\ensuremath{{}^{\dagger}\tilde \phi_{k#2}(#1,\tilde \pi_{k#2}(#1))}}
\newcommand{\rohat}[2]{\ensuremath{\hat r_{k}(#1,\tilde \pi_{k}(#1,#2))}}
\newcommand{\rotrue}[2]{\ensuremath{r(#1, \tilde \pi_{k}(#1,#2))}}
\newcommand{\rtrue}[2]{\ensuremath{{r}(#1, \pi^*(#1,#2))}}
\newcommand{\roa}[2]{\ensuremath{\tilde r_{k}(#1,a)}}
\newcommand{\rohata}[2]{\ensuremath{\hat r_{k}(#1,a)}}
\newcommand{\rotruea}[2]{\ensuremath{r(#2,a)}}
\newcommand{\poa}[2]{\ensuremath{\tilde p_{k}(\cdot \mid #1,a)}}
\newcommand{\potrue}[2]{\ensuremath{p(\cdot \mid #1,\tilde \pi_{k}(#1,#2))}}
\newcommand{\potruea}[2]{\ensuremath{p(\cdot \mid #1,a)}}
\newcommand{\potruespa}[3]{\ensuremath{p(#3 \mid #1,a)}}
\newcommand{\pohat}[2]{\ensuremath{\hat{p}_k(\cdot \mid #1,\tilde \pi_{k}(#1,#2))}}
\newcommand{\rhat}[2]{\ensuremath{\hat{r}_k(#1,#2)}}
\newcommand{\br}[2]{\ensuremath{b^r_k(#1,#2)}}
\newcommand{\phat}[2]{\ensuremath{\hat{p}_k(\cdot \mid #1,\pi^*(#1,#2))}}
\newcommand{\pohata}[2]{\ensuremath{\hat{p}_k(\cdot \mid #1,a)}}
\newcommand{\pohatasp}[3]{\ensuremath{\hat{p}_k(#3 \mid #1,a)}}
\newcommand{\ptrue}[2]{\ensuremath{p(\cdot \mid #1,\pi^*(#1,#2))}}
\newcommand{\vo}[1]{\ensuremath{\overline V^{\piok}_{#1k}}}
\newcommand{\vp}[1]{\ensuremath{\underline V^{\piok}_{#1k}}}
\newcommand{\vtrue}[1]{\ensuremath{V^{\pi^*}_{#1}}}
\newcommand{\votrue}[1]{\ensuremath{V^{\piok}_{#1}}}
\newcommand{\vos}[2]{\ensuremath{\tilde V^{\piok}_{k#1}(#2)}}
\newcommand{\dvos}[3]{\ensuremath{{}^{#3,\dagger}\tilde V^{\piok}_{#1}(#2)}}
\newcommand{\vostrue}[2]{\ensuremath{V^{\piok}_{#1}(#2)}}
\newcommand{\sumt}[1]{\ensuremath{\sum_{t=#1}^H}}
\newcommand{\sumtau}[1]{\ensuremath{\sum_{\tau=#1}^H}}
\newcommand{\sumk}[0]{\ensuremath{\sum_{k=1}^{K}}}
\newcommand{\sumsLk}[0]{\ensuremath{\sum_{(s,a)\in \Ltk}}}
\newcommand{\sumsnLk}[0]{\ensuremath{\sum_{(s,a) \not \in \Ltk}}}
\newcommand{\sumsa}[0]{\ensuremath{\sum_{s,a}}}
\newcommand{\ostk}[1]{\ensuremath{\overline s_{#1}}}
\newcommand{\ustk}[1]{\ensuremath{\underline s_{#1}}}
\newcommand{\tstk}[1]{\ensuremath{\tilde s_{#1}}}
\newcommand{\dostk}[1]{\ensuremath{\overline s^{\dagger}_{#1}}}
\newcommand{\dustk}[1]{\ensuremath{\underline s^{\dagger}_{#1}}}
\newcommand{\dtstk}[1]{\ensuremath{\tilde s^{\dagger}_{#1}}}
\newcommand{\sdagger}[0]{\ensuremath{{}^{\dagger}\mathcal S^{\piok}}}
\newcommand{\ssdagger}[1]{\ensuremath{{}^{\dagger,#1}\mathcal S^{\piok}}}
\newcommand{\sosdagger}[1]{\ensuremath{{}^{\dagger,#1}\mathcal{\tilde S}^{\piok}}}
\newcommand{\Bp}[0]{\ensuremath{ B_p}}
\newcommand{\Bv}[0]{\ensuremath{ B_v}}
\newcommand{\BpPlus}[0]{\ensuremath{\Bp}}
\newcommand{\DeltaGbound}[0]{\ensuremath{SAH(F+D+H^{\frac{3}{2}})}}
\newcommand{\realoptimalitygap}[0]{\ensuremath{ \vtrue{t}(s) - \votrue{t}(s) }}
\newcommand{\optimalitygap}[0]{\ensuremath{ \vo{t}(s) - \votrue{t}(s) }}
\newcommand{\bonus}[1]{\ensuremath{\( \poa{#1}{t} - \pohata{#1}{t} \)^{\top}\vo{t+1} }}
\newcommand{\estimation}[1]{\ensuremath{ \( \pohata{#1}{t} - \potruea{#1}{t} \)^{\top}\vtrue{t+1} }}
\newcommand{\estimationa}[1]{\ensuremath{ \( \pohata{#1}{t} - \potruea{#1}{t} \)^T\vtrue{t+1} }}
\newcommand{\lowerorder}[0]{\ensuremath{ \( \pohata{s}{t} - \potruea{s}{t} \)^{\top}\( \vo{t+1} - \vtrue{t+1} \) }}
\newcommand{\reward}[1]{\ensuremath{ \roa{#1}{t} - \rotruea{t}{#1} }}
\newcommand{\dg}[2]{g(p,\vtrue{#2})}
\newcommand{\dogtrue}[2]{g^{}(p,\votrue{#2})}
\newcommand{\emmalemma}[1]{\ensuremath{\( \roa{s_{#1}}{#1} - \rotruea{s_{#1}}{s_#1} + \( \poa{s_{#1}}{#1} - \potruea{s_{#1}}{#1} \)^{\top} \vo{#1+1} \) }} 
\newcommand{\DeterministicDomainRegret}[1]
{\begin{proposition}
#1
If \Alg{} is run on a deterministic MDP then the regret is bounded by $\tilde O(SAH^2)$.
\end{proposition}
}
\newcommand{\MainResultMainText}[3]
{
\begin{theorem}[Problem Dependent High Probability Regret Upper Bound for \Alg{}]
#1
With probability at least $1-\delta$ the regret of \Alg{} is bounded for any time $T\leq KH$ by the minimum between
\small
\begin{equation}
#2
   \tilde  O\(\sqrt{\ComplexityQ SAT} + \LowerOrderTerm \)
\end{equation}
\normalsize
and
\small
\begin{equation}
#3
	 \tilde O\(\sqrt{\frac{\MaxReturn^2}{H}SAT} + \LowerOrderTerm\),
\end{equation}
\normalsize
jointly for all episodes $k \in [K]$.
\end{theorem}
}
\newcommand{\MainResult}[5]
{
\subsection{Main Result}
\begin{theorem}[Main Result]
#1
If $\phi$ is admissible then with probability at least $1-\delta$ the cumulated regret of \Alg{} up to timestep $T$ is upper bounded by the minimum between:
\begin{align}
#4
\tilde O (\sqrt{(\ComplexityReward{}+ \Complexity{}) SAT}  +\sqrt{S}\DeltaGbound)
\end{align}
and
\begin{equation}
#5
\tilde O (\sqrt{(\ComplexityReward{}+ \ComplexityPi{}) SAT}  +\sqrt{S}\DeltaGbound + \Bv^2SAH^2)	
\end{equation}
where
\begin{align}
	F \stackrel{def}{=}& \tilde O(H\sqrt{S} + \Bv H)\\
	D \stackrel{def}{=}& \tilde O(J + \BpPlus)
\end{align}
and $\Complexity$ and $\ComplexityPi$  are problem dependent upper bounds on the following quantities:
\begin{equation}
#2
	\Complexity \geq \frac{1}{T} \sumk \sumt{1} \sumsa \visita{s}{t}{k} \dg{s}{t+1}^2 \stackrel{def}{=} \frac{1}{T} \sumk \sumt{1}\E_{\piok} \dg{s}{t+1}^2
\end{equation}
and 
\begin{equation}
#3
	\ComplexityPi \geq \frac{1}{T} \sumk \sumt{1} \sumsa \visita{s}{t}{k} \dogtrue{s}{t+1}^2 \stackrel{def}{=} \frac{1}{T} \sumk \sumt{1}\E_{\piok} \dogtrue{s}{t+1}^2.
\end{equation}
while $\ComplexityReward{}$ is defined in lemma \ref{lem:RewardEstimationAndOptimism}.
\end{theorem}
}
\newcommand{\BernsteinResult}[1]{
\begin{proposition}[Problem Independent Bound for \Alg{} with Bernstein Inequality]
#1
If \Alg{} is run with Bernstein Inequality defined in equation \ref{eqn:BernsteinInequality} with $\Bp$ and $\Bv$ and $\phi$ defined in proposition \ref{prop:BernsteinIsAdmissible} then with probability at least $1-\delta$ the regret of \Alg{} at timestep $T$ is bounded by the minimum between
\begin{equation}
    \tilde O\(\sqrt{\ComplexityQ SAT} + \LowerOrderTerm \)
\end{equation}
and
\begin{equation}
	\tilde O\(\sqrt{\frac{\(\MaxReturn\)^2}{H}SAT} + \LowerOrderTerm \).
\end{equation}
jointly for all episodes $k \in [K]$. 
\end{proposition}
}
\newcommand{\BernsteinIsAdmissible}[1]
{\begin{proposition}[Bernstein Is Admissible]
#1
Bernstein Inequality as presented in equation \ref{eqn:BernsteinInequality} satisfies assumption \ref{ass:ConfidenceIntervals} and \ref{ass:FiniteTimeBonusBound} and is therefore  admissible for \Alg{} with coefficients $J = \frac{H\ln\frac{2SAT}{\delta'}}{3} = \tilde O(H)$,  $\Bv = \sqrt{2\ln\frac{2SAT}{\delta'}}=\tilde O(1)$ and $\Bp = \sqrt{2}H\sqrt{\ln\frac{2SAT}{\delta'}} = \tilde O(H)$.
\end{proposition}
}
\def\Regret{\mathcal R}
\begin{document}

\twocolumn[
\icmltitle{Tighter Problem-Dependent Regret Bounds in Reinforcement Learning \\ without Domain Knowledge using Value Function Bounds}




\begin{icmlauthorlist}
\icmlauthor{Andrea Zanette}{icme}
\icmlauthor{Emma Brunskill}{cs}
\end{icmlauthorlist}

\icmlaffiliation{icme}{Institute for Computational and Mathematical Engineering, Stanford University, USA}
\icmlaffiliation{cs}{Department of Computer Science, Stanford University, USA}

\icmlcorrespondingauthor{Andrea Zanette}{zanette@stanford.edu}
\icmlcorrespondingauthor{Emma Brunskill}{ebrun@cs.stanford.edu}

\icmlkeywords{Exploration, Regret, Finite Horizon, Tabular, Reinforcement Learning}

\vskip 0.3in
]



\printAffiliationsAndNotice{} 

\begin{abstract}
Strong worst-case performance bounds for episodic reinforcement learning exist 
but fortunately in practice RL algorithms perform much better than 
such bounds would predict. Algorithms and theory that provide strong 
problem-dependent bounds could help illuminate the key features of what 
makes a RL problem hard and reduce the barrier to using RL algorithms 
in practice. As a step towards this
we derive an algorithm and analysis for finite horizon discrete MDPs  
with state-of-the-art worst-case regret bounds and substantially tighter bounds if the RL 
environment has special features but without apriori 
knowledge of the environment from the algorithm. As a result of our analysis, 
we also help address an open learning theory question~\cite{jiang2018open} 
about episodic MDPs with a constant upper-bound on the sum of rewards, 
providing a regret bound function of the number of episodes with no 
dependence on the horizon.  
\end{abstract}

\section{Introduction}
In reinforcement learning (RL) an agent must learn how to make good decision without having access to an exact model of the world. Most of the literature for provably efficient exploration in Markov decision processes (MDPs) \cite{Jaksch10,OVR13,LH14,Dann15,Dann17,OV17,Azar17,KWY18} 
has focused on providing near-optimal worst-case performance bounds.
Such bounds are highly desirable as they do not depend on the structure of the particular environment considered and 
therefore hold for even extremely hard-to-learn MDPs.

Fortunately in practice reinforcement learning algorithms often perform far better 
than what these problem-independent bounds would suggest. While we may observe better 
or worse performance empirically on different MDPs, we would 
like to derive a more systematic understanding of what types of decision processes 
are inherently easier or more challenging for RL. This motivates 
our interest in deriving algorithms and theoretical analyses that provide problem-dependent 
bounds. 
Ideally, such algorithms will do as well as RL solutions designed for the worst case 
if the problem is pathologically difficult and otherwise match the 
performance bounds of algorithms specifically designed for a particular 
problem subclass.  
This exciting scenario might bring considerable saving in the time spent 
designing domain-specific RL solutions and in training a human expert to judge and 
recognize the complexity of different problems. An added benefit would 
include the robustness of the RL solution in case the actual model does 
not belong to the identified subclass, yielding increased confidence to 
deploying RL to high-stakes applications. 

Towards this goal, in this paper we contribute with a new algorithm for episodic tabular reinforcement 
learning which automatically provides provably stronger regret bounds in many domains which have 
a small variance  
of the optimal value function (in the infinite horizon setting, this variance has been called the \textit{environmental norm} \cite{Maillard14}). Indeed, there is good reason to believe that some features 
of the range or variability of the optimal value function should be a critical aspect 
of the hardness of reinforcement learning in a MDP. Many worst-case bounds for 
finite-state MDPs scale with a  
\emph{worst case} bound on the range / magnitude of the value function, such as 
the diameter $D$ for an infinite-horizon setting and the horizon $H$ in an episodic problem. 
Note that here both $D$ and $H$ arise in the analyses as upper bounds on the (range of the) \emph{optimistic} value function across the entire MDP\footnote{Many RL algorithms with strong performance bounds rely on the principle of optimism under uncertainty and compute an optimistic value function.}. As more samples are collected, one would hope that the agent's optimistic value function converges to the true optimal value function. Unfortunately this is not the case, see for example \cite{Jaksch10,Bartlett09,Zanette18a} for a discussion of this. 
As a result, most prior analyses bounded the optimistic value function by generic quantities like $D$ or $H$ regardless of the actual behaviour of the optimal value function.

While the majority of formal performance guarantees has focused on bounds for the worst case, there have 
been several contributions of algorithms and/or theoretical analyses focused on MDPs with particular 
structure. Such contributions have focused on the infinite horizon setting, which involves a number of subtleties that are not present in the finite horizon setting we consider, which is likely a cause of the less strong results in this setting which can require stronger input knowledge on a tighter range on the possible value function~\cite{Bartlett09,Fruit18}, or do not match in dominant terms strong bounds for the worst case setting~\cite{Maillard14}. We defer more detailed discussion of related work to Section~\ref{sec:rellit}, except to briefly highlight likely the most closely related recent result from~\cite{TM18}. Like us, Talebi and Maillard provide a problem-dependent regret bound that scales as a function of the variance of the next state distribution. However, like the aforementioned references, their focus is on the infinite horizon setting. In this setting the authors achieve their resulting regret bound under an assumption that the mixing time of the MDP is such that all states are visited at a linear rate in expectation regardless of the agent's chosen policies. This mixing rate, that could be exponential in certain MDPs, appears in the regret bound. In our, arguably simpler finite horizon setting, we do not use an assumption on the mixing rate of the MDP and we instead pursue a different proof technique to obtain strong results for this setting.

More precisely, in this paper we derive an algorithm for finite horizon discrete MDPs and associated analysis 
that yields state-of-the art worst-case regret bounds of order $\tilde O(\sqrt{HSAT})$ in the leading term while improving if the environment has next-state value function variance (i.e., small environmental norm) or bounded total possible reward. 
Compared to the existing literature, our work  
\begin{itemize}[leftmargin=*]
\itemsep0em 
\item Maintains state of the art worst-case guarantees  \cite{Azar17} for episodic finite horizon settings,
\item Improves the regret bounds of \cite{Zanette18a} when deployed in the same settings, 
\item Provides demonstration that characterizing problems using environmental norm \cite{Maillard14} can yield substantially tighter theoretical guarantees in the finite horizon setting,  
\item Identifies problem classes with low environmental norm which are of significant interest, including deterministic domains, single-goal MDPs, and high stochasticity domains, and
\item Helps address an open learning theory problem~\cite{jiang2018open}, showing that for their setting, we obtain a regret bound that scales with no dependence on the planning horizon in the dominant terms. 
\end{itemize}

The paper is organized as follows: we recall some basic definitions in Section \ref{sec:Preliminaries} and  describe the algorithm in Section \ref{sec:Algorithm}. We state and comment the main result in Section \ref{sec:MainResult}, discuss how this helps address an open learning 
theory problem in Section~\ref{sec:colt_conj} and then describe selected problem-dependent bounds in Section \ref{sec:ProblemDependentBounds}. The analysis is sketched in Section \ref{sec:TheoreticalAnalysisSketch}. Due to space constraints, most proofs are in the full report available at:\\
\url{https://arxiv.org/abs/1901.00210}.

\section{Preliminaries and Definitions}
\label{sec:Preliminaries}
In this section we introduce some notation and definitions. We consider undiscounted finite horizon MDPs \cite{SB98}, which are defined by a tuple $\mathcal M = \langle\ \mathcal S,\mathcal A,p,r, H \rangle\ $, where $\mathcal S$ and  $\mathcal A$ are the state and action spaces with cardinality $S$ and $A$, respectively. We denote by $p (s'\mid s, a)$ the probability of transitioning to state $s'$ after taking action $a$ in state $s$ while  $r(s,a) \in [0,1]$ is the  average instantaneous  reward collected. We label with $n_k(s,a)$ the visits to the $(s,a)$ pair at the beginning of the $k$-th episode. The agent interacts with the MDP starting from arbitrary initial states in a sequence of episodes $k \in [K]$( where $[K] = \{j \in \mathbb{N}: 1 \leq j \leq K \}$) of fixed length $H$  by selecting a policy $\piok$ which maps states $s$ and timesteps $t$ to actions. Each policy identifies a value function for every state $s$ and timestep $t\in[H]$ defined as $
V_t^{\piok}(s_t) = \Esa \sum_{i = t}^H r(s,a)$
which is the expected return until the end of the episode (the conditional expectation is over the pairs $(s,a)$ encountered in the MDP upon starting from $s_t$). The optimal policy is indicated with $\pi^*$ and its value function as $\vtrue{t}$.
We indicate with $\vp{t+1}$ and $\vo{t+1}$, respectively, a pointwise underestimate, respectively, overestimate, of the optimal value function and with $\pohata{s}{t}$ and $\rohata{s}{t}$ the MLE estimates of $\potruea{s}{t}$ and $\rotruea{s}{s}$. We focus on deriving a high probability upper bound on the $
\textsc{Regret}(K) \stackrel{def}{=} \sum_{k \in [K]} \( \vtrue{1}(s_k) -  \votrue{1}(s_k) \)$
to measure the agent's learning performance.
We use the $\tilde O(\cdot )$ notation to indicate a quantity that depends on $(\cdot)$ up to a $\polylog$ expression of a quantity at most polynomial in $S,A,T,K,H,\frac{1}{\delta}$. We also use the $\lesssim, \gtrsim, \simeq$ notation to mean $\leq, \geq, =$, respectively, up to a numerical constant and indicate with $\| X \|_{2,p}$ the $2$-norm of a random variable\footnote{To be precise, this is a norm between classes of random variables that are  almost surely the same} under $p$, i.e., $\| X \|_{2,p} \stackrel{def}{=} \sqrt{\E_p X^2} \stackrel{def}{=} \sqrt{\sum_{s'} p(s')X^2(s')}$ if $p(\cdot)$ is its probability mass function.

\section{\texorpdfstring{\Alg{}}{}}
\label{sec:Algorithm}

\begin{algorithm*}[!htb]
   \caption{\Alg{} for Stationary Episodic MDPs}
   \label{main:AlgorithmMainText}
\begin{algorithmic}[1]
   \STATE \textbf{Input}: $\delta' = \frac{1}{7}\delta$, $\br{s}{a} = \EmpiricalBernsteinRewardsAlgorithm{}$,  $\phi(s,a) = \EmpiricalBernsteinGenericHAlgorithm{\widehat p_k(s,a)}{\vo{t+1}{}}$, $\Bp = H\sqrt{2\ln\frac{(4SAT)}{\delta'}},\Bv =\sqrt{2\ln\frac{(4SAT)}{\delta'}}, J = H\ln\frac{(4SAT)/\delta'}{3}$.
   \FOR{$k=1,2,\dots$}
      \FOR{$t=H,H-1,\dots,1$}
     	 \FOR{$s \in \mathcal S$}
	 \FOR{$a \in \mathcal A$}
	 \STATE $\hat p = \frac{p_{sum}(\cdot,s,a)}{n_k(s,a)}$
   \STATE $\bbonus = \phi(\hat p(s,a),\overline V_{t+1}) + \frac{1}{\sqrt{n(s,a)}} \( \frac{4J + \BpPlus}{\sqrt{n_k(s,a)}} + \Bv \| \overline V_{t+1} - \underline V_{t+1} \|_{2,\hat p} \)$ \\
   \STATE $Q(a) = \min\{ H-t, \rhat{s}{\piok(s,t)}+\br{s}{a} + \hat p^\top \overline V_{t+1} + \bbonus \}$
  \ENDFOR
   \STATE $\piok(s,t) = \argmax_a Q(a)$
   \STATE $\overline V_t(s) = Q(\piok(s,t))$ 
   \STATE $\bbonus = \phi(\hat p(s,\piok(s,t)),\underline V_{t+1}) + \frac{1}{\sqrt{n(s,\piok(s,t))}} \( \frac{4J + \BpPlus}{\sqrt{n_k(s,\piok(s,t))}} + \Bv \| \overline V_{t+1} - \underline V_{t+1} \|_{2,\hat p} \)$ \\
   \STATE $\underline V_t(s) = \max\{0, \rhat{s}{\piok(s,t)} - \br{s}{\piok(s,t)} + \hat p^\top \underline V_{t+1} - \bbonus\}$
   \ENDFOR
   \ENDFOR
   \STATE Evaluate policy $\piok$ and update MLE estimates $\hat p(\cdot,\cdot)$ and $\hat r(\cdot,\cdot)$
   \ENDFOR
\end{algorithmic}
\end{algorithm*}


We define the maximum per-step conditional variance (conditioning is on the $(s,a)$ pair) 
 for a particular MDP as $\ComplexityQ$:
\begin{align}
\label{main:Complexity}
& \ComplexityQ \stackrel{def}{=} \max_{s,a,t}  \( \Var R(s,a) + \Var\limits_{s^+ \sim p(s,a)} \vtrue{t+1} (s^+) \)
\end{align}
where $R(s,a)$ is the reward random variable in $(s,a)$.
This definition is identical to the environmental norm \cite{Maillard14} but 
here we will generally refer to it as the maximum conditional value variance, in order 
to connect with other work which explicitly bounds the variance. 

We introduce the algorithm \emph{Episodic Upper Lower Exploration in Reinforcement learning} (\Alg{}) which adopts the paradigm of ``optimism under uncertainty'' to conduct exploration. Recent work \cite{Dann15,Dann17,Azar17} 
has demonstrated how the choice of the exploration bonus is critical to enabling tighter \textit{problem-independent} performance bounds. Indeed minimax worst case regret bounds 
have been obtained by using a Bernstein-Friedman-type reward bonus defined over an empirical 
quantity related very closely to the conditional value variance $\ComplexityQ$, plus 
an additional correction term necessary to ensure optimism~\cite{Azar17}. 

Similarly, in our algorithm we use a bonus that combines an empirical Bernstein type inequality 
for estimating the $\ComplexityQ$ conditional variance, coupled with a different correction 
term which explicitly accounts for the value function uncertainty. We provide pseudocode for \Alg{} which details the main procedure in Figure \ref{main:AlgorithmMainText}.
Notice that \Alg{} has the same computational complexity as value iteration.

\section{Main Result}
\label{sec:MainResult}

Now we present our main result, which is a problem-dependent high-probability regret upper bound for \Alg{} in terms of the underlying max conditional variance $\ComplexityQ$ and maximum return. Crucially, \Alg{} is \textbf{not} provided with  $\ComplexityQ$ 
and the value of the max return. We also prove a worst-case guarantee that matches the established \cite{OV16,Jaksch10} lower bound of $\Omega(\sqrt{HSAT})$ in the dominant term. We introduce the following definition:

 \begin{definition}[Max Return]
 \label{def:MaxReturn}
 We define as $\MaxReturn \in \R$ the maximum (random) return in an episode upon following any policy $\pi$ from any starting state $s_0$, i.e., the deterministic upper bound to:
 \begin{equation}
 \sumt{1} R(s_t,\pi(s_t)) \leq \MaxReturn, \quad \forall \pi,s_0.
 \end{equation}	
 where the states $s_1,\dots,s_H$ are the (random) states generated upon following the trajectory identified by the policy $\pi$ from $s_0$.
 \end{definition}



\MainResultMainText{\label{thm:MainResultMainText}}
{\label{main:ProblemDependentRegretBound}}{\label{main:ColtCaseRegretBound}}

While the maximum conditional variance $\ComplexityQ$ is always upper bounded 
by $\MaxReturn{}$ if rewards are positive and bounded, we include both forms of 
regret bound for two reasons. First, the second bound is tighter than naively 
upper bounding $\ComplexityQ \leq \MaxReturn^2$ by a factor of $H$. Second, 
we will shortly see that both quantities can provide insights 
into which instances of MDP domains can have lower regret. 

In addition, since the rewards are in $[0,1]$, we immediately have that $\MaxReturn^2 \leq H^2$, 
and thereby obtain a worst-case regret bound expressed in the following corollary: 
\begin{corollary}
\label{cor:WorstCaseRegret}
With probability at least $1-\delta$ the regret of \Alg{} is bounded for any time $T\leq KH$ by 
\begin{equation}
\label{main:WorstCaseRegretBound}	
\tilde O\(\sqrt{HSAT} + \LowerOrderTerm \).
\end{equation}
\end{corollary}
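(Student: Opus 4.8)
The plan is to obtain this corollary as an immediate consequence of the second regret bound in Theorem~\ref{thm:MainResultMainText} (equation~\ref{main:ColtCaseRegretBound}), by substituting the worst-case value of the maximum return $\MaxReturn$. Since Theorem~\ref{thm:MainResultMainText} already establishes that, with probability at least $1-\delta$, the regret is bounded by the \emph{minimum} of its two stated quantities, it suffices to control whichever of the two is more convenient; the $\MaxReturn$-dependent form is the natural choice for the worst case. All the heavy lifting has therefore already been done, and the remaining work is a single substitution plus a trivial deterministic bound on $\MaxReturn$.

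First I would bound the maximum return deterministically. By Definition~\ref{def:MaxReturn}, $\MaxReturn$ is a deterministic upper bound on $\sum_{t=1}^H R(s_t,\pi(s_t))$ for every policy and every starting state. Since each per-step reward random variable lies in $[0,1]$ and there are $H$ steps in an episode, the cumulated return never exceeds $H$ almost surely, so $\MaxReturn \le H$ and hence $\MaxReturn^2 \le H^2$. Plugging this into the second bound gives $\sqrt{\frac{\MaxReturn^2}{H}SAT} \le \sqrt{\frac{H^2}{H}SAT} = \sqrt{HSAT}$, while the additive lower-order term $\LowerOrderTerm$ carries over unchanged. Because the regret is bounded by the minimum of the two forms in Theorem~\ref{thm:MainResultMainText}, it is in particular bounded by this second form, yielding $\tilde O(\sqrt{HSAT} + \LowerOrderTerm)$ and completing the argument.

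There is essentially no technical obstacle here; the difficulty is entirely front-loaded into Theorem~\ref{thm:MainResultMainText}, and the corollary is a one-line substitution. The only point that deserves care is which of the two bounds to use. Had I instead started from the first, $\ComplexityQ$-dependent bound and naively estimated $\ComplexityQ \le \MaxReturn^2 \le H^2$, I would have obtained $\sqrt{\ComplexityQ SAT} \le \sqrt{H^2 SAT} = H\sqrt{SAT}$, which is off from the $\Omega(\sqrt{HSAT})$ lower bound of~\cite{OV16,Jaksch10} by a factor of $\sqrt{H}$. It is precisely the division by $H$ inside the second bound that recovers the minimax-optimal $\sqrt{HSAT}$ leading term, so selecting equation~\ref{main:ColtCaseRegretBound} rather than equation~\ref{main:ProblemDependentRegretBound} is the one substantive (if small) choice in the proof.
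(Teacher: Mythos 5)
Your proposal is correct and matches the paper's own argument: the paper likewise derives the corollary by noting that rewards in $[0,1]$ give $\MaxReturn^2 \leq H^2$ and substituting this into the second bound of Theorem~\ref{thm:MainResultMainText}, so that $\sqrt{\tfrac{\MaxReturn^2}{H}SAT} \leq \sqrt{HSAT}$. Your closing remark about why the $\MaxReturn$-form (rather than the $\ComplexityQ$-form with the naive bound $\ComplexityQ \leq H^2$) must be used to recover the minimax rate is exactly the point the paper makes when it observes that the second bound is tighter by a factor of $H$.
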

This matches in the dominant term the minimax regret problem independent 
bounds for tabular episodic RL settings~\cite{Azar17}. Therefore, the importance of 
our theorem \ref{thm:MainResultMainText} lies in providing problem dependent bounds (equation \ref{main:ProblemDependentRegretBound},\ref{main:ColtCaseRegretBound}) while simultaneously 
matching the existing best worst case guarantees (equation \ref{main:WorstCaseRegretBound}). We shall shortly show that our results help address a recent open question on 
the performance dependence of episodic MDPs on the horizon~\cite{jiang2018open}.

\subsection{Sketch of the Theoretical Analysis}
\label{sec:TheoreticalAnalysisSketch}
We devote this section to the sketch of the main point of the regret analysis that yields problem dependent bounds. Readers that wish to focus on how our results yield insight into the 
complexity of solving different problems may skip ahead to the next section. 
Central to the analysis is the relation between the agent's optimistic MDP and the ``true'' MDP. A more detailed overview of the proof is given in section \ref{sec:AppendixOverview} of the appendix, while the rest of the appendix presents the detailed analysis under a more general framework.

\paragraph{Regret Decomposition}
Denote with $\Esa$ the expectation taken along the trajectories identified by the agent's policy $\piok$. A standard regret decomposition is given below (see \cite{Dann17,Azar17}):
\begin{align*}
& \textsc{Regret}(K) \leq \sumall \Esa \Biggm( \underbrace{\reward{s}}_{\substack{\textsc{Reward} \\ \textsc{Estimation} \\ \textsc{and Optimism}}} \\ & + \underbrace{\bonus{s}}_{\substack{\textsc{Transition} \\ \textsc{Dynamics} \\ \textsc{Optimism}}} \\  & + \underbrace{\estimation{s}}_{\substack{\textsc{Transition} \\ \textsc{Dynamics} \\ \textsc{Estimation}}} \\
& + \underbrace{\lowerorder{}}_{\substack{\textsc{Lower} \\ \textsc{Order} \\ \textsc{Term}}}  \Biggm) 
\numberthis{\label{main:RegretDecomposition}}	
\end{align*}
Here, the ``tilde'' quantities $\tilde r$ and $\tilde p$ represent the agent's optimistic estimate. 
Of the terms in equation \ref{main:RegretDecomposition}, the ``Transition Dynamics Estimation'' and ``Transition Dynamics Optimism'' are the leading terms to bound as far as the regret is concerned. The former is expressed through MDP quantities (i.e, the true transition dynamics $\potruea{s}{t}$ and the optimal value function $\vtrue{t+1}$) and hence it can be readily bounded using Bernstein Inequality, giving rise to a problem dependent regret contribution. More challenging is to show that a similar simplification can be obtained for the ``Transition Dynamics Optimism'' term which relies on the agent's optimistic estimates $\poa{s}{t}$ and $\vo{t+1}$.
\paragraph{Optimism on the System Dynamics}
Said term $\bonus{s}$ represents the difference between the agent's imagined (i.e., optimistic) transition $\poa{s}{t}$ and the maximum likelihood transition $\pohata{s}{t}$ weighted by the next-state optimistic value function $\vo{t+1}$. By construction, this is the exploration bonus which incorporates an estimate of the conditional variance over the value function. This bonus reads:
\begin{align}
\label{main:BonusTheoreticalExplanation}
& \substack{\textsc{Transition} \\ \textsc{Dynamics} \\ \textsc{Optimism}} = \substack{\textsc{Explo-} \\ \textsc{ration} \\ \textsc{Bonus}} \approx \ensuremath{\underbrace{\overbrace{\sqrt{\frac{\Var_{s\sim\pohata{s}{a}} \vo{t+1} }{n_k(s,a)}}}^{\substack{\textsc{Dominant Term} \\ \textsc{of Exploration Bonus}}} + \frac{H}{n_k(s,a)}}_{\substack{\textsc{Empirical Bernstein evaluated} \\ \textsc{with Empirical Value Function}}} \\
& + \underbrace{\(\frac{\|\vo{t+1}-\vp{t+1} \|_{\pohata{s}{a}}}{\sqrt{\nsaa{s}{t}{k}}} +\frac{H}{\nsaa{s}{t}{k}}\)}_{\textsc{Correction Bonus}}}
\end{align}
In the above expression the ``Correction Bonus'' is needed to ensure optimism because the ``Empirical Bernstein'' contribution is evaluated with the agent's estimate $\vo{t+1}$ as opposed to the real $\vtrue{t+1}$.
If we assume that $\|\vo{t+1} - \vp{t+1} \|_{\pohata{s}{t}}$ shrinks quickly enough, then the ``Dominant Term'' in equation \ref{main:BonusTheoreticalExplanation} is the most slowly decaying term with a rate $1/\sqrt{n}$. If that term involved the true transition dynamics $\potruea{s}{t}$ and value function $\vtrue{t+1}$ (as opposed to the agent's estimates $\pohata{s}{t}$ and $\vo{t+1}$) then problem dependent bounds would follow in the same way as they could be proved for the ``Transition Dynamics Estimation''.
Therefore we wish to study the relation between such ``Dominant Term'' evaluated with the agent's MDP estimates vs the MDP's true parameters. 
\paragraph{Convergence of the System Dynamics in the Dominant Term of the Exploration Bonus} 
Theorem $10$ of \cite{MP09} gives the high probability statement:
\begin{align}
\label{main:MauerAndPontil}
 \Bigg| \sqrt{\Var_{\pohata{s}{t}} \vtrue{t+1}} - \sqrt{\Var_{\potruea{s}{t}} \vtrue{t+1}} \Bigg| \lessapprox \frac{H}{\nsaa{s}{t}{k}}
\end{align}
to quantify the rate of convergence of the empirical variance using the true value function (this leads to the empirical version of Bernstein's inequality). Next, two basic  computations yield:
\begin{align*}
   \Bigg| \sqrt{\Var_{\pohata{s}{t}} \vtrue{t+1}} - \sqrt{\Var_{\pohata{s}{t}} \vo{t+1}} \Bigg|  \\
   \leq \|\vo{t+1}-\vtrue{t+1} \|_{\pohata{s}{a}}  \leq \|\vo{t+1}-\vp{t+1} \|_{\pohata{s}{a}}
   \numberthis{\label{main:VarianceConvergence}}
\end{align*}
Together, equation \ref{main:MauerAndPontil} and \ref{main:VarianceConvergence} quantify the rate of convergence of $\Var_{s\sim\pohata{s}{a}} \vo{t+1}$ to $\Var_{s\sim\potruea{s}{a}} \vtrue{t+1}$, yielding the following upper bound for the dominant term of the exploration bonus:
\begin{align*}
& \substack{\textsc{Dominant} \\ \textsc{Term of } \\ \textsc{Exploration} \\ \textsc{Bonus} }  = \sqrt{\frac{\Var_{\pohata{s}{t}} \vo{t+1}}{\nsaa{s}{t}{k}}}
\stackrel{}{\lessapprox} \underbrace{\sqrt{\frac{\Var_{\potruea{s}{t}} \vtrue{t+1}}{\nsaa{s}{t}{k}}}}_{\substack{\textsc{Gives Problem} \\ \textsc{Dependent Bounds}}} \\ 
& + \underbrace{\frac{H}{\nsaa{s}{t}{k}} + \frac{\|\vo{t+1} - \vp{t+1} \|_{\pohata{s}{t}}}{\sqrt{\nsaa{s}{t}{k}}}}_{\textsc{Shrinks Faster}}
\numberthis{\label{main:MainExplanation}}
\end{align*}
In words, we have decomposed the ``Dominant Term of the Exploration Bonus'' (which is constructed using the agent's available knowledge) as a problem-dependent contribution (that is equivalent to Bernstein Inequality evaluated as if the model was known) and a term that accounts for the distance between the the true and empirical model, expressed as (computable) upper and lower bounds 
on the value function. This additional term shrinks faster that the former. It is precisely this  ``Correction Bonus'' that we use in equation \ref{main:BonusTheoreticalExplanation} and in the definition of the Algorithm itself.
\paragraph{What gives rise to problem dependent bounds?}
Our analysis highlights \Alg{} uses a Bernstein inequality on the empirical estimate of the conditional variance of the next state values, 
 with a correction term $\|\vo{t+1} - \vp{t+1} \|_{\pohata{s}{t}}$ function of the inaccuracy of the value function estimate at the next-step states re-weighted by their relative importance as encoded in the experienced transitions $\pohata{s}{t}$. Said correction term is of high value only if the successor states do not have an accurate estimate for the value function \emph{and} they are going to be visited with high probability. A pigeonhole argument guarantees that this situation cannot happen for too long ensuring fast decay of $\|\vo{t+1} - \vp{t+1} \|_{\pohata{s}{t}}$ and therefore of the whole ``Correction Bonus'' of eq. \ref{main:BonusTheoreticalExplanation}. 

Our primary analysis yields a regret bound that scales directly with the 
(unknown to the algorithm) problem-dependent $\ComplexityQ$ max conditional variance of 
the next state values. We further extend 
this to a bound directly in terms of the max returns $\MaxReturn$ by using a law of total 
variance argument. 

Notice that such considerations and results would not be achievable by a naive application of an Hoeffding-like inequality as the latter would put equal weight on all successor states, but the accuracy in the estimation of $\vtrue{t+1}(\cdot)$ only shrinks in a way that depends on the visitation frequency of said successor states as encoded in $\pohata{s}{t}$. The key to enable problem dependent bound is, therefore, to re-weight the importance of the uncertainty on the value function of the successor states by the corresponding visitation probability, which Bernstein Inequality implicitly does. 

There exist other algorithms (e.g.  \cite{Dann15,Azar17}) which are based on Bernstein's inequality but to our knowledge they have not been analyzed in a way that provably yields problem dependent bounds as those presented here. 



\section{Horizon Dependence in Dominant Term}
\label{sec:colt_conj}
In this section we show that our result can help address a 
recently posed open question in the learning theory community~\cite{jiang2018open}.
The question posed centers on the whether there should exist a necessary 
dependence of sample complexity and 
regret lower bounds on the planning horizon $H$ 
for episodic tabular MDP reinforcement learning tasks. 
Existing lower 
bound results for sample complexity~\cite{Dann15} depend on the horizon, as do the best existing minimax regret bounds under asymptotic assumptions~\cite{Azar17}. However,  
such results have been derived under the common assumption of reward 
uniformity, that per-time-step rewards are bounded between 0 and 1, yielding a total value bounded by 0 and $H$. \citet{jiang2018open} instead pose a more 
general setting, in which they assume that 
the rewards are positive and $\sum_{h=1}^H r_h \in [0,1]$ holds almost surely: 
note the standard setting of reward uniformity can be expressed in this setting by 
first normalizing all rewards by dividing by $H$. The authors then 
ask that if in this new, more general setting of tabular episodic RL there is necessarily a dependence on the planning horizon in 
the lower bounds. Note that in this setting, the prior existing lower bounds on the sample complexity~\cite{Dann15} would yield no dependence on the horizon. 

For our work, the setting of Jiang and Agarwal immediately implies that
\begin{equation}
    0 \leq \vtrue{t}(s) \leq \MaxReturn \leq 1, \; \forall (s,t)\in\mathcal S\times [H].
    \end{equation}
Further, since $\vtrue{t}(s) \leq 1$ and $r(s,a) \geq 0$ we must have $r(s,a) \in [0,1]$, which is the assumption of this work. Therefore our main result (theorem \ref{thm:MainResultMainText}) applies here. Recalling that $T = KH$, we obtain an upper bound on regret as
\small
\begin{equation}
 \tilde O \left( \sqrt{SAK} + \LowerOrderTerm \right) \label{main:sparserewardbound}. 
\end{equation}
\normalsize   
Note that the planning/episodic horizon $H$ does not appear in the dominant regret term which scales polynomially with the number of episodes\footnote{This is stronger than scaling polynomially with the time $T$} $K$, and only appears in transient lower order terms that are independent of $K$.


In other words, \textbf{up to logarithmic dependency and transient terms, we have an upper bound 
on episodic regret that is independent of the horizon $H$}. This result answers part 
of Jiang and Agarwal's open question: for their setting, the 
regret primarily scales independently of the horizon. 


Surprisingly, while \Alg{} uses a common problem-agnostic bound on the maximum possible optimal value function ($H$), it does not need to be provided with information about the domain-dependent  
maximum possible value function to attain the improved bound in the setting of the COLT conjecture of \citet{jiang2018open}. 

 It remains an open question whether we could further avoid either a dependence  
  on the planning horizon in the transient terms as well as obtaining a PAC result. In Appendix \ref{sec:COLTconjecture} we further discuss this direction. However, these results are promising: they suggest that the hardness of learning 
 in sparse reward, and long horizon episodic MDPs may not be fundamentally much harder 
 than shorter horizon domains if the total reward is bounded. 

\section{Problem dependent bounds}
\label{sec:ProblemDependentBounds}

We now focus on deriving regret bounds for selected MDP classes that are very common in RL. We emphasize that such setting-dependent guarantees are obtained with the same algorithm that is not informed 
of a particular MDP's values of  $\ComplexityQ{}$ and $ \MaxReturn$. Although the described settings share common features and are sometimes subclasses of one another, they are in separate subsections due to their important relation to the past literature and their practical relevance. Importantly, they are \emph{all characterized by low \ComplexityQ{}}.

\subsection{Bounds using the range of optimal value function}
To improve over the worst case bound
in infinite horizon RL there have been approaches that aim at obtaining stronger problem dependent bounds if the value function does not vary much across different states of the MDP. 
If $\rng \Vstar$ is smaller than the worst-case (either $H$ or $D$ for the fixed horizon vs recurrent RL), the reduced variability in the expected return suggests that performance can benefit from  constructing tighter confidence intervals. This is achieved by \citet{Bartlett09} by providing this range to their algorithm \textsc{Regal}, achieving a regret bound:
\begin{equation}
\tilde O(\Phi S \sqrt{AT})
\end{equation}
where $\Phi \geq \rng \Vstar$ is an overestimate of the optimal value function range and is an input to the algorithm described in that paper. This means that if domain knowledge is available and is supplied to the algorithm the regret can be substantially reduced. This line of research was followed in \cite{Fruit18} which derived a computationally-tractable variant of \textsc{Regal}. However, they still require knowledge of a value function range upper bound $\Phi \geq \rng \Vstar$. Specifying a too high value for $\Phi$ would increase the regret and a too low value would cause the algorithm to fail.

Our analysis shows that, in the episodic setting,  it is possible to achieve at least the same but potentially much better level of performance \emph{without knowing the optimal value function range}. 
This follows as an easy corollary of our main regret upper bound (Theorem \ref{thm:MainResultMainText}) after bounding the environmental norm, as we discuss below.

Let $\mathcal{S}_{s,a}$ be the set of immediate successor states after one transition from state $s$ upon taking action $a$ there, that is, the states in the support of $p(\cdot \mid s,a)$ and define 
\begin{equation}
    \Phi_{succ} \stackrel{def}{=} \max_{s,a}\rng\limits_{s^+ \in   \mathcal{S}_{s,a}} \vtrue{t+1} (s^+)
\end{equation} 
as the maximum value function range \emph{when restricted to the immediate successor states}. Since the variance is upper bounded by (one fourth of) the square range of a random variable we have that:
\begin{align*}
& \ComplexityQ \stackrel{def}{=} \max_{s,a,t} ( \Var \left( R(s,a) \mid (s,a) \right) + \Var_{s^+ \sim p(s,a)} \vtrue{t+1} (s^+)) \\
 & \leq \max_{s,a,t}  \Big( 1 + (\rng_{s^+ \in   \mathcal{S}_{s,a}} \vtrue{t+1} (s^+))^2 \Big) \leq 1+ \Phi^2_{succ}.
\end{align*}
This immediately yields:
\begin{corollary}[Bounded Range of $\Vstar$ Among Successor States]
\label{cor:BoundedRange}
With probability at least $1-\delta$, the regret of \Alg{} is bounded by:
\begin{equation}
		\tilde O( \Phi_{succ}\sqrt{SAT} + \LowerOrderTerm).	
\end{equation}
\end{corollary}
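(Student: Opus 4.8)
The plan is to derive this as an immediate corollary of the problem-dependent bound in Theorem~\ref{thm:MainResultMainText}, which guarantees that with probability at least $1-\delta$ the regret of \Alg{} is at most $\tilde O(\sqrt{\ComplexityQ SAT} + \LowerOrderTerm)$. Since the lower-order term is untouched, the entire task reduces to upper bounding the max conditional variance $\ComplexityQ$ by $\Phi_{succ}$ and substituting into the dominant term.

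First I would bound each of the two contributions to $\ComplexityQ$ using the standard fact (Popoviciu's inequality) that the variance of a bounded random variable is at most one quarter of its squared range. For the reward term, since $R(s,a)\in[0,1]$ we get $\Var R(s,a) \leq \tfrac14 \leq 1$. For the value term, the crucial observation is that $\Var_{s^+\sim p(s,a)}\vtrue{t+1}(s^+)$ is a \emph{local} quantity: it depends only on the values $\vtrue{t+1}(s^+)$ for $s^+$ in the support of $p(\cdot\mid s,a)$, i.e.\ on $\mathcal{S}_{s,a}$. Hence its range is at most $\rng_{s^+\in\mathcal{S}_{s,a}}\vtrue{t+1}(s^+) \leq \Phi_{succ}$, giving $\Var_{s^+\sim p(s,a)}\vtrue{t+1}(s^+)\leq \tfrac14\Phi_{succ}^2 \leq \Phi_{succ}^2$. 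Taking the maximum over $(s,a,t)$ yields $\ComplexityQ \leq 1 + \Phi_{succ}^2$, exactly the display preceding the statement.

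Then I would plug this into the dominant term and use subadditivity of the square root: $\sqrt{\ComplexityQ SAT} \leq \sqrt{(1+\Phi_{succ}^2)SAT} \leq \sqrt{SAT} + \Phi_{succ}\sqrt{SAT}$, which is $\tilde O(\Phi_{succ}\sqrt{SAT})$ (the bare $\sqrt{SAT}$ is absorbed into the $\Phi_{succ}\sqrt{SAT}$ term when $\Phi_{succ}\gtrsim 1$, and is otherwise itself of the claimed order). Combining with the unchanged lower-order term gives the claimed $\tilde O(\Phi_{succ}\sqrt{SAT}+\LowerOrderTerm)$.

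There is essentially no technical obstacle here; the content of the corollary lies entirely in the \emph{modeling} insight rather than in any difficult calculation. The point worth emphasizing is that a naive worst-case argument would bound $\Var_{s^+}\vtrue{t+1}(s^+)$ by the global range $H^2$ of the optimal value function, whereas restricting attention to the immediate successors $\mathcal{S}_{s,a}$ replaces $H$ by the potentially much smaller $\Phi_{succ}$. This locality is precisely what the variance-based (Bernstein) exploration bonus of \Alg{} exploits, and it is why no knowledge of $\Phi_{succ}$ need be supplied to the algorithm, in contrast to the \textsc{Regal}-type results that require the value-function range as input.
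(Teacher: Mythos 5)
Your proposal is correct and follows essentially the same route as the paper: the paper likewise bounds $\ComplexityQ \leq 1 + \Phi_{succ}^2$ by applying the variance--range inequality to the reward (giving the $1$) and to $\vtrue{t+1}$ restricted to the support of $p(\cdot \mid s,a)$ (giving $\Phi_{succ}^2$), and then substitutes this directly into the problem-dependent bound of Theorem~\ref{thm:MainResultMainText}. If anything, you are slightly more explicit than the paper in handling the residual $\sqrt{SAT}$ term coming from the reward-variance contribution, which the paper absorbs silently.
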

A few remarks are in order:
\begin{itemize}
\itemsep0em 
\item \Alg{} does not need to know the value of $\Phi_{succ}$ or of the environmental norm or of the value function range to attain the improved bound; 
\item $\Phi_{succ}$ can be much smaller than $\rng \Vstar$ because it is the range of $\Vstar$ restricted to few successor states as opposed to across the whole domain, and therefore it is always smaller than $\Phi$, in other words:
 $   \Phi \geq \rng \Vstar \geq \Phi_{succ}$.
\item \cite{Bartlett09,Fruit18} consider the more challenging infinite horizon setting, while our results holds for fixed horizon RL.
\end{itemize}

\subsection{Bounds on the next-state variance of \texorpdfstring{\vtrue{}}{} and  empirical benchmarks}
\label{sec:SparseMDPs}

The environmental norm also can empirically characterize the hardness of RL in single problem instances.
This was one of the key contributions of the work that introduced the environmental norm \cite{Maillard14}, 
which evaluated the environmental norm for a number of common RL benchmark simulation tasks including mountain car, pinball, taxi, bottleneck, inventory and red herring 
In these domains the environmental norm is correlated 
with the complexity of reinforcement learning in these environments, as evaluated empirically.
Indeed, in these settings, the environmental norm is often much smaller then the maximum value function range, which can itself be much smaller than the worst-case bound $D$ or $H$. Our new results provide solid theoretical justification for the observed empirical savings.

This measure of MDP complexity also intriguingly allows us to gain more insight on 
another important simulation domain, chain MDPs like that in Figure
\ref{fig:HardMDP}. Chain MDPS have been considered a canonical example of challenging hard-to-learn RL domains, since naive strategies like $\epsilon$ greedy can take exponential time to attain satisfactory performance. By setting for simplicity $N\stackrel{def}{=}S=H$ 
\Alg{} provides an upper regret bound of
$\tilde O (\sqrt{NAK} + \dots)$ that is substantially tighter than a worst case bound $\tilde O(\sqrt{N^3AK} + \dots)$, at least for large $K$. 
This is intriguing because it suggests pathological MDPs may be even less common than expected. More details about this example are in appendix \ref{app:Chain}.
\begin{figure}[H]
\begin{center}
\resizebox{0.5\textwidth}{!}{
\begin{tikzpicture}[->, >=stealth', auto, semithick, node distance=2cm]
\tikzstyle{every state}=[fill=red,draw=white,thick,text=black,scale=0.8]
\node[state]    (A)[        minimum size=1.25cm]{$s_1$};
\node[state]    (B)[      right of=A,minimum size=1.25cm]   {$s_2$};
\node[state]    (C)[      right of=B,minimum size=1.25cm]   {$s_3$};
\node[state]    (S)[      right of=C,minimum size=1.25cm,draw=none,fill=none]   {$\cdots$};
\node[state]    (D)[      right of=S,minimum size=1.25cm]   {$s_{N-2}$};
\node[state]    (E)[      right of=D,minimum size=1.25cm]   {$s_{N-1}$};
\node[state]    (F)[      right of=E,minimum size=1.25cm]   {$s_{N}$};
\path
(A) edge[loop left]     node{$r = \frac{1}{4N}$}      (A)
(A) edge[bend left,dashed]     node{$1-\frac{1}{N}$}     (B)
(A) edge[loop above,above,dashed]     node{$\frac{1}{N}$}     (A)
(B) edge[bend left]    node{$1$}     (A)
(B) edge[bend left,dashed]     node{$1-\frac{1}{N}$}     (C)
(B) edge[loop above,above,dashed]     node{$\frac{1}{N}$}     (B)
(C) edge[bend left]    node{$1$}     (B)
(C) edge[bend left,dashed]     node{$1-\frac{1}{N}$}     (S)
(S) edge[bend left]    node{$1$}     (C)
(S) edge[bend left,dashed]     node{$1-\frac{1}{N}$}     (D)
(D) edge[bend left]    node{$1$}     (S)
(C) edge[loop above,above,dashed]     node{$\frac{1}{N}$}     (C)
(D) edge[bend left,dashed]     node{$1-\frac{1}{N}$}     (E)
(D) edge[loop above,above,dashed]     node{$\frac{1}{N}$}     (D)
(E) edge[bend left]    node{$1$}     (D)
(E) edge[bend left,dashed]    node{$1-\frac{1}{N}$}     (F)
(D) edge[loop above,above,dashed]     node{$\frac{1}{N}$}     (D)
(F) edge[bend left]    node{$1$}     (E)
(F) edge[loop right]     node{$r = 1$}      (F)
(E) edge[loop above,above,dashed]     node{$\frac{1}{N}$}     (E)
(F) edge[loop above,above,dashed]     node{$\frac{1}{N}$}     (F);
\end{tikzpicture}
}
\vspace{-.2cm}
\caption{Classical hard-to-learn MDP}
\label{fig:HardMDP}
\end{center}
\end{figure}
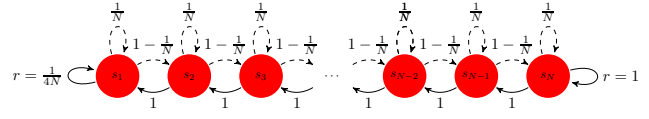
\subsection{Stochasticity in the system dynamics}
In this section we consider two important opposite classes of problems: deterministic MDPs and MDPs that are highly stochastic in that the successor state is sampled from a fixed distribution. These bounds are also a direct consequence of Theorem \ref{thm:MainResultMainText} and can be deduced from Corollary \ref{cor:BoundedRange}. 
 
\paragraph{Deterministic domains}
\label{sec:DeterministicDomains}
Many problems of practical interest, for example in robotics, have low stochasticity, and this immediately yields low value for $\ComplexityQ$. As a limit case, we consider domains with deterministic rewards and dynamics models. 
An agent designed to learn deterministic domains only needs to experience every transition \emph{once} to reconstruct the model, which can take up to $O \left(SA\right) $ episodes with a regret at most $O(SAH)$\cite{WR13}.

Note in deterministic domains $\ComplexityQ{} = 0$.
Therefore if \Alg{} is run on \emph{any} deterministic MDP then the regret expression exhibits a $\log(T)$ dependence. This is a substantial improvement over prior RL regret bounds for problem-independent settings all have at least a $\sqrt{T}$ dependence. Further, a refined analysis (Appendix Section \ref{sec:DeterministicDomainAnalysis}) shows \Alg{} is close to the lower bound except for a factor in the horizon and logarithmic terms: 
\DeterministicDomainRegret{\label{prop:DeterministicDomainRegret}} 

\paragraph{Highly mixing domains}
Recently, \cite{Zanette18a} show that it is possible to design an algorithm that can switch between the MDP and the contextual bandit framework while retaining near-optimal performance in both without being informed of the setting. They consider mapping contextual bandit to an MDP whose transitions to different states (or contexts) are sampled from a fixed underlying distribution over which the agent has no control. 

The Bandit-MDP considered in \citet{Zanette18a} is an environment with high stochasticity (the MDP is highly mixing since every state can be reached with some probability in one step). Since the transition function is unaffected by the agent, an easy computation yields $\rng \Vstar_t \leq 1$, as replicated in Appendix \ref{sec:AppendixContextualBandits}. A regret guarantee in the leading order term of order $\tilde O(\sqrt{SAT})$ for \Alg{} which matches the established lower bound for tabular contextual bandits \cite{BC12} follows from corollary \ref{cor:BoundedRange}. 
This is useful since in many practical applications it is unclear in advance if the domain is best modeled as a bandit or a sequential RL problem. 
Our results improve over \cite{Zanette18a} since \Alg{} has better worst-case guarantees by a factor of $\sqrt{H}$. Our approach is also feasible with next-state distributions that have zero or near zero mass over some of the next states: in contrast to prior work, the inverse minimum visitation probability does not show up in our analysis.

\section{Related Literature}
\label{sec:rellit}

In infinite horizon RL, prior empirical evaluation of $\ComplexityQ{}$ in \cite{Maillard14}  
has shown encouraging performance in a number
of common benchmarks that $\ComplexityQ$  has small value and
its size relates to the hardness of solving the RL task. The
theoretical results provides a regret bound whose leading 
order term is $\tilde{O} \( \frac{1}{p_0} DS \sqrt{\ComplexityQ A T} \) $ (where $p_0$ is the minimum (non-zero) transition probability),  and generally does not improve over worst case analysis for the infinite horizon setting. Our algorithm operates in an easier setting (finite horizon) where it can improve over the worst case, but it is an open question whether an improvement is possible in infinite horizon.

Our connection with  \cite{Bartlett09,Fruit18,Zanette18a} has already been described. Here we focus on the remaining literature. We again note that the infinite horizon setting offers a number of important complexities and comparisons to the finite horizon setting (as considered here) cannot be directly made; however, as some of the closest related work lies in the infinite horizon setting, we briefly discuss it here. 
\begin{itemize}[leftmargin=*]
\itemsep0em 
\item Bounds that depend on gap between policies: In the infinite horizon setting, 
\cite{EMM06} has bounds dependent on the minimum gap in the optimal state action values between the best and second best action, and  \textsc{Ucrl2} \cite{Jaksch10} has bounds as function of the gap in the difference in the average reward between the best and second best policies. Such gaps reflect an interesting alternate structure in the problem domain: note that in prior work as these gaps become arbitrarily small, the bound approaches infinity: even in such settings, if the next state variance is small, our bound will stay bounded. An interesting future direction is to consider bounds that consider both forms of structure.
\item Regret bounds with value function approximation: In finite horizon settings, \cite{OV14} uses the Eluder dimension as a measure of the dimensionality of the problem and \cite{JKAL17} proposes the Bellman rank to measure the learning complexity.  Such measures capture a different notion of hardness than ours and do not match the lower bound in tabular settings.
\item Infinite horizon results with additional properties of the transition model: the most closely related work to ours is \cite{TM18} who also develop tighter regret bounds as a function of the next-state variance, but for infinite horizon settings. Exploration in such settings is nontrivial and the authors leverage an important assumption of ergodicity (which has also been considered in \cite{AO06}. Specifically the agent will visit every state regardless of the current policy, and the rate of this mixing appears in the regret bound. An interesting and nontrivial question is whether our results can be extended to this setting without additional assumptions on the mixing structure of the domain.

\end{itemize}

A natural additional question is whether prior algorithms also inherit strong problem dependent rounds. Indeed, 
recent work by \cite{Dann15,Azar17} has also used the variance of the value function at the next state in their analysis, though their final results are expressed as worst case bounds. 
However, the actual bonus terms used in their algorithms are distinct from our bonus terms, perhaps most significantly in that we maintain and leverage point-wise upper and lower bounds on the value function. While it is certainly possible that their algorithms or others already attain  some form of problem dependent performance, they have not been analyzed in a way that yields problem dependent bounds. This is a technical area, and performing such analyses is a non-trivial deviation from a worst-case analysis. For example, the current worst case bounds from \citet{Azar17} yield a regret bound that scales as  $\tilde{O}( \sqrt{HSAT} + \sqrt{H^2 T} + S^2 A H^2)$ and it is a non-trivial extension to analyze how each of these terms might change to reflect problem-dependent quantities. One of our key contributions is an analysis of the rate of convergence of the empirical quantities about properties of the underlying MDP to the real ones in determining the regret bound.

\section{Future Work and Conclusion}
In this paper we have proposed \Alg{}, an algorithm for episodic finite MDPs that matches the best known worst-case regret guarantees while provably obtaining much tighter guarantees if the domain has a small variance of the value function over the next-state distribution $\ComplexityQ{}$ or a small bound in the possible achievable reward. \Alg{} does not need to know these MDP-specific quantities in advance. 
 We show that \ComplexityQ{} is low for a number of important subclasses of MDPs, including: MDPs with sparse rewards, (near) determinisitic MDPs, highly mixing MDPs (such as those closer to bandits) and some classical empirical benchmarks. We also show how our result helps answer a recent open learning theory question about the necessary dependence of regret results on the episode horizon. Possible interesting directions for future work would be to examine problem-dependent bounds in the infinite horizon setting, incorporate a gap-dependent analysis, or see if such ideas could be extended to the continuous state setting.

\section*{Acknowledgments}
The authors are grateful for the high quality feedback of the reviewers, and for the comments of Yonathan Efroni and his colleagues, including Mohammad Ghavamzadeh and Shie Mannor, who helped find a mistake in an earlier draft. This work was partially supported by the Total Innovation Fellowship program, a NSF CAREER award and an ONR Young Investigator Award. 

\bibliographystyle{icml2019}
\bibliography{rl}
\appendix
\onecolumn

\tableofcontents
\textbf{Remark on constants}: throughout the appendix we use numerical constants $c_{i,j,k}$ and $\ll = \log(SAHT/\delta)$, leaving their computation implicit.
\section{Short Proofs Missing from the Main Text}

\subsection{Euler on Chain}
\label{app:Chain}
Chain MDPs are commonly given as examples of challenging exploration domains because simple strategies like $\epsilon$-greedy can take an exponential time to learn. We now discuss an intriguing result for the chain shown in figure \ref{fig:HardMDP} which is nearly identical to a previously introduced one \cite{OV17}. Like in that domain, each episode is  
$N = H = S$ timesteps long.
The optimal policy is to go right 
which yields a reward of 1 for the episode. 
The transition probabilities are assigned in way that scales the optimal value function so that it is of order $1$. 
Since the rewards are deterministic we immediately obtain (up to a constant that does not depend on $N$): 
\begin{equation}
 	\ComplexityQ{} \leq \max_{s,a,t}\Var \(\Qrv(s,a) \mid (s,a)\) \leq \max_{s,a,t}\Var \(\vtrue{t+1} (s^+)\mid (s,a)\) \lesssim \frac{1}{N}\(1-\frac{1}{N}\) \leq \frac{1}{N}
 \end{equation} 
since $\vtrue{t+1} (s^+)$ is essentially dominated by a Bernoulli random variable with success parameter $(1-1/N)$ times an appropriate scaling factor of order one. Therefore \Alg{}'s regret\footnote{This is valid for any class of MDP which shares these properties, implying that the regret for the MDP in figure \ref{fig:HardMDP} can be even smaller} is dominated by a term which is 
$	\tilde O \(\sqrt{\ComplexityQ{} SAT}\)  = \tilde O \(\sqrt{\frac{1}{N} \times  N \times A \times T}\) = \tilde O\( \sqrt{AT}\).$
Notice that the lower order term should be added to the above expression and this is likely to be significant particularly for small $T$. However, we remark that the result above follows directly from Theorem \ref{thm:MainResultMainText} whose proof does not attempt to make the lower order term problem-dependent. Our result is substantially smaller than the typically reported bounds for this case, which are dominated by a $\tilde O(\sqrt{HSAT})$ term. 

There are two main factors that lead to the above simplification for this class of MDPs: $\vtrue{}$ is of order $1$ and not $N = H$ like in hard-to-learn MDPs which yields the lower bounds \cite{Dann15} and also the variance decreases as we let $N$ increase, each of which ``removes'' a factor of $\sqrt{N}$ from the known worst-case bound $\tilde O(\sqrt{HSAT} ) = \tilde O(\sqrt{N^2AT})$ after substituting $N=S=H$. 


To enable \Alg{}'s level of performance on this (and other) MDPs one needs to carefully control both the confidence intervals of the rewards and of the transition probabilities (as \Alg{} does), since Hoeffding-like concentration inequalities for the rewards alone would already induce a $\Theta (\sqrt{SAT}) = \Theta(\sqrt{NT})$ contribution to the regret expression.

This result is intriguing because it suggests that truly pathological MDP classes (which induce $\Omega(\sqrt{HSAT})$ regret) are even more uncommon than previously thought. 

\subsection{\texorpdfstring{\Alg{}}{} on Tabular Contextual Bandits}
\label{sec:AppendixContextualBandits}
Contextual multi-armed bandits are a generalization of the multiarmed bandit problem, see for example \cite{BC12} for a survey. In their simplest possible formulation they entail a discrete set of contexts or states $\{s = 1,2,\dots \}$ and actions $\{a = 1,2,\dots \}$ and the expected reward $r(s,a)$ depends on both the state and action. After playing an action, the agent transitions to the next states according to some fixed distribution $\mu \in \R^{|\mathcal{S}|}$ over which the agent has no control.

In principle, such problem can be recast as an MDP in which the next state is independent of the prior state and action. Consider an $H$-horizon MDP which maps to a tabular contextual bandit problem: the transition probability is identical $p(s'|s,a) = \mu(s')$ for all states and actions, where $\mu$ is a fixed  distribution from which the next states are sampled. In such MDP Define the ``best'' and ``worst'' context at time $t$, respectively: $\overline s_t \stackrel{def}{=} \argmax V^*_t(s)$ and $\underline s_t \stackrel{def}{=} \argmin V^*_t(s)$ and recall that the transition dynamics $p(\cdot \mid s,a) = \mu$ depends nor on the action $a$ nor on the current state $s$. We have that: 
  \begin{equation}
  \label{main:VStarBackup}
    \begin{cases}
      V^*_t(\overline s_t) = \max _a \( r( \overline s_t,a) + \mu^\top V^*_{t+1} \) \\
      V^*_t(\underline s_t) = \max _a \( r( \underline s_t,a) + \mu^\top V^*_{t+1} \)
    \end{cases}
  \end{equation}
  which immediately yields a bound on the range of the value function of the successor states:
  \begin{align}
  \label{main:RngVStar}
\rng \vtrue{t+1} = V^*_t(\overline s_t) - V^*_t(\underline s_t)  = 
 \max_a r( \overline s_t,a) -  \max _a r( \underline s_t,a) \leq 1
  \end{align}
 
where the last inequality follows from the fact that rewards are bounded $r(\cdot,\cdot) \in [0,1]$. 
Therefore $\Phi \leq 1$ and corollary \ref{cor:BoundedRange} yields a high probability regret upper bound of order
\begin{equation}
\tilde O \( \sqrt{SAT} + \LowerOrderTerm{} \)
\end{equation}
for \Alg{}. This means that \Alg{} automatically attains the lower bound in the dominant term for tabular contextual bandits \cite{BC12} if deployed in such setting. We did not try to improve the lower order terms for this specific setting, which may give an improved bound.

\section{Average Per-Episode Sample Complexity for the Setting of \cite{jiang2018open}}
\label{sec:COLTconjecture}
Jiang and Agarwal~\yrcite{jiang2018open} also ask about the dependence of 
a lower bound on the sample complexity on the planning horizon. While 
our work focuses on a regret analysis, and does not provide  
PAC sample complexity results, we can use our regret results to bound 
with high probability the number of episodes needed to ensure
that the \textit{average} per-episode regret is less than $\epsilon$
 To do so, we obtain the average per-episode loss of \Alg{} by dividing by the number of episodes $K$:
 \small
 \begin{align}
     \tilde O\( \left(\frac{\sqrt{SA}}{\sqrt{K}} + \frac{\LowerOrderTerm)}{K} \) \).
 \end{align}
 \normalsize
 From here we can seek for the smallest $K$ such that the average error is smaller than $\epsilon$, obtaining:
 \small
 \begin{equation}
 \label{main:sparsesamplecomplexity}
      \tilde O \( \left(\frac{SA}{\epsilon^2} + \frac{\LowerOrderTerm}{\epsilon}\) \right)
 \end{equation}
 \normalsize
 episodes before the \emph{average per-episode} error is smaller than $\epsilon$. 
 For small $\epsilon << SA$, the first term dominates, which is again independent 
 of a polynomial dependence on $H$.  Of course, in order to formally obtain a PAC result (a worst case upper bound on the number of $\epsilon$-suboptimal episodes) the algorithm would need to be modified to be PAC. In particular, the exploration bonus for a given $(s,a)$ pair should be designed so it does not increase with time $T$ if $(s,a)$ is not visited. In practice this means replacing the $\log(T)$ factor of the exploration bonuses with something like $\log(n)$ where $n$ is the visit count to a specific state-action pair and adjusting the numerical constant to make sure the exploration bonuses / confidence intervals are still valid with high probability. Please see \cite{Dann17} for a detailed explanation of how to proceed with the algorithm design and analysis in this case\footnote{Indeed, the algorithm described in \cite{Dann17} is structurally similar to ours}.

\section{Appendix Overview and Proof Preview}
\label{sec:AppendixOverview}
We start by giving an overview of the proof that leads to the main result. This setting is more general than the one presented in the main text. In particular we 1) define a class of concentration inequalities for the transition dynamics 2) show that \Alg{} achieves strong problem dependent regret bounds with any concentration inequality satisfying these assumptions.
In particular, the main result presented in the main text follows as a corollary of the potentially more general analysis presented. We now give a preview of the proof of the main result, assuming rewards are known, though we later relax this assumption. We start by recalling \Alg{} with yet-to-specify confidence intervals on the transition dynamics.

\subsection{Algorithm}
The algorithm is presented in figure \ref{alg:Algorithm}.
\begin{algorithm*}[!htb]
   \caption{\Alg{} for Stationary Episodic MDPs}
   \label{alg:Algorithm}
\begin{algorithmic}[1]
   \STATE \textbf{Input}: confidence interval $\br{\cdot}{\cdot}$ $\phi(\cdot,\cdot)$ with failure probability $\delta$, constants $\Bp,\Bv$.
   \STATE $n(s,a) = r_{sum}(s,a) = p_{sum}(s',s,a) = 0, \;\; \forall s,a\in \mathcal S \times \mathcal A; \quad \overline V_{H+1}(s) = 0, \;\; \forall s \in \mathcal S$
   \FOR{$k=1,2,\dots$}
      \FOR{$t=H,H-1,\dots,1$}
     	 \FOR{$s \in \mathcal S$}
	 \FOR{$a \in \mathcal A$}
	 \STATE $\hat p = \frac{p_{sum}(\cdot,s,a)}{n(s,a)}$
   \STATE $\bbonus = \phi(\hat p(s,a),\overline V_{t+1}) + \frac{1}{\sqrt{n(s,a)}} \( \frac{4J + \BpPlus}{\sqrt{n_k(s,a)}} + \Bv \| \overline V_{t+1} - \underline V_{t+1} \|_{2,\hat p} \)$ \\
   \STATE $Q(a) = \min\{ H-t, \rhat{s}{\piok(s,t)}+\br{s}{a} + \hat p^\top \overline V_{t+1} + \bbonus \}$
  \ENDFOR
   \STATE $\piok(s,t) = \argmax_a Q(a)$
   \STATE $\overline V_t(s) = Q(\piok(s,t))$ 
   \STATE $\bbonus = \phi(\hat p(s,\piok(s,t)),\underline V_{t+1}) + \frac{1}{\sqrt{n(s,\piok(s,t))}} \( \frac{4J + \BpPlus}{\sqrt{n_k(s,\piok(s,t))}} + \Bv \| \overline V_{t+1} - \underline V_{t+1} \|_{2,\hat p} \)$ \\
   \STATE $\underline V_t(s) = \max\{0, \rhat{s}{\piok(s,t)} - \br{s}{\piok(s,t)} + \hat p^\top \underline V_{t+1} - \bbonus\}$
   \ENDFOR
   \ENDFOR
   \STATE
   $s_1 \sim p_0$
   \FOR{$t=1,\dots H$}
   \STATE $a_t = \piok(s_t,t);\quad r_t \sim p_R(s_t,a_t);\quad s_{t+1}\sim p_P(s_t,a_t)$
   \STATE $n(s_t,a_t)++;\quad p_{sum}(s_{t+1},s_t,a_t)++$
   \ENDFOR
   \ENDFOR
\end{algorithmic}
\end{algorithm*}

\subsection{Optimism}
The goal of this section is to show that \Alg{} guarantees optimism with high probability. As is well known, optimism is the ``driver'' of exploration as it allows to overestimate the regret by a concentration term with high probability.

Let's consider the planning process at the beginning of episode $k$. This is detailed in lines $4$ to $16$ of algorithm \ref{main:AlgorithmMainText}. In order to guarantee finding a pointwise optimistic value function $\vo{t} \geq \vtrue{t}$ a bonus is added to account for ``bad luck'' in the system dynamics experienced up to episode $k$. If the agent knew the value of the confidence interval for the system dynamics $\phi(\potruea{s}{t},\vtrue{t+1})$ then optimism could be inductively guaranteed (i.e., assuming that, by induction, $\vo{t+1} \geq \vtrue{t+1}$ holds pointwise) if said confidence interval holds:
\begin{align}
 \vo{t} & = \rotruea{t}{s} + \pohata{s}{t}^{\top}\vo{t+1} + \phi(\potruea{s}{t},\vtrue{t+1}) \\
& \geq \rotruea{t}{s} + \pohata{s}{t}^{\top}\vtrue{t+1} + \phi(\potruea{s}{t},\vtrue{t+1}) \geq \rotruea{t}{s} + \potruea{s}{t}^{\top}\vtrue{t+1} \geq \vtrue{t+1}
\end{align}
If the above conclusion is true for every action  then it is true for the maximizer as well.
Unfortunately the agent knows nor the real transition dynamics nor the optimal value function to evaluate $\phi$. Instead, it only has access to the estimated transition dynamics $\pohata{s}{t}$ and to an overestimate of the value function $\vo{t+1}$. Unfortunately the confidence interval $\phi$ evaluated with such quantities $\phi(\pohata{s}{t},\vo{t+1})$ is not guaranteed to overestimate $\phi(\potruea{s}{t},\vtrue{t+1})$ and optimism may not be guaranteed. To remedy this the agent can try to estimate the difference 
\begin{align}
\label{main:DeltaPhiMainText}
& | \phi(\pohata{s}{t},\vo{t+1}) - \phi(\potruea{s}{t},\vtrue{t+1})|
\end{align}
and add a correction term to account for that difference. A similar problem is faced in \cite{Azar17} where the authors  propose an optimistic bonus which guarantees optimism when using the empirical Bernstein Inequality. By distinction, our way of constructing the bonus works with \emph{any} concentration inequality satisfying assumption \ref{ass:ConfidenceIntervals} and \ref{ass:FiniteTimeBonusBound}, as described in the Appendix Section~\ref{sec:assumption}.
Precisely, optimism is dealt with in Appendix section \ref{app:Optimism}; in lemma \ref{lem:DeltaPhi} we show how to bound equation \ref{main:DeltaPhiMainText} obtaining the result below:
\begin{align}
\label{main:DeltaPhi}
& | \phi(\pohata{s}{t},V) - \phi(\potruea{s}{t},\vtrue{t+1})|  \leq \frac{ \Bv \|V - \vtrue{t+1} \|_{2,\hat p}}{\sqrt{\nsaa{s}{t}{k}}} + \frac{ \BpPlus + 4J}{\nsaa{s}{t}{k}}.
\end{align}
This is essentially a consequence of the definition of admissible bonus, i.e., Definition  \ref{def:AdmissiblePhi} (Appendix Section~\ref{sec:assumption}).
Unfortunately the upper bound in equation \ref{main:DeltaPhi} depends on $\vtrue{t+1}$ which is not known, so the problem is still unsolved. However, as we show in lemma \ref{lem:OptimismOverestimate} in the appendix it suffices to (pointwise) overestimate $\vo{t+1}-\vtrue{t+1}$. To this aim, the algorithm maintains an underestimate of $\vtrue{t+1}$ which we call $\vp{t+1}$. Equipped with this underestimate, we define the \emph{Exploration Bonus} in definition \ref{def:TransitionBonus} (Appendix Section \ref{sec:transbonus}), which we report below:
\begin{equation}
\label{main:Bonus}
\bbonus(\pohata{s}{t},\vo{t+1},\vp{t+1}) \stackrel{def}{=} \phi(\pohata{s}{t},\vo{t+1}) + \ExtraBonus.
\end{equation}
Importantly, Equation \ref{main:Bonus} only uses quantities that are known to the agent: the functional form of $\phi(\cdot,\cdot)$, the maximum likelihood estimate $\pohata{s}{t}$, the overestimate and underestimate, \vo{t+1} and \vp{t+1} respectively, of the optimal value function at the next timestep, the visit count $\nsaa{s}{t}{k}$ and the constants $J,\Bp,\Bv$. Notice that the norm $\| \cdot \|_{2,\hat p}$ is computed using $\pohata{s}{t}$ which is known to the agent as opposed to $\potruea{s}{t}$. If $\vo{t+1}$ and $\vp{t+1}$ bracket $\vtrue{t+1}$ then we have that the bonus of equation  \ref{main:Bonus} overestimates the admissible confidence interval $\phi(\potruea{s}{t},\vtrue{t+1})$ that we could construct if we knew $\potruea{s}{t}$ and $\vtrue{t+1}$, that is:
\begin{equation}
	\label{main:BonusIsOptimistic}
	\bbonus(\pohata{s}{t},\vo{t+1},\vp{t+1}) \geq \phi(\potruea{s}{t},\vtrue{t+1})
\end{equation}
This is proved in Proposition \ref{prop:TransitionBonusIsOptimistic} in the appendix. At this point we have all the elements to show optimism. In fact, we need a little more effort than simply optimism because the algorithm has to maintain a valid bracket of the optimal value function:
\begin{equation}
\label{main:AlgorithmBracketsVstar}
\vp{t} \leq \vtrue{t}  \leq \vo{t}  \quad \text{(pointwise)}
\end{equation} 
This is done in Proposition \ref{prop:AlgorithmBracketsVstar} in the appendix and it simply relies on an induction argument.

At this point we have guaranteed optimism but we relied on the construction of confidence intervals for the value function, to which we turn our attention next.

\subsection{Confidence Interval for the Value Function}
During its execution, \Alg{} implicitly construct confidence interval for the value function with the property defined by equation \ref{main:AlgorithmBracketsVstar}.
Precisely in Proposition \ref{prop:DeltaOptimism} we relate the distance $\vo{t}(s) - \vp{t}(s)$ to the number of visits to the $(s,a)$ pairs in the trajectories originated upon following policy \piok{} \emph{on the true MDP} with high probability. In other words, assuming  that confidence intervals hold we provide a way to relate the accuracy of the agent's estimate of the value function to a concentration term that depends on the number of visits to the $(s,a)$ pairs that the agent is expected to encounter by following that policy, obtaining up to a constant:
\begin{align}
\label{main:DeltaOptimism}
\vo{t}(s) - \vp{t}(s) \lesssim & \sum_{\tau = t}^H \E \( \min \Big\{ \frac{F+D}{\sqrt{\nsaa{s_\tau}{\tau}{k}}},H \Big\} \Bigm| s,\piok{}{} \)
\end{align}
for some $F,D$ defined in Proposition \ref{prop:DeltaOptimism}.

This serves as an estimate of the confidence interval for the optimal value function itself. The importance of the lemma lies in connecting a property of the algorithm (the difference between the ``optimistic'' and the ``pessimistic'' value function) to the uncertainty in the various states encountered in the MDP (upon following \piok{}) weighted by the \emph{true} visitation probability.

\subsection{Regret Bound}
We are finally ready to discuss the regret bounds that leads to the main result of Theorem \ref{thm:MainResult} which is proved in Appendix Section \ref{app:RegretAnalysis} along with the related lemmata. We recall the following regret decomposition which is standard in recent analysis \cite{Dann17,OV16}:
\begin{align}
& \text{Regret}(K) \stackrel{def}{=} \sum_k^K V_1^*(s_{1k}) -  V_1^{\piok}(s_{1k}) \\
&  \leq \sum_{k=1}^K \sum_{t \in [H]} \sum_{(s,a)} w_{tk}(s,a) \Biggm( \underbrace{\(\tilde r_k(s,a) - r(s,a) \)}_{\text{Reward Estimation and Optimism}} + \underbrace{\(\tilde p_k(s,a) - \hat p(s,a) \)^\top \overline V_{t+1}^{\piok}}_{\text{Transition Dynamics Optimism}} \\ +  & \underbrace{\(\hat p_k(s,a) - p(s,a) \)^\top V_{t+1}^{*}}_{\text{Transition Dynamics Estimation}} + \underbrace{\(\hat p_k(s,a) - p(s,a) \)^\top \( \vo{t+1} - \vtrue{t+1} \)}_{\text{Lower Order Term}}  \Biggm) \\
\numberthis{}
\end{align}	
In later sections we bound each term individually; here we just touch on the order of magnitude of the leading order term which is the ``Transition Dynamics Optimism." We begin by using the bonus added during the planning step (Definition \ref{def:TransitionBonus}):	
\begin{align}
& \sumk \sumt{1}\sumsLk  \visita{s}{t}{k} \bonus{s} \stackrel{def}{\leq} \sumk \sumt{1}\sumsLk  \visita{s}{t}{k} \bbonus(\pohata{s}{t},\vo{t+1},\vp{t+1})
\end{align}
where the inequality follows from the fact that the we ''cap'' the backup term $\poa{s}{t}\vo{t+1} \leq H$ (see the $\min$ in the main algorithm). By definition of the bonus (definition \ref{def:TransitionBonus}) we get:
\begin{align} 
& c_{200} \sumk \sumt{1}\sumsLk  \visita{s}{t}{k} \( \phi(\pohata{s}{t},\vo{t+1}) + \frac{\Bv \devihat{t+1}}{\sqrt{\nsaa{s}{t}{k}}} + \frac{\BpPlus+J}{\nsaa{s}{t}{k}}\),
\end{align}
for some constant $c_{200}$. Equation \ref{main:DeltaPhi} ensures that $\phi(\pohata{s}{t},\vo{t+1})$ and $\phi(\potruea{s}{t},\vtrue{t+1})$ are close, leading to essentially the same upper bound up to a constant:
\begin{align}
& \stackrel{}{\lesssim} \sumk \sumt{1}\sumsLk  \visita{s}{t}{k} \( \phi(\potruea{s}{t},\vtrue{t+1}) + \frac{\Bv \devihat{t+1}}{\sqrt{\nsaa{s}{t}{k}}} + \frac{\BpPlus+J}{\nsaa{s}{t}{k}}\)
\end{align}

Using the functional form for $\phi$ we obtain the upper bound below $(c)$:
\begin{align}
& \stackrel{c}{\lesssim} \sumk \sumt{1}\sumsLk  \visita{s}{t}{k} \( \underbrace{\frac{\dg{s}{t+1}}{\sqrt{\nsaa{s}{t}{k}}}}_{\text{Leading Order Term}} + \underbrace{\frac{J+\BpPlus}{\nsaa{s}{t}{k}} + \frac{ \Bv \devihat{t+1} }{\sqrt{\nsaa{s}{t}{k}}}}_{\text{Lower Order Term}} \)
\end{align}
An induction argument coupled with equation \ref{main:DeltaOptimism}  shows that $\|\vo{t+1} - \vp{t+1} \|_{2,\hat p }$ shrinks at a rate $\frac{1}{\sqrt{\nsaa{s}{t}{k}}}$. Thus, the ``Lower Order Term'' shrinks at a rate $\frac{1}{n}$, and ultimately gives a regret contribution independent on $T$ except for a log factor. The leading order term shrinks at a rate $\frac{1}{\sqrt{n}}$, giving the $\tilde O\( \sqrt{\ComplexityQ{} SAT} \)$ contribution which is the leading order term. Further, since $g(\cdot,\cdot)$ here depends on $\vtrue{t+1}$, for $t \in [H]$, this is a problem-dependent  (and concentration-inequality-dependent) bound, as we wanted.

In the full proof as follows, we also include uncertainty over the reward function.
 
\section{Failure Events and their Probabilities}
We now discuss the failure events and the assumption for the concentration inequalities that lead to the definition of \Alg{}. We then verify that Bernstein Inequality satisfies these assumptions, leading to a practical algorithm.

\subsection{Empirical Bernstein Inequality for the Rewards}
We recall the Empirical Bernstein Inequality\footnote{Note the change of $n_k(s,a)-1$ to $n_k(s,a)$ compared to \cite{MP09} in the lower order term, and the doubling of the constant for that term since $\frac{1}{n_k(s,a)-1} \leq \frac{2}{n_k(s,a)}$ for $n_k(s,a) \geq 2$.} \cite{MP09} for estimating the rewards:
\begin{definition}[Reward Empirical Bernstein]
\label{lem:RewardEmpiricalBernstein}
Let $R(s,a) \in [0,1]$ be the reward random variable in state $s$ upon taking action $a$ and let $\widehat \Var R(s,a)$ be its sample variance. The following holds true with probability at least $1-\delta'$:
\begin{equation}
	\label{eqn:EmpiricalBernsteinRewards}
\Big| \rohata{s}{t} - \rotruea{s}{s} \Big| \leq \EmpiricalBernsteinRewards{}.
\end{equation}
\end{definition}
This concentrates fast to the actual reward variance \info{Check Constants}:
\begin{lemma}[Delta $\phi_r$]
\label{lem:DeltaPhiReward}
With probability at least $1-\delta'$ it holds that: 
\begin{align}
\label{eqn:DeltaPhiReward}
|\sqrt{\widehat{\Var} R(s,a)} - \sqrt{\Var R(s,a)}| \leq \sqrt{\frac{4 \ln (2SAT/\delta')}{\nsaa{s}{t}{k}}}.
\end{align}
jointly for all states, actions, and timesteps.
\end{lemma}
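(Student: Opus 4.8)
The plan is to reduce the claim to the concentration of the empirical standard deviation of a bounded random variable, for which I can invoke Theorem~$10$ of~\cite{MP09} (the same result that yields equation~\ref{main:MauerAndPontil}), and then lift the pairwise statement to a joint one by a union bound. First I would fix a single pair $(s,a)$ together with a deterministic sample size $n$, and note that the $n$ reward realizations gathered at $(s,a)$ are i.i.d.\ copies of the bounded random variable $R(s,a)\in[0,1]$. Applied to these samples with failure probability $\eta$, Theorem~$10$ of~\cite{MP09} gives the two-sided estimate
\begin{equation*}
\Big|\sqrt{\widehat\Var R(s,a)} - \sqrt{\Var R(s,a)}\Big| \leq \sqrt{\frac{2\ln(2/\eta)}{n-1}},
\end{equation*}
where the factor of $2$ inside the logarithm accounts for controlling both tails.

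Next I would carry out the union bound. Because the total number of visits to any state-action pair is at most $T$, the realized count $\nsaa{s}{t}{k}$ always lies in $\{1,\dots,T\}$, so it is enough to enforce the display above simultaneously over all $(s,a)\in\mathcal S\times\mathcal A$ and all candidate sample sizes $n\in\{1,\dots,T\}$, which is $SAT$ events. Assigning each the failure probability $\eta = \delta'/(SAT)$ and taking a union bound caps the total failure probability at $\delta'$ while turning $\ln(2/\eta)$ into $\ln(2SAT/\delta')$; hence on the complementary event the bound $\sqrt{2\ln(2SAT/\delta')/(\nsaa{s}{t}{k}-1)}$ holds for the actual realized count, jointly over all pairs and timesteps. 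Finally, using $\frac{1}{n-1}\leq\frac{2}{n}$ for $n\geq 2$ replaces the $n-1$ denominator by $n$ at the cost of doubling the constant, producing the stated $\sqrt{4\ln(2SAT/\delta')/\nsaa{s}{t}{k}}$ and completing the argument.

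The union bound and the $\frac{1}{n-1}\leq\frac{2}{n}$ conversion are routine; the only step deserving care is the union over the \emph{random} visit count. The rewards at $(s,a)$ are drawn independently on each visit, so for any \emph{fixed} $n$ the first $n$ of them satisfy the i.i.d.\ hypothesis of~\cite{MP09}, and it is precisely to make the estimate valid for the data-dependent count realized by \Alg{} that I take the union over every possible value $n\in\{1,\dots,T\}$. The degenerate case $n=1$, where the sample variance vanishes, is excluded by $n\geq 2$ and does not affect the conclusion.
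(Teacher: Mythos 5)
Your proposal is correct and follows essentially the same route as the paper's own (very terse) proof: invoke Theorem~10 of \cite{MP09} for the concentration of the empirical standard deviation, take a union bound over the $SAT$ combinations of state–action pairs and possible sample counts to obtain the $\ln(2SAT/\delta')$ factor, and use $\frac{1}{n-1}\leq\frac{2}{n}$ for $n\geq 2$ to trade the $n-1$ denominator for $n$ at the cost of doubling the constant. Your explicit handling of the union over the random visit count and the degenerate $n=1$ case is a welcome elaboration of details the paper leaves implicit.
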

\begin{proof}
Analogous to Theorem 10 in \cite{MP09} with a union bound argument over the states, the actions and the the timesteps. Note the change of $n_k(s,a)-1$ to $n_k(s,a)$ compared to \cite{MP09} and the doubling of the constant since $\frac{1}{n_k(s,a)-1} \leq \frac{2}{n_k(s,a)}$ for $n_k(s,a) \geq 2$.	
\end{proof}

\subsection{Admissible Confidence Intervals on the Transition Dynamics}
\label{sec:assumption}
In this section we define a class of confidence intervals that are admissible for \Alg{} for which our analysis holds.
The aim is to ensure that $| \(\hat p_k(s,a) - p(s,a)\)^\top\vtrue{t+1} |$ is bounded with high probability throughout the execution of the algorithm. Said concentration inequality should be tight so that successor states with low visitation probability have low impact. The former requirement is formalized in equation \ref{eqn:AdmissiblePhiHolds} and the latter in equation \ref{eqn:gVbound}, which we report below.

\begin{assumption}[Confidence Intervals]
\label{ass:ConfidenceIntervals}
With probability at least $1-\delta'$ it holds that:
\begin{equation}
| \(\hat p_k(s,a) - p(s,a)\)^\top\vtrue{t+1} | \leq \phi(p(s,a),\vtrue{t+1})
\label{eqn:AdmissiblePhiHolds}
\end{equation}
jointly for all timesteps $t$, episodes $k$, states $s$ and actions $a$. 
We assume that $\phi(p,V)$ takes the following functional form:
\begin{align}
\label{eqn:AdmissiblePhiFunctionalForm}
& \phi(p,V) = \frac{g(p,V)}{\sqrt{n_k(s,a)}} + \frac{j(p,V)}{n_k(s,a)} 
\end{align}
where $j(p,v) \leq J \in \R$. In particular we assume the following constraint on the functional form of $g(\cdot,\cdot)$:
\begin{align}
\label{eqn:gVbound}
| g(p,V_1) - g(p,  V_2) | & \leq \Bv \| V_1 - V_2 \|_{2,p}
\end{align}
and if the value function is uniform then:
\begin{equation}
\label{ass:gzero}
g(p,\alpha \1) = 0, \quad \forall \alpha \in \R.
\end{equation}
\end{assumption}
Equation \ref{eqn:AdmissiblePhiFunctionalForm} refers to the functional form of $\phi(\cdot,\cdot)$ which is the concentration inequality on the transition dynamics. Equation \ref{eqn:AdmissiblePhiFunctionalForm}  identifies two contributions: a leading order term which scales with $1/\sqrt{n}$ and a lower order term that scales with $1/n$.
Equation \ref{eqn:gVbound} plays a crucial role. It posits a requirement on the functional form of the leading order term of the concentration inequality when the coefficients $\vtrue{t+1}(s')$ are changed. Precisely, it quantifies how the concentration inequality changes if we change $\vtrue{t+1}$, formalizing the intuition that if $p(s'\mid s,a)$ is small then changing $\vtrue{t+1}(s')$ should have little impact on the bound given by the concentration inequality. It also implies that $g(p,V)$ depends on $V$ only through the entries that correspond to the support of $p$, that is, it depends on $V(s)$ if $p(s) \not = 0$. Practically speaking, if a successor $s'$ cannot be visited from $s$ then the value function at $s'$ does not directly impact $s$, as one would hope.

The next assumption deals with the rate of convergence of the leading order term seen as a function of $\hat p$. Under mild assumptions, as $\hat p$ converges to $p$, a function of $\hat p$  converges as well. The assumption below is the corresponding non-asymptotic requirement:
 
\begin{assumption}[Finite Time Bonus Bound]
\label{ass:FiniteTimeBonusBound}
With probability at least $1-\delta'$ it holds that:
\begin{align}
\label{eqn:gPbound}
| g(\pohata{s}{t}, \vtrue{t+1}) - g(\potruea{s}{t}, \vtrue{t+1})  | & \leq \frac{\BpPlus}{\sqrt{\nsaa{s}{t}{k}}}
\end{align}
jointly for all episodes $k$, timesteps $t$, states $s$, actions $a$ and some constant \Bp{} that does not depend on $\nsaa{s}{t}{k}$.
\end{assumption}

In both assumption \ref{ass:ConfidenceIntervals} and \ref{ass:FiniteTimeBonusBound} the constants $\Bv$ and $\BpPlus$ can depend on the input parameters (e.g., $S,A,H,T,\frac{1}{\delta}$ etc...).

We pose the following definition:
\begin{definition}[Admissible $\phi$]
\label{def:AdmissiblePhi}
	If $\phi$ satisfies assumption \ref{ass:ConfidenceIntervals} and \ref{ass:FiniteTimeBonusBound} then we say that $\phi$ is admissible for \Alg{}.
\end{definition}

\begin{corollary}[Max $\phi$]
\label{cor:MaxPhi}
Let the function $g(\cdot,\cdot)$ be defined as in equation \ref{eqn:AdmissiblePhiFunctionalForm}. Combining equations \ref{eqn:gVbound} and \ref{ass:gzero} (where $V_2 = \vec 0$) and recalling monotonicity of norms of random variables one immediately obtains:
\begin{equation}
\label{eqn:MaxPhi}
|g(p,V)| \leq \Bv \| V_1 \|_{2,p} \leq \Bv \| V_1 \|_\infty \leq \Bv H.
\end{equation}
\end{corollary}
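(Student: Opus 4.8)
The plan is to derive all three inequalities from a single application of the Lipschitz-type estimate \ref{eqn:gVbound} followed by elementary norm comparisons. First I would instantiate \ref{eqn:gVbound} with the choice $V_1 = V$ and $V_2 = \vec 0$. Since $\vec 0 = 0 \cdot \1$ is a uniform value function, assumption \ref{ass:gzero} (taken with $\alpha = 0$) gives $g(p,\vec 0) = 0$, and therefore
\begin{equation}
|g(p,V)| = | g(p,V) - g(p,\vec 0) | \leq \Bv \| V - \vec 0 \|_{2,p} = \Bv \| V \|_{2,p},
\end{equation}
which is exactly the first inequality of the claim.

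For the second inequality I would invoke monotonicity of the $\| \cdot \|_{2,p}$ norm against the sup norm. Because $p$ is a probability mass function, $\sum_{s'} p(s') = 1$, so that
\begin{equation}
\| V \|_{2,p}^2 = \sum_{s'} p(s') V^2(s') \leq \| V \|_\infty^2 \sum_{s'} p(s') = \| V \|_\infty^2,
\end{equation}
whence $\| V \|_{2,p} \leq \| V \|_\infty$. The final inequality then follows from the standard range bound on the value function: since per-step rewards lie in $[0,1]$ and at most $H$ steps remain, we have $0 \leq V(s) \leq H$ pointwise, hence $\| V \|_\infty \leq H$. Chaining the three estimates yields $|g(p,V)| \leq \Bv \| V \|_{2,p} \leq \Bv \| V \|_\infty \leq \Bv H$, as desired.

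There is essentially no obstacle here: the statement is a direct corollary of the admissibility assumptions, and the work is entirely bookkeeping. The only subtlety worth flagging is that the argument silently evaluates $g$ at the zero function, which is legitimate precisely because $\vec 0$ is the $\alpha = 0$ instance of the uniform value functions $\alpha \1$ handled by \ref{ass:gzero}; without that special case one could bound only differences $g(p,V_1) - g(p,V_2)$ rather than $g(p,V)$ in isolation.
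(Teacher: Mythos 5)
Your proof is correct and follows exactly the paper's argument: instantiating equation \ref{eqn:gVbound} with $V_2 = \vec 0$, using \ref{ass:gzero} (with $\alpha = 0$) to kill $g(p,\vec 0)$, and then chaining $\| V \|_{2,p} \leq \| V \|_\infty \leq H$ via monotonicity of norms and the standard $[0,H]$ bound on value functions. You also correctly resolve the paper's minor notational slip ($V_1$ versus $V$ in the displayed inequality), so there is nothing further to add.
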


\subsection{Bernstein's Inequality}
We now show that Bernstein Inequality is admissible for \Alg{}.
\BernsteinIsAdmissible{\label{prop:BernsteinIsAdmissible}}
\begin{proof}
Bernstein's inequality guarantees that with probability at least $1-{\delta'}$ we have that:
\begin{equation}
\label{eqn:BernsteinInequality}
| \(\hat p_k(s,a) - p(s,a)\)^\top\vtrue{t+1} | \leq \Bernstein{} \stackrel{def}{=}  \phi(p(s,a),\vtrue{t+1})  
\end{equation}
jointly for all timesteps, states $s$ and actions $a$ after a union bound argument over the states $s$, actions $a$ and timesteps. Thus, equation \ref{eqn:AdmissiblePhiHolds} holds. Here $J = \frac{H\ln\frac{2SAT}{\delta'}}{3}$ and $\sqrt{2\ln\frac{2SAT}{\delta'}}\stackrel{def}{\leq} L$ so that 
$$g(p(s,a),\vtrue{t+1}) \stackrel{def}{=} \sqrt{\Var_p \vtrue{t+1}} \times \sqrt{2 \ln \frac{2SAT}{\delta'}} \leq L \sqrt{\Var_p \vtrue{t+1}}.$$ Thus equation \ref{eqn:AdmissiblePhiFunctionalForm} holds. 
Consider the mean-centered random variables $\overline V_1 = V_1 - \E V_1$ and $\overline V_2 = V_2 - \E V_2$. Then:
\begin{align}
\sqrt{\Var(V_1)} & = \sqrt{\Var(\overline V_1)} =  \sqrt{\E(\overline V_1)^2}  =  \| \overline V_1 \|_{2,p} = \| \overline V_2 + \overline V_1 - \overline V_2 \|_{2,p} \\
& \leq \| \overline V_2 \|_{2,p} + \| \overline V_1 - \overline V_2 \|_{2,p} =\sqrt{\E(\overline V_2)} + \sqrt{\E(\overline V_1-\overline V_2)^2} \\
& = \sqrt{\Var(\overline V_2)} +  \sqrt{\E(V_1-V_2)^2 - \( \E(V_1-V_2)\)^2} \\
& = \sqrt{\Var(V_2)} + \sqrt{\Var(V_1-V_2)}.
\end{align}
where the inequality is Minkowski's inequality (i.e., the triangle inequality for norm of random variables).
Rearranging we get:
\begin{align}
| g(p,V_1) - g(p,V_2) | \leq L| \sqrt{\Var(V_1)} - \sqrt{\Var(V_2)} | \leq L\sqrt{\Var(V_2-V_1)} \leq L\|V_2-V_1 \|_{2,p}
\end{align}
and so $\Bv = L$ in equation \ref{eqn:gVbound}.

Finally, a variation\footnote{Note the change of $n_k(s,a)-1$ to $n_k(s,a)$ compared to \cite{MP09} and the doubling of the constant for that term since $\frac{1}{n_k(s,a)-1} \leq \frac{2}{n_k(s,a)}$ for $n_k(s,a) \geq 2$.} of theorem 10 from \cite{MP09} ensures that:
\begin{equation}
	\label{eqn:DeltaV}
	\Bigm| \| \vtrue{t} \|_{2,\hat p} - \| \vtrue{t} \|_{2,p} \Big| \leq H\sqrt{\frac{4\ln{\frac{2SAT}{\delta'}}}{
	\nsaa{s}{t}{k}}} \stackrel{}{=} \frac{\BpPlus}{\sqrt{\nsaa{s}{t}{k}}}
\end{equation}
with probability at least $1-\delta'$ jointly for all states $s$, actions $a$ and possible values for $n$ after a union bound on these quantities. Hence $\Bp = \sqrt{2}H L$ and assumption \ref{ass:FiniteTimeBonusBound} is satisfied as well. 
This concludes the verification that Bernstein's inequality satisfies both \ref{ass:ConfidenceIntervals} and \ref{ass:FiniteTimeBonusBound} and is thus admissible for the algorithm.

\end{proof}
\subsection{Other Failure Events and Their Probabilities}
\label{subsec:FailureEventAndTheirProbabilities}
An independent use of Bernstein inequality also gives with probability at least $1-\delta'$ jointly for all states $s$, successors $s'$, actions $s$ and values for $\nsaa{s}{t}{k}$ the following component-wise bound on the failure event (see \cite{Azar17} for a derivation \info{a $\delta'$? can be saved if Bernstein is use throughoutt}):
\begin{align}
\label{eqn:cip}
| \pohatasp{s}{t}{s'} - \potruespa{s}{t}{s'} | \leq \sqrt{\frac{\potruespa{s}{t}{s'}(1-\potruespa{s}{t}{s'})\ln\frac{2TS^2A}{\delta'}}{\nsaa{s}{t}{k}}} + \frac{\ln\frac{2TS^2A}{\delta'}}{3\nsaa{s}{t}{k}}.
\end{align}
Moreover, \cite{Weissman03} gives the following high probability bound on the one norm of the Maximum Likelihood Estimate \info{Can this be derived from Bernstein to save a $\delta'$?}; in particular, with probability at least $1-\delta'$ it holds that:
\begin{align}
\label{eqn:Weissman}
\|\pohata{s}{t} -\potruea{s}{t} \|_1 \leq \sqrt{\frac{2S\ln\frac{2SAT}{\delta'}}{\nsaa{s}{t}{k}}}
\end{align}
jointly for all states $s$, actions $a$ and possible values for $\nsaa{s}{t}{k}$ after a union bound argument on these quantities.
Finally, with probability at least $1-\delta'$ the following holds for every state-action pair, timestep and episode (see for example \cite{dann2019policy}, failure event $F^N$ in section B.1, for the proof):
\begin{equation}
\label{eqn:wnevent}
n_k(s,a) \ge  \frac{1}{2}\sum_{j< k}w_{j}(s,a) - \lnnsa
\end{equation}
where $w_{tj}(s,a)$ is the probability of visiting the $(s,a)$ pair in timestep $t$ of episode $j$ under the chosen policy and $\sum_{\tau \in H}w_{\tau j}(s,a)$ is the sum of the probabilities of visiting the $(s,a)$ pair in episode $j$.

\begin{lemma}[Failure Probability]
\label{lem:FailureProbability}
If $\delta' = \frac{1}{7}\delta$ then equation \ref{eqn:EmpiricalBernsteinRewards}, \ref{eqn:DeltaPhiReward},\ref{eqn:BernsteinInequality},\ref{eqn:DeltaV}, \ref{eqn:cip},\ref{eqn:Weissman}, \ref{eqn:wnevent} hold jointly with probability at least $1-\delta$. When this happens we say that we \Alg{} is \emph{outside of the failure event}.
\end{lemma}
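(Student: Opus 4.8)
The plan is to apply a simple union bound across the seven failure events. Each of the referenced inequalities---equation \ref{eqn:EmpiricalBernsteinRewards} (reward empirical Bernstein), equation \ref{eqn:DeltaPhiReward} (concentration of the empirical reward standard deviation), equation \ref{eqn:BernsteinInequality} (Bernstein on the transition dynamics), equation \ref{eqn:DeltaV} (convergence of the empirical value-norm), equation \ref{eqn:cip} (componentwise transition bound), equation \ref{eqn:Weissman} (Weissman $\ell_1$ bound), and equation \ref{eqn:wnevent} (visit-count lower bound)---has already been established to hold with probability at least $1-\delta'$ \emph{individually}. Crucially, each of these statements already has its own internal union bound folded in (over states $s$, actions $a$, successors $s'$, timesteps $t$, episodes $k$, and admissible visit counts $\nsaa{s}{t}{k}$, as appropriate), so at this stage we only need to union-bound across the seven top-level events rather than recount those inner quantities.

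First I would define, for $i \in \{1,\dots,7\}$, the complementary event $\mathcal{E}_i$ that the $i$-th inequality \emph{fails}; by the preceding results $\Pr(\mathcal{E}_i) \leq \delta'$ for each $i$. By subadditivity of probability (the union bound),
\begin{equation}
\Pr\!\left( \bigcup_{i=1}^{7} \mathcal{E}_i \right) \leq \sum_{i=1}^{7} \Pr(\mathcal{E}_i) \leq 7\delta'.
\end{equation}
Taking complements, the seven inequalities hold \emph{jointly} with probability at least $1 - 7\delta'$. Substituting the choice $\delta' = \tfrac{1}{7}\delta$ yields a joint failure probability of at most $7 \cdot \tfrac{1}{7}\delta = \delta$, hence the desired lower bound of $1-\delta$ on the good event, which we call being ``outside of the failure event.''

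There is no substantive obstacle here: the entire content is the bookkeeping of the union bound together with the verification that exactly seven top-level events appear, which is what fixes the constant $\tfrac{1}{7}$ in the choice of $\delta'$. The only point requiring minor care is to confirm that the inner union bounds within each cited equation have genuinely been charged to that equation's own $\delta'$ budget---so that no state, action, successor, timestep, episode, or visit-count index is either double-counted or left uncovered---rather than being deferred to this lemma. Since each referenced result is explicitly stated to hold ``jointly for all'' the relevant indices, this condition is already met, and the argument closes immediately.
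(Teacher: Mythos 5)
Your proof is correct and is exactly the paper's argument: the paper's own proof of this lemma is the single line ``By union bound,'' and your write-up simply makes explicit the seven complementary failure events, each charged $\delta'$, giving joint failure probability at most $7\delta' = \delta$. Your added remark that each cited inequality already carries its own internal union bound over states, actions, timesteps, and visit counts is a faithful (and useful) unpacking of why no further index counting is needed here.
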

\begin{proof}
By union bound.	
\end{proof}

\section{Optimism}
\label{app:Optimism}
In this section we show that \Alg{} computes optimistic bounds on $Q$.
\subsection{Rewards}
In view of the empirical Bernstein Inequality for the rewards in lemma \ref{lem:RewardEmpiricalBernstein} we define as reward bonus:
\begin{definition}[Reward Bonus]
\label{def:RewardBonus}
\begin{equation}
 \br{s}{a} \stackrel{def}{=} \EmpiricalBernsteinRewards{}.
\end{equation}
\end{definition}

\begin{lemma}[Reward Bonus is Optimistic]
\label{lem:RewardBonusIsOptimistic}
Outside of the failure event it holds that:
\begin{align}
	\rohata{s}{t} + \br{s}{a} & \geq r(s,a) \\
	\rohata{s}{t} - \br{s}{a} & \leq r(s,a) 
\end{align}
\end{lemma}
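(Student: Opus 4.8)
The plan is to read off both inequalities directly from the Empirical Bernstein Inequality for the rewards, so the proof is essentially a one-line unfolding of an absolute value. First I would invoke the conditioning: by Lemma~\ref{lem:FailureProbability}, being outside the failure event guarantees that equation~\ref{eqn:EmpiricalBernsteinRewards} (equivalently, Definition~\ref{lem:RewardEmpiricalBernstein}) holds \emph{deterministically}. This lets me treat the high-probability reward concentration bound as a hard inequality simultaneously for every pair $(s,a)$, timestep $t$, and episode $k$, since the union bound in Lemma~\ref{lem:FailureProbability} has already absorbed the $\delta'$ failure probability.

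Next I would record that the reward bonus in Definition~\ref{def:RewardBonus} was chosen to be exactly the width appearing in that concentration bound, namely $\br{s}{a} = \EmpiricalBernsteinRewards{}$. Substituting this definition into equation~\ref{eqn:EmpiricalBernsteinRewards} rewrites the empirical Bernstein statement as the compact two-sided inequality
\begin{equation}
\Big| \rohata{s}{t} - r(s,a) \Big| \leq \br{s}{a}.
\end{equation}
From here I would simply unfold the absolute value into the pair $-\br{s}{a} \leq \rohata{s}{t} - r(s,a) \leq \br{s}{a}$ and rearrange each side: the upper bound yields $\rohata{s}{t} + \br{s}{a} \geq r(s,a)$ and the lower bound yields $\rohata{s}{t} - \br{s}{a} \leq r(s,a)$, which are precisely the two claimed inequalities.

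There is essentially no obstacle here: the lemma is an immediate restatement of the empirical Bernstein bound once the bonus is \emph{defined} as its half-width, and the only bookkeeping point (already handled upstream) is that conditioning on the complement of the failure event promotes the probabilistic bound to a deterministic one holding jointly over all $(s,a,t,k)$.
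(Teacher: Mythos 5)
Your proposal is correct and follows exactly the paper's own (one-line) proof: the bonus $\br{s}{a}$ is by Definition~\ref{def:RewardBonus} precisely the width of the empirical Bernstein bound in Definition~\ref{lem:RewardEmpiricalBernstein}, which holds deterministically outside the failure event (Lemma~\ref{lem:FailureProbability}), so unfolding the absolute value gives both inequalities. Your write-up simply makes explicit the bookkeeping the paper leaves implicit.
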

\begin{proof}
	By Definition \ref{def:RewardBonus} and lemma \ref{lem:RewardEmpiricalBernstein}.
\end{proof}

\subsection{Transition Dynamics}
\label{sec:transbonus}
\begin{lemma}[Delta $\phi$]
\label{lem:DeltaPhi}
If $\phi$ is admissible for \Alg{} then for all $V \in \R^{S}$: 
\begin{align}
\label{eqn:DeltaPhi}
& | \phi(\pohata{s}{t},V) - \phi(\potruea{s}{t},\vtrue{t+1})|  \leq \frac{ \Bv \|V - \vtrue{t+1} \|_{2,\hat p}}{\sqrt{\nsaa{s}{t}{k}}} + \frac{ \BpPlus + 4J}{\nsaa{s}{t}{k}}.
\end{align}
\end{lemma}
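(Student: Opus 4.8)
The plan is to exploit the additive functional form of an admissible bonus given in equation \ref{eqn:AdmissiblePhiFunctionalForm}, $\phi(p,V) = g(p,V)/\sqrt{\nsaa{s}{t}{k}} + j(p,V)/\nsaa{s}{t}{k}$, and to bound the leading ($g$) and lower-order ($j$) contributions separately. Writing $\hat p = \pohata{s}{t}$ and $p = \potruea{s}{t}$ for brevity, a single application of the triangle inequality gives
\begin{align*}
| \phi(\hat p,V) - \phi(p,\vtrue{t+1}) | \leq \frac{|g(\hat p,V) - g(p,\vtrue{t+1})|}{\sqrt{\nsaa{s}{t}{k}}} + \frac{|j(\hat p,V) - j(p,\vtrue{t+1})|}{\nsaa{s}{t}{k}},
\end{align*}
so it remains to control the two numerators.

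The heart of the argument is the $g$-numerator, which I would bound by inserting the intermediate quantity $g(\hat p,\vtrue{t+1})$:
\begin{align*}
|g(\hat p,V) - g(p,\vtrue{t+1})| \leq |g(\hat p,V) - g(\hat p,\vtrue{t+1})| + |g(\hat p,\vtrue{t+1}) - g(p,\vtrue{t+1})|.
\end{align*}
The first difference keeps the distribution fixed at $\hat p$ and varies only the value function, so it is exactly the setting of equation \ref{eqn:gVbound} and is at most $\Bv \|V - \vtrue{t+1}\|_{2,\hat p}$; crucially the norm here is taken under $\hat p$, which is precisely what appears in the target bound. The second difference keeps the value function fixed at $\vtrue{t+1}$ and varies only the distribution from $\hat p$ to $p$, so it is exactly the Finite Time Bonus Bound of equation \ref{eqn:gPbound} and is at most $\BpPlus/\sqrt{\nsaa{s}{t}{k}}$. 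Dividing through by $\sqrt{\nsaa{s}{t}{k}}$ yields the first term $\Bv \|V - \vtrue{t+1}\|_{2,\hat p}/\sqrt{\nsaa{s}{t}{k}}$ of the claim, plus a $\BpPlus/\nsaa{s}{t}{k}$ contribution to the $1/n$ part.

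For the $j$-numerator I would use boundedness alone: by equation \ref{eqn:AdmissiblePhiFunctionalForm} each of $j(\hat p,V)$ and $j(p,\vtrue{t+1})$ is at most $J$ and, being the lower-order piece of a nonnegative confidence interval, is nonnegative, so their difference is at most a small constant multiple of $J$; after accounting for the factor of at most $2$ incurred when the empirical-Bernstein lower-order term is normalized over $\nsaa{s}{t}{k}$ rather than $\nsaa{s}{t}{k}-1$ (using $1/(n-1)\le 2/n$), this is absorbed into $4J/\nsaa{s}{t}{k}$. Adding the $\BpPlus/\nsaa{s}{t}{k}$ piece carried over from the $g$-term produces the claimed $(\BpPlus + 4J)/\nsaa{s}{t}{k}$. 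The one genuinely delicate point — and the step I expect to be the main obstacle — is the ordering in the $g$-decomposition: one must vary the value-function argument \emph{first} (while the distribution is still $\hat p$, so that equation \ref{eqn:gVbound} produces the $\|\cdot\|_{2,\hat p}$ norm demanded by the statement) and only afterwards swap $\hat p$ for $p$ (with the value function pinned at $\vtrue{t+1}$, the only configuration in which equation \ref{eqn:gPbound} is available). The reverse order would instead call for a Lipschitz bound under $\|\cdot\|_{2,p}$ or a finite-time bound evaluated at the arbitrary point $V$, neither of which is among the admissibility assumptions.
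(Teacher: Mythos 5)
Your proposal is correct and is essentially the paper's own proof: both arguments pivot on the same intermediate point $(\pohata{s}{t},\vtrue{t+1})$, use equation \ref{eqn:gVbound} to handle the change of value function while the distribution stays at $\pohata{s}{t}$ (producing the required $\|\cdot\|_{2,\hat p}$ norm), use equation \ref{eqn:gPbound} to handle the change of distribution while the value function stays pinned at $\vtrue{t+1}$, and absorb the $j$ contributions into the $4J/\nsaa{s}{t}{k}$ slack --- whether the triangle inequality is applied before or after expanding the functional form of $\phi$ is immaterial. (Your aside about the $1/(n-1)\leq 2/n$ empirical-Bernstein correction plays no role in this abstract lemma, but it is harmless, since $|j(\pohata{s}{t},V)-j(\potruea{s}{t},\vtrue{t+1})|\leq 2J\leq 4J$ in any case.)
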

Outside of the failure event the above lemma deals with the functional form of $\phi$; there are no ``failure events'' or probabilities to be considered here.
\begin{proof}
From the LHS of Equation \ref{eqn:DeltaPhi} by adding and subtracting $\phi(\pohata{s}{t},\vtrue{t+1})$ we get to an expression equivalent to the LHS of Equation \ref{eqn:DeltaPhi}:
\begin{equation}
(\ref{eqn:DeltaPhi}) =  | \phi(\pohata{s}{t},V) - \phi(\pohata{s}{t},\vtrue{t+1}) + \phi(\pohata{s}{t},\vtrue{t+1}) - \phi(\potruea{s}{t},\vtrue{t+1})|
\end{equation}
The triangle inequality allows to split the above equation into the upper bound below:
\begin{align}
\leq | \phi(\pohata{s}{t},V) - \phi(\pohata{s}{t},\vtrue{t+1}) | + |\phi(\pohata{s}{t},\vtrue{t+1}) - \phi(\potruea{s}{t},\vtrue{t+1})|
\end{align}
Next we can use the constraint on $\phi$. In particular, condition \ref{eqn:AdmissiblePhiFunctionalForm} implies that the above equation can be upper bounded as below:
\begin{align}
& \leq \Big| \frac{g(\pohata{s}{t},V) - g(\pohata{s}{t},\vtrue{t+1})}{\sqrt{\nsaa{s}{t}{k}}} \Big| \\
& +  \Big| \frac{g(\pohata{s}{t},\vtrue{t+1}) - g(\potruea{s}{t},\vtrue{t+1})}{\sqrt{\nsaa{s}{t}{k}}} \Big| + \frac{4J}{\nsaa{s}{t}{k}}
\end{align} 

Finally, the functional constraints on $g$ of Equation \ref{eqn:gVbound} together with Assumption \ref{ass:FiniteTimeBonusBound}, respectively, bound each of the above terms:
\begin{align}
\Big| \frac{g(\pohata{s}{t},V) - g(\pohata{s}{t},\vtrue{t+1})}{\sqrt{\nsaa{s}{t}{k}}} \Big| & \leq \frac{\Bv \|V - \vtrue{t+1} \|_{2,\hat p}}{\sqrt{\nsaa{s}{t}{k}}} \\
\Big| \frac{g(\pohata{s}{t},\vtrue{t+1}) - g(\potruea{s}{t},\vtrue{t+1})}{\sqrt{\nsaa{s}{t}{k}}} \Big| & \leq \frac{\BpPlus}{\nsaa{s}{t}{k}}
\end{align} 
completing the proof.
\end{proof}
Lemma \ref{lem:DeltaPhi} is crucial in that it allows to relate how far off is the concentration inequality $\phi$ (computed using the empirical estimates for $p$ and $V$) from the one computed using the ``real'' values, which would guarantee optimism. In other words, if one can compute $\|\vo{t} - \vtrue{t} \|_{2,\hat p}$ then this estimate can be used with lemma \ref{lem:DeltaPhi} to derive a bonus $\bbonus$, function of the empirical quantities $\pohata{s}{t}$ and $\vo{t+1}$, which is guaranteed to overestimate $\phi(\potruea{s}{t},\vtrue{t+1})$. Ultimately, the purpose is to construct a ``bonus'' that overestimates $\phi$ without being ``much larger'' than $\phi$. This is the motivation behind the following definition, which will eventually lead to optimism of \Alg{} while ensuring a regret that is problem-dependent.

\begin{definition}[Transition Bonus]
\label{def:TransitionBonus}
Define the bonus $\bbonus(\cdot,\cdot,\cdot)$:
\begin{equation}
\label{eqn:Bonus}
\bbonus(\pohata{s}{t},\vo{t+1},\vp{t+1}) \stackrel{def}{=} \phi(\pohata{s}{t},\vo{t+1}) + \ExtraBonus.
\end{equation}
\end{definition}
En-route to showing optimism we first show the value of having an overestimate \emph{and} and underestimate of \vtrue{t+1} in lemma \ref{lem:OptimismOverestimate}. This allows to relate the bonus of definition \ref{def:TransitionBonus} to the concentration inequality identified by $\phi(\potruea{s}{t},\vtrue{t+1})$. The result of the first lemma is summarized below:

\begin{lemma}[Optimism Overestimate]
\label{lem:OptimismOverestimate}
For any transition probability vector $p$, (i.e., such that $\| p \|_1 = 1$) and any $V \in \R^S$ if:
\begin{equation}
\vp{t+1} \leq V \leq \vo{t+1}
\end{equation}
holds pointwise then
\begin{align}
	\label{eqn:OptimismOverestimate}
\devistarv{t+1} & \leq \devi{t+1} \\
\label{eqn:PessimismOverestimate}
\devistarpessimisticv{t+1} & \leq \devi{t+1}
\end{align}
holds.
\end{lemma}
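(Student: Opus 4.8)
The plan is to prove both inequalities by a termwise comparison carried out directly inside the weighted squared norm, using that $\|X\|_{2,p}^2 = \sum_{s'} p(s') X(s')^2$ is a sum of nonnegative summands weighted by the nonnegative numbers $p(s')$. The single fact I would rely on is the monotonicity of $x \mapsto x^2$ on $[0,\infty)$: whenever two functions satisfy $0 \le X(s') \le Y(s')$ for every successor $s'$, one has $X(s')^2 \le Y(s')^2$, and multiplying by $p(s') \ge 0$ and summing preserves the inequality, so that $\|X\|_{2,p} \le \|Y\|_{2,p}$ after taking the (monotone) square root. The whole proof then reduces to producing the appropriate pointwise sandwich for each of the two claims.

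For equation \ref{eqn:OptimismOverestimate}, I would first extract from the hypothesis $\vp{t+1} \le V \le \vo{t+1}$ the pointwise chain
\begin{equation*}
0 \le \vo{t+1}(s') - V(s') \le \vo{t+1}(s') - \vp{t+1}(s') \qquad \text{for every } s',
\end{equation*}
where the left inequality is $V \le \vo{t+1}$ and the right inequality is $\vp{t+1} \le V$. Applying the monotonicity remark with $X = \vo{t+1} - V$ and $Y = \vo{t+1} - \vp{t+1}$ then yields $\devistarv{t+1} \le \devi{t+1}$ immediately.

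Equation \ref{eqn:PessimismOverestimate} is entirely symmetric: the same hypothesis gives $0 \le V(s') - \vp{t+1}(s') \le \vo{t+1}(s') - \vp{t+1}(s')$ pointwise, where now the left inequality uses $\vp{t+1} \le V$ and the right uses $V \le \vo{t+1}$. Taking $X = V - \vp{t+1}$ and $Y = \vo{t+1} - \vp{t+1}$ in the monotonicity remark gives $\devistarpessimisticv{t+1} \le \devi{t+1}$.

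I expect no genuine obstacle here: the statement is essentially the observation that both $\vo{t+1} - V$ and $V - \vp{t+1}$ are nonnegative and dominated pointwise by the same total gap $\vo{t+1} - \vp{t+1}$, and that the weighted $2$-norm respects pointwise domination of nonnegative functions. The only point deserving care is that the squaring step requires both compared quantities to be nonnegative, which is exactly what the bracketing hypothesis guarantees; I would also note in passing that only $p(s') \ge 0$ is used, so the normalization $\|p\|_1 = 1$ in the statement is not actually needed.
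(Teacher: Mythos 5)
Your proposal is correct and follows essentially the same argument as the paper: establish the pointwise sandwich $0 \leq \vo{t+1}(s') - V(s') \leq \vo{t+1}(s') - \vp{t+1}(s')$, square (valid since both sides are nonnegative), take the $p$-weighted sum, and take square roots, with the pessimistic inequality handled by the symmetric sandwich. The paper treats the second inequality merely as "proved analogously," so your explicit treatment of both cases, and your observation that only $p(s') \geq 0$ (not $\|p\|_1 = 1$) is used, are harmless refinements of the same proof.
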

\begin{proof}
The hypothesis ensures:
\begin{align}
0 \leq \vo{t+1}(s') - V(s') \leq \vo{t+1}(s') - \vp{t+1}(s').
\end{align}
Since these are positive quantities we can square them and preserve the order of the inequality:
\begin{align}
0 \leq \(\vo{t+1}(s') - V(s')\)^2 \leq \(\vo{t+1}(s') - \vp{t+1}(s')\)^2.
\end{align}
A linear combination of the above terms, weighted by the component of $p$, i.e., $p(s')$ gives the second moment squared:
\begin{align}
0 \leq \sum_{s'} p(s') \(\vo{t+1}(s') - V(s')\)^2 \leq \sum_{s'} p(s') \(\vo{t+1}(s') - \vp{t+1}(s')\)^2.
\end{align}
Taking square-root yields:
\begin{equation}
\devistarv{t+1}\leq \devi{t+1}.
\end{equation} 
Equation \ref{eqn:PessimismOverestimate} is proved analogously.
\end{proof}

The above lemma ensures the result below:
\begin{proposition}[Transition Bonus is Optimistic]
\label{prop:TransitionBonusIsOptimistic}
If the following condition hold:
\begin{enumerate}
\item $\phi$ is admissible 
\item $\vp{t+1} \leq \vtrue{t+1} \leq \vo{t+1}$ pointwise 
\end{enumerate}
then the following holds true:
\begin{align}
\label{eqn:BonusIsOptimistic}
\bbonus(\pohata{s}{t},\vo{t+1},\vp{t+1}) & \geq \phi(\potruea{s}{t},\vtrue{t+1}) \\
\label{eqn:BonusIsPessimistic}
\bbonus(\pohata{s}{t},\vp{t+1},\vo{t+1}) & \geq \phi(\potruea{s}{t},\vtrue{t+1}).
\end{align}
If condition in equation \ref{eqn:BonusIsOptimistic} is satisfied then we say that the bonus $\bbonus(\cdot,\cdot)$ is \emph{optimistic for \Alg{}}.
\end{proposition}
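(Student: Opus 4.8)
The plan is to prove both inequalities by the same two-step argument, combining Lemma \ref{lem:DeltaPhi} with Lemma \ref{lem:OptimismOverestimate}; the two statements differ only in whether $\vo{t+1}$ or $\vp{t+1}$ is fed into $\phi$, and correspondingly in which half of Lemma \ref{lem:OptimismOverestimate} is invoked. Throughout, the correction term $\ExtraBonus$ appearing in Definition \ref{def:TransitionBonus} is engineered to be exactly the upper bound furnished by Lemma \ref{lem:DeltaPhi}, once the uncomputable value-function deviation has been replaced by the computable quantity $\devihat{t+1}$.

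For equation \ref{eqn:BonusIsOptimistic}, I would first instantiate Lemma \ref{lem:DeltaPhi} at $V = \vo{t+1}$. Dropping the absolute value and keeping only the relevant one-sided bound gives $\phi(\potruea{s}{t},\vtrue{t+1}) \leq \phi(\pohata{s}{t},\vo{t+1}) + \frac{\Bv \|\vo{t+1} - \vtrue{t+1}\|_{2,\hat p}}{\sqrt{\nsaa{s}{t}{k}}} + \frac{\BpPlus + 4J}{\nsaa{s}{t}{k}}$. The only mismatch with $\bbonus(\pohata{s}{t},\vo{t+1},\vp{t+1})$ is that this right-hand side contains the uncomputable norm $\|\vo{t+1} - \vtrue{t+1}\|_{2,\hat p}$, whereas the bonus contains $\devihat{t+1}$. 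Applying Lemma \ref{lem:OptimismOverestimate}, namely equation \ref{eqn:OptimismOverestimate}, with $p = \pohata{s}{t}$ and $V = \vtrue{t+1}$ (legitimate because condition 2 supplies the bracketing $\vp{t+1} \leq \vtrue{t+1} \leq \vo{t+1}$) yields $\|\vo{t+1} - \vtrue{t+1}\|_{2,\hat p} \leq \devihat{t+1}$; substituting this collapses the right-hand side to exactly $\bbonus(\pohata{s}{t},\vo{t+1},\vp{t+1})$.

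For equation \ref{eqn:BonusIsPessimistic} the argument is symmetric: I would instead instantiate Lemma \ref{lem:DeltaPhi} at $V = \vp{t+1}$, obtaining $\phi(\potruea{s}{t},\vtrue{t+1}) \leq \phi(\pohata{s}{t},\vp{t+1}) + \frac{\Bv \|\vp{t+1} - \vtrue{t+1}\|_{2,\hat p}}{\sqrt{\nsaa{s}{t}{k}}} + \frac{\BpPlus + 4J}{\nsaa{s}{t}{k}}$, and then control the unknown norm using the second half of Lemma \ref{lem:OptimismOverestimate}, equation \ref{eqn:PessimismOverestimate}, again with $p = \pohata{s}{t}$ and $V = \vtrue{t+1}$, giving $\|\vp{t+1} - \vtrue{t+1}\|_{2,\hat p} \leq \devihat{t+1}$. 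I would also observe that the norm is symmetric, so the correction term inside $\bbonus(\pohata{s}{t},\vp{t+1},\vo{t+1})$ still equals $\devihat{t+1}$; substituting recovers $\bbonus(\pohata{s}{t},\vp{t+1},\vo{t+1})$ and closes the second inequality.

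There is no genuine analytic obstacle — the proposition is essentially a bookkeeping consequence of the two preceding lemmas — so the only points requiring care are (i) checking that Lemma \ref{lem:OptimismOverestimate} may legitimately be invoked with the \emph{empirical} transition $p = \pohata{s}{t}$, which is permitted since that lemma is stated for an arbitrary probability vector, and (ii) tracking the one-sided direction of each step, since throughout we only need the lower bound on $\phi(\potruea{s}{t},\vtrue{t+1})$ extracted from the symmetric estimate of Lemma \ref{lem:DeltaPhi}.
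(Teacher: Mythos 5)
Your proof is correct and follows essentially the same route as the paper's: both establish \eqref{eqn:BonusIsOptimistic} by combining Lemma \ref{lem:DeltaPhi} (instantiated at $V=\vo{t+1}$) with the norm comparison of Lemma \ref{lem:OptimismOverestimate}, merely applying the two lemmas in the opposite order, which is immaterial. The paper dismisses \eqref{eqn:BonusIsPessimistic} as ``proved analogously,'' and your explicit treatment of it --- using equation \ref{eqn:PessimismOverestimate} and noting the symmetry of $\|\cdot\|_{2,\hat p}$ so that the correction term in $\bbonus(\pohata{s}{t},\vp{t+1},\vo{t+1})$ is still $\devihat{t+1}$ --- is exactly the intended argument.
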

This is the key result that will ensure optimism of the algorithm later.
\begin{proof}
Condition $2)$ coupled with lemma \ref{lem:OptimismOverestimate} ensures:
\begin{equation}
\label{lemeqn:deviok}
\frac{\devihat{t+1}}{\sqrt{\nsaa{s}{t}{k}}} \geq \frac{\devihatstar{t+1}}{\sqrt{\nsaa{s}{t}{k}}}.
\end{equation}
Together, equation \ref{lemeqn:deviok} and assumption \ref{ass:FiniteTimeBonusBound} imply the first of the following inequalities on the bonus:
\begin{align}
\label{eqn:UniversalBonus}
& \bbonus(\pohata{s}{t},\vo{t+1},\vp{t+1}) \stackrel{def}{=} \phi(\pohata{s}{t},\vo{t+1}) + \ExtraBonus \\
& {\geq} \phi(\pohata{s}{t},\vo{t+1}) + \frac{4J + \BpPlus }{\sqrt{\nsaa{s}{t}{k}}} + \frac{\Bv \devihatstar{t+1}}{\sqrt{\nsaa{s}{t}{k}}} \\
& {\geq} \phi(\potruea{s}{t},\vtrue{t+1}).
\end{align}
while the second inequality is ensured by lemma \ref{lem:DeltaPhi}, completing the proof of equation \ref{eqn:BonusIsOptimistic}. Equation \ref{eqn:BonusIsPessimistic} is proved analogously.
\end{proof}
Proposition \ref{prop:TransitionBonusIsOptimistic} states that the bonus defined in Definition \ref{def:TransitionBonus}, which was constructed out of the admissible confidence interval for $\phi$, overestimates $\phi$. This is enough to guarantee optimism of the algorithm:

\subsection{Algorithm is Optimistic}
\begin{proposition}[Algorithm Brackets \vtrue{t}]
\label{prop:AlgorithmBracketsVstar}
Outside of the failure event if \Alg{} is run with an admissible $\phi$ then:
\begin{equation}
\label{eqn:AlgorithmBracketsVstar}
\vp{t} \leq \vtrue{t}  \leq \vo{t}  \quad \text{(pointwise)}
\end{equation}
holds for all timesteps $t$ and episodes $k$.
\end{proposition}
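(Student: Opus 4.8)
The plan is to establish both inequalities simultaneously by backward induction on $t$, from $t=H+1$ down to $t=1$, with the episode $k$ fixed but arbitrary (everything below holds deterministically once we condition on being outside the failure event, where Lemma \ref{lem:RewardBonusIsOptimistic} and Assumptions \ref{ass:ConfidenceIntervals}--\ref{ass:FiniteTimeBonusBound} are in force). The base case $t=H+1$ is immediate: the initialization gives $\vo{H+1}=\vp{H+1}=0$ and also $\vtrue{H+1}=0$, so the bracket holds with equality. The key structural point is that the inductive hypothesis at step $t+1$, namely $\vp{t+1}\le\vtrue{t+1}\le\vo{t+1}$ pointwise, is precisely condition $2$ needed to invoke Proposition \ref{prop:TransitionBonusIsOptimistic}; this is what lets the upper and lower halves of the induction feed one another.

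For the upper bound, I would fix $s$ and an arbitrary action $a$ and lower-bound the uncapped argument of the $\min$ defining $Q(a)$ by combining three facts: Lemma \ref{lem:RewardBonusIsOptimistic} gives $\hat r_k(s,a)+b^r_k(s,a)\ge r(s,a)$; the inductive hypothesis $\vo{t+1}\ge\vtrue{t+1}$ gives $\hat p_k(\cdot\mid s,a)^\top\vo{t+1}\ge \hat p_k(\cdot\mid s,a)^\top\vtrue{t+1}$; and Proposition \ref{prop:TransitionBonusIsOptimistic} together with Assumption \ref{ass:ConfidenceIntervals} (equation \ref{eqn:AdmissiblePhiHolds}) gives
\[
\bbonus(\pohata{s}{t},\vo{t+1},\vp{t+1})\;\ge\;\phi(\potruea{s}{t},\vtrue{t+1})\;\ge\;\left|\left(\hat p_k(\cdot\mid s,a)-p(\cdot\mid s,a)\right)^\top\vtrue{t+1}\right|.
\]
Summing the three and using the absolute value to absorb $\hat p_k$ into $p$, the uncapped argument is at least $r(s,a)+p(\cdot\mid s,a)^\top\vtrue{t+1}=:Q^*_t(s,a)$. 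Because the truncation level upper-bounds $\vtrue{t}(s)$ (it caps the value at the maximal reward-to-go) and hence $Q^*_t(s,a)\le\vtrue{t}(s)$, the $\min$ does not lower $Q(a)$ below $Q^*_t(s,a)$, so $Q(a)\ge Q^*_t(s,a)$ for every $a$. Taking the maximum over $a$ and invoking the Bellman optimality equation $\vtrue{t}(s)=\max_a Q^*_t(s,a)$ yields $\vo{t}(s)=\max_a Q(a)\ge\vtrue{t}(s)$.

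The lower bound is the mirror image. Writing $\pi=\piok(s,t)$ for the greedy action selected in the optimistic pass, I would upper-bound the argument of the outer $\max\{0,\cdot\}$ defining $\vp{t}(s)$: Lemma \ref{lem:RewardBonusIsOptimistic} gives $\hat r_k(s,\pi)-b^r_k(s,\pi)\le r(s,\pi)$; the inductive hypothesis $\vp{t+1}\le\vtrue{t+1}$ gives $\hat p_k(\cdot\mid s,\pi)^\top\vp{t+1}\le \hat p_k(\cdot\mid s,\pi)^\top\vtrue{t+1}$; and the pessimistic half of Proposition \ref{prop:TransitionBonusIsOptimistic} (equation \ref{eqn:BonusIsPessimistic}, whose bonus carries the argument order $(\vp{t+1},\vo{t+1})$ matching the pessimistic update line of the algorithm) together with Assumption \ref{ass:ConfidenceIntervals} again controls $\left|(\hat p_k(\cdot\mid s,\pi)-p(\cdot\mid s,\pi))^\top\vtrue{t+1}\right|$. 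Chaining these, the argument is at most $r(s,\pi)+p(\cdot\mid s,\pi)^\top\vtrue{t+1}=Q^*_t(s,\pi)\le\max_a Q^*_t(s,a)=\vtrue{t}(s)$. Finally, since rewards are nonnegative we have $\vtrue{t}(s)\ge 0$, so clipping the argument at $0$ can only keep it below $\vtrue{t}(s)$; hence $\vp{t}(s)\le\vtrue{t}(s)$. This closes the induction, and since $k$ was arbitrary the bracket holds for all episodes.

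The step I would be most careful about is the \emph{joint} nature of the invariant. The correction term inside $\bbonus$ is engineered (Definition \ref{def:TransitionBonus}) so that it dominates the ideal-but-unknown confidence radius $\phi(\potruea{s}{t},\vtrue{t+1})$ only when $\vp{t+1}$ and $\vo{t+1}$ actually sandwich $\vtrue{t+1}$ --- this is exactly the content of Proposition \ref{prop:TransitionBonusIsOptimistic}, which consumes the bracket hypothesis at $t+1$. Consequently one cannot establish $\vtrue{t}\le\vo{t}$ without simultaneously carrying $\vp{t+1}\le\vtrue{t+1}$, so the two bounds must be propagated together rather than separately; attempting to prove optimism in isolation would leave $\phi(\potruea{s}{t},\vtrue{t+1})$ uncontrolled. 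Once this coupling is recognized, the remaining work is just the two elementary inequality chains above plus the harmless bookkeeping of the $\min\{H-t,\cdot\}$ truncation (valid because the truncation level bounds the reward-to-go) and the $\max\{0,\cdot\}$ clip (valid because $\vtrue{t}\ge 0$).
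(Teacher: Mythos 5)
Your proof is correct and follows essentially the same route as the paper's: backward induction carrying both halves of the bracket jointly, with Lemma \ref{lem:RewardBonusIsOptimistic}, Proposition \ref{prop:TransitionBonusIsOptimistic} (fed by the inductive hypothesis at $t+1$), and admissibility of $\phi$ chained to recover the true Bellman backup, plus the same treatment of the $\min\{H-t,\cdot\}$ and $\max\{0,\cdot\}$ truncations. The only cosmetic difference is that you bound $Q(a)$ below by $Q^*_t(s,a)$ for every action and then maximize, whereas the paper invokes the maximization property directly at the optimal action; these are equivalent.
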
 
\begin{proof}
We proceed by induction. Suppose equation \ref{eqn:AlgorithmBracketsVstar} holds for all states $s$ in timestep $t+1$. If 
\begin{equation}
\rhat{s}{t}+\br{s}{a} + \pohat{s}{t}^\top\vo{t+1} + \bbonus(\pohat{s}{t},\vo{t+1},\vp{t+1}) \geq H-t	
\end{equation}
holds then we are done. If the above does not hold then maximization over the actions in the optimistic MDP justifies the last inequality below, while the first inequality is justified by lemma \ref{lem:RewardBonusIsOptimistic}:
\begin{align}
\vo{t} & = \rohat{s}{t} + \br{s}{\piok(s,t)} + \pohat{s}{t}^\top\vo{t+1} + \bbonus(\pohat{s}{t},\vo{t+1},\vp{t+1}) \\ 
& \geq \rotrue{s}{t} + \pohat{s}{t}^\top\vo{t+1} + \bbonus(\pohat{s}{t},\vo{t+1},\vp{t+1}) \\ 
& \geq \rtrue{s}{t} + \phat{s}{t}^\top\vo{t+1} + \bbonus(\phat{s}{t},\vo{t+1},\vp{t+1}).
\end{align}
Next, the inductive hypothesis $\vtrue{t+1} \leq \vo{t+1}$ yields the following lower bound:
\begin{align}
& \geq \rtrue{s}{t} + \phat{s}{t}^\top\vtrue{t+1} + \bbonus(\phat{s}{t},\vo{t+1},\vp{t+1}).
\end{align}
Proposition \ref{prop:TransitionBonusIsOptimistic} finally gives:
\begin{align}
& \geq \rtrue{s}{t} + \phat{s}{t}^\top\vtrue{t+1} + \phi(\ptrue{s}{t},\vtrue{t+1})
\end{align}
Since $\phi$ is admissible we get:
\begin{align}
& \geq \rtrue{s}{t} + \ptrue{s}{t}^\top\vtrue{t+1} = \vtrue{t}(s)
\end{align}
This holds for every state $s$, completing the proof that \Alg{} is ``optimistic''. 
It remains to show ``pessimism'', again by induction. If 
\begin{align}
	 \rohat{s}{t} - \br{s}{\piok(s,t)} + \pohat{s}{t}^\top\vp{t+1} - \bbonus(\pohata{s}{t},\vp{t+1},\vo{t+1}) \leq 0  
\end{align}
we are done. If this is not the case then an upper bound is given by proposition \ref{prop:TransitionBonusIsOptimistic} and lemma \ref{lem:RewardBonusIsOptimistic}:
\begin{align}
 \vp{t}(s) & = \rohat{s}{t} - \br{s}{\piok(s,t)} + \pohat{s}{t}^\top\vp{t+1} - \bbonus(\pohata{s}{t},\vp{t+1},\vo{t+1}) \\
 & \leq \rotrue{s}{t} + \pohat{s}{t}^\top\vp{t+1} - \bbonus(\pohata{s}{t},\vp{t+1},\vo{t+1}) \\
		 & \leq  \rotrue{s}{t} + \pohat{s}{t}^\top\vp{t+1} - \phi(\potrue{s}{t},\vtrue{t+1}) \\
\end{align}
The inductive hypothesis $ \vp{t+1} \leq \vtrue{t+1}$ justifies the following upper bound:
\begin{align}
\leq & \rotrue{s}{t} + \pohat{s}{t}^\top\vtrue{t+1} - \phi(\potrue{s}{t},\vtrue{t+1}) \\
\end{align}
Finally, since $\phi$ is admissible we get:
\begin{align}
\leq & \rotrue{s}{t} + \potrue{s}{t}^\top\vtrue{t+1}
\end{align}
By definition the optimal policy $\pi^*$ must achieve a higher value:
\begin{align}
\leq & \rtrue{s}{t} + \ptrue{s}{t}^\top\vtrue{t+1} = \vtrue{t}(s)
\end{align}
completing the proof.
\end{proof}

\section{Delta Optimism}
\label{sec:DeltaOptimism}
\begin{proposition}[Delta Optimism \info{Check This, especially the move from hat p to p}]
\label{prop:DeltaOptimism}
Outside of the failure event for \Alg{}  it holds that:
\begin{align}
\label{eqn:DeltaOptimism}
\vo{t}(s) - \vp{t}(s)  \leq  \cc{\ref{prop:DeltaOptimism},3,2} & \sum_{\tau = t}^H \E \( \min \Big\{ \frac{\gtruea{s_{\tau}}{\tau}}{\sqrt{\nsaa{s_{\tau}}{\tau}{k}}} +  \frac{F}{\sqrt{\nsaa{s_{\tau}}{\tau}{k}}} + \frac{D}{\nsaa{s_\tau}{\tau}{k}},H\Big\} \Bigm| s,\piok{}{} \) \\
\leq \cc{\ref{prop:DeltaOptimism},3,3} & \sum_{\tau = t}^H \E \( \min \Big\{ \frac{F+D}{\sqrt{\nsaa{s_\tau}{\tau}{k}}},H \Big\} \Bigm| s,\piok{}{} \)
\end{align}
where 
\begin{align}
	F \stackrel{def}{=}& (H\sqrt{S} + \Bv H)L \label{eqn:F} \\
	D \stackrel{def}{=}& (J + \BpPlus)L \label{eqn:D}
\end{align}

and the conditional expectation $\E\(\cdot \mid s,\piok\)$ is with respect to the states $s_\tau$ encountered during the $k$-th episode upon following policy \piok{} after visiting state $s$ in timestep $t$. We use the convention that the terms in RHS corresponding to $n_k(s_{\tau},a) = 0$ are bounded by $H$, which is the maximum difference in $\vo{t}(s') - \vp{t}(s')$.
\end{proposition}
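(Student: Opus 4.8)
The plan is to prove the recursion by backward induction on $t$ for $\Delta_t(s) \stackrel{def}{=} \vo{t}(s) - \vp{t}(s)$, which is nonnegative outside the failure event by Proposition \ref{prop:AlgorithmBracketsVstar}; the base case $t=H+1$ is trivial since $\Delta_{H+1}\equiv 0$. For the inductive step I would first derive a one-step inequality. Both $\vo{t}(s)$ and $\vp{t}(s)$ are computed in Algorithm \ref{alg:Algorithm} with the \emph{same} action $a=\piok(s,t)$, the same empirical reward $\rohata{s}{t}$, and the same empirical kernel $\pohata{s}{t}$; subtracting the two update lines, and using $\vo{t}(s)\le H-t$ together with $\vp{t}(s)\ge 0$, the reward term and the backup cancel, the reward bonus doubles, and the two transition bonuses add, leaving
\begin{equation*}
\Delta_t(s) \le \min\Big\{ H,\; 2\br{s}{a} + \pohata{s}{t}^{\top}(\vo{t+1}-\vp{t+1}) + \bbonus(\pohata{s}{t},\vo{t+1},\vp{t+1}) + \bbonus(\pohata{s}{t},\vp{t+1},\vo{t+1}) \Big\}.
\end{equation*}

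Next I would expand the bonuses via Definition \ref{def:TransitionBonus}: since the correction part $\ExtraBonus$ is symmetric in $\vo{t+1},\vp{t+1}$, their sum is $\phi(\pohata{s}{t},\vo{t+1})+\phi(\pohata{s}{t},\vp{t+1})+2\,\ExtraBonus$. I would then apply Lemma \ref{lem:DeltaPhi} twice, with $V=\vo{t+1}$ and $V=\vp{t+1}$, to replace each $\phi(\pohata{s}{t},\cdot)$ by $\phi(\potruea{s}{t},\vtrue{t+1})$ at the cost of terms of order $\frac{\Bv\|\cdot-\vtrue{t+1}\|_{2,\hat p}}{\sqrt{n}}+\frac{\Bp+J}{n}$. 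Proposition \ref{prop:AlgorithmBracketsVstar} gives $\vp{t+1}\le\vtrue{t+1}\le\vo{t+1}$, so Lemma \ref{lem:OptimismOverestimate} bounds both $\|\vo{t+1}-\vtrue{t+1}\|_{2,\hat p}$ and $\|\vp{t+1}-\vtrue{t+1}\|_{2,\hat p}$ by $\devihat{t+1}$; the functional form (eq.\ \ref{eqn:AdmissiblePhiFunctionalForm} with $j\le J$) then gives $\phi(\potruea{s}{t},\vtrue{t+1})\le \frac{\gtruea{s}{t}}{\sqrt{n}}+\frac{J}{n}$. I would bound the reward bonus by $\br{s}{a}\lesssim \frac{L}{\sqrt{n}}+\frac{L^2}{n}$ using bounded rewards, and crudely bound $\devihat{t+1}\le H$, which is where the $\Bv H$ piece of $F$ originates.

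The crucial step, flagged in the statement, is passing from the empirical backup to the true one: writing $\pohata{s}{t}^{\top}\Delta_{t+1}=\potruea{s}{t}^{\top}\Delta_{t+1}+(\pohata{s}{t}-\potruea{s}{t})^{\top}\Delta_{t+1}$ and bounding the second piece by $\|\pohata{s}{t}-\potruea{s}{t}\|_1\,\|\Delta_{t+1}\|_\infty\le H\|\pohata{s}{t}-\potruea{s}{t}\|_1$, Weissman's inequality (eq.\ \ref{eqn:Weissman}) contributes a term of order $\frac{H\sqrt{S}L}{\sqrt{n}}$. Collecting contributions, the $H\sqrt{S}$ (Weissman) and $\Bv H$ (from $\devihat{t+1}\le H$) pieces assemble into $F=(H\sqrt{S}+\Bv H)L$, while $J$ and $\Bp$ (from Lemma \ref{lem:DeltaPhi}) assemble into $D=(J+\Bp)L$, yielding
\begin{equation*}
\Delta_t(s) \le \min\Big\{ H,\; \potruea{s}{t}^{\top}\Delta_{t+1} + \frac{c\,\gtruea{s}{t}}{\sqrt{\nsaa{s}{t}{k}}} + \frac{c\,F}{\sqrt{\nsaa{s}{t}{k}}} + \frac{c\,D}{\nsaa{s}{t}{k}} \Big\}
\end{equation*}
for a numerical constant $c$.

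Finally I would unroll this recursion. The tower property turns the true backup $\potruea{s}{t}^{\top}\E(\cdot\mid s',\piok)$ into the sum over $\tau=t+1,\dots,H$, and the elementary fact $\min\{H,a+b\}\le\min\{H,a\}+b$ for $b\ge 0$ peels off the current ($\tau=t$) term as $\min\{\cdot,H\}$ without disturbing the capped tail, giving the first displayed bound. The second follows by bounding $\gtruea{s_\tau}{\tau}\le\Bv H\le F$ via Corollary \ref{cor:MaxPhi} and using $\frac{D}{n}\le\frac{D}{\sqrt{n}}$, so the numerator collapses to $F+D$. The main obstacle is precisely the empirical-to-true transition conversion: one must verify the $L_1$ error times $H$ genuinely fits inside $F$ and respects the $\min\{\cdot,H\}$ cap; a secondary subtlety is threading those caps correctly through the induction.
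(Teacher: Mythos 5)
Your proposal is correct and follows essentially the same route as the paper's proof: the same one-step subtraction of the algorithm's upper/lower updates, the same use of Lemma \ref{lem:DeltaPhi} together with Proposition \ref{prop:AlgorithmBracketsVstar} and Lemma \ref{lem:OptimismOverestimate} to replace $\phi(\pohata{s}{t},\cdot)$ by $\phi(\potruea{s}{t},\vtrue{t+1})$, the same H\"older--Weissman treatment of $(\pohata{s}{t}-\potruea{s}{t})^{\top}(\vo{t+1}-\vp{t+1})$ producing the $H\sqrt{S}L$ piece of $F$, and the same unrolling by induction with Corollary \ref{cor:MaxPhi} and $\sqrt{n}\leq n$ for the second displayed inequality. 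The only difference is presentational: you thread the $\min\{\cdot,H\}$ cap explicitly through the recursion (via $\min\{H,a+b\}\leq\min\{H,a\}+b$), whereas the paper simply invokes the hard bound and drops it from the notation.
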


\begin{proof}
By the planning step for the action $a$ chosen by \Alg{} it holds that:
\begin{equation}
\begin{cases}
 \vo{t}(s) \leq \rhat{s}{a} + \br{s}{a} + \pohata{s}{t}^\top \vo{t+1} + \bbonus(\pohata{s}{t},\vo{t+1},\vp{t+1}) \\
 \vp{t}(s) \geq \rhat{s}{a} - \br{s}{a} + \pohata{s}{t}^\top \vp{t+1} - \bbonus(\pohata{s}{t},\vp{t+1},\vo{t+1})
\end{cases}
\end{equation}
Notice that the exploration bonuses $b^r_k,b^{pv}_k$ are bounded by $\tilde O(H)$ by construction in the algorithm, as well as the value function. This justifies the ``hard bound'' of $H$ that appears in equation \ref{eqn:DeltaOptimism}, which we drop for the rest of the proof to simplify the notation.
Subtraction yields:
\begin{equation}
\vo{t}(s) - \vp{t}(s) \leq \pohata{s}{t}^\top \( \vo{t+1} - \vp{t+1} \) + \bbonus(\pohata{s}{t},\vo{t+1},\vp{t+1}) +  \bbonus(\pohata{s}{t},\vp{t+1},\vo{t+1}) + 2\br{s}{a}.
\end{equation}
Now we substitute the definition of bonus (Definition \ref{def:TransitionBonus}) to obtain:
\begin{align}
\vo{t}(s) - \vp{t}(s) & \leq 2\br{s}{a} + \pohata{s}{t}^\top \( \vo{t+1} - \vp{t+1} \) \\
& +\phi(\pohata{s}{t},\vo{t+1}) + \ExtraBonus \\
& + \phi(\pohata{s}{t},\vp{t+1}) + \ExtraBonus.
\end{align}
With the help of lemma \ref{lem:DeltaPhi} we relate $\phi$ evaluated at the empirical quantities to the ``real'' $\phi(\potruea{s}{t},\vtrue{t+1})$, leading to the following upper bound of the above equation:
\begin{align}
& \leq  2\br{s}{a}+ \pohata{s}{t}^\top \( \vo{t+1} - \vp{t+1} \) + 2\phi(\potruea{s}{t},\vtrue{t+1}) +  4 \( \ExtraBonus \). 
\end{align}
Now, we want $\potruea{s}{t}^\top \( \vo{t+1} - \vp{t+1} \)$ to appear instead of $\pohata{s}{t}^\top \( \vo{t+1} - \vp{t+1} \)$ to do induction on the ``true'' MDP and so we add and subtract the former to obtain:
\begin{align}
& =\potruea{s}{t}^\top \( \vo{t+1} - \vp{t+1} \) + 2\br{s}{a} + \( \pohata{s}{t} - \potruea{s}{t} \)^\top \( \vo{t+1} - \vp{t+1} \) \\
& + 2\phi(\potruea{s}{t},\vtrue{t+1}) + 4 \( \ExtraBonus \).
\end{align}
and using the definition of $\phi$ we finally have:
\begin{align*}
& \leq 2\br{s}{a} + \potruea{s}{t}^\top \( \vo{t+1} - \vp{t+1} \) + \( \pohata{s}{t} - \potruea{s}{t} \)^\top \( \vo{t+1} - \vp{t+1} \) \\
& + 2\frac{g(\potruea{s}{t},\vtrue{t+1})}{\sqrt{\nsaa{s}{t}{k}}} + 2\frac{J}{\nsaa{s}{t}{k}}  + 4 \( \ExtraBonus \).
\numberthis{\label{eqn:InDeltaOptimismProofBeforeSimplify}}
\end{align*}
\info{
Quite Remarkably these are all lower order terms except the one that contains $g$}. 
To deal with term $\( \pohata{s}{t} - \potruea{s}{t} \)^\top \( \vo{t+1} - \vp{t+1} \)$ we use Holder's inequality and the fact that we are outside of the failure event so that equation \ref{eqn:Weissman} holds:
\begin{align}
& \leq 2\br{s}{a} + \potruea{s}{t}^\top \(\vo{t+1} - \vp{t+1} \) + \|\pohata{s}{t} - \potrue{s}{t} \|_1 \| \votrue{t+1} - \vp{t+1} \|_{\infty}  \\
& + 2\frac{g(\potruea{s}{t},\vtrue{t+1})}{\sqrt{\nsaa{s}{t}{k}}} + 2\frac{J}{\nsaa{s}{t}{k}}  + 4 \( \ExtraBonus \) \\
& \leq 2\br{s}{a} + \potruea{s}{t}^\top \(\vo{t+1} - \vp{t+1} \) + H \sqrt{\frac{S}{\nsaa{s}{t}{k}}} \times L  \\
& + 2\frac{g(\potruea{s}{t},\vtrue{t+1})}{\sqrt{\nsaa{s}{t}{k}}} + 2\frac{J}{\nsaa{s}{t}{k}}  + 4 \( \frac{4J + \BpPlus}{\nsaa{s}{t}{k}} + \frac{\Bv H}{\sqrt{\nsaa{s}{t}{k}}} \). \\
\end{align}
Induction gives the statement when coupled with the fact that $\vo{t+1}(s)-\vp{t+1}(s) \leq H$ and that $\br{s}{a} \leq \cc{\ref{prop:DeltaOptimism},3,1} \frac{L}{\sqrt{\nsaa{s}{t}{k}}}$ from the definition of Bernstein inequality. The second inequality in the theorem statement is given by $\sqrt{n}\leq n$ for $n\geq 1$ coupled with corollary \ref{cor:MaxPhi}. Log factors are incorporated into $L$.
\end{proof}

\section{The ``Good'' Set \texorpdfstring{\boldmath{\Ltk}}{}}
\label{app:Ltk}
We now introduce the set $L_{k}$. The construction is similar to \cite{Dann17} although we modify it here for our to handle the regret framework (as opposed to PAC) under stationary dynamics. The idea is to partition the state-action space at each episode into two sets, the set of episodes that have been visited sufficiently often (so that we can lower bound these visits by their expectations using standard concentration inequalities) and the set of $(s,a)$ that were not visited often enough to cause high regret. In particular:
\begin{definition}[The Good Set]
\label{def:TheGoodSet}
The set $\Ltk{}$ is defined as:
\begin{equation}
\label{eqn:TheGoodSet}
\Ltk \stackrel{def}{=} \Big\{ (s,a) \in \mathcal S \times \mathcal A : \frac{1}{4}\sum_{j < k} w_{j}(s,a) \geq \lnnsa + H \Big\}.
\end{equation}
\end{definition}
The above definition enables the following lemma that relates the realized number of visits to a state to their visit probabilities:
\begin{lemma}[Visitation Ratio]
\label{lem:VisitationRatio}
Outside the failure event if $(s,a) \in \Ltk$ then
\begin{equation}
\label{eqn:VisitationRatio}
n_{k}(s,a) \geq \frac{1}{4}\sum_{j \leq k} w_{j}(s,a)
\end{equation}
holds.
\end{lemma}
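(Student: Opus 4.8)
The plan is to combine the concentration bound on realized visit counts from the failure event (equation \ref{eqn:wnevent}) with the defining inequality of the Good Set (Definition \ref{def:TheGoodSet}). The only genuine subtlety is that equation \ref{eqn:wnevent} controls $n_k(s,a)$ in terms of $\sum_{j<k} w_j(s,a)$, whereas the target bound is in terms of $\sum_{j\leq k} w_j(s,a)$, so I first need to account for the missing episode-$k$ term. First I would abbreviate $W := \sum_{j<k} w_j(s,a)$ and observe that $w_k(s,a) = \sum_{\tau=1}^H w_{\tau k}(s,a) \leq H$, since it is a sum of $H$ visitation probabilities each bounded by $1$; hence $\sum_{j\leq k} w_j(s,a) = W + w_k(s,a)$. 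Outside the failure event, equation \ref{eqn:wnevent} gives $n_k(s,a) \geq \tfrac12 W - \lnnsa$, while membership $(s,a)\in\Ltk$ gives $\tfrac14 W \geq \lnnsa + H$ by Definition \ref{def:TheGoodSet}.

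Then I would chain these estimates as
\begin{align*}
n_k(s,a) \geq \tfrac12 W - \lnnsa = \tfrac14 W + \left(\tfrac14 W - \lnnsa\right) \geq \tfrac14 W + H \geq \tfrac14 W + \tfrac14 w_k(s,a) = \tfrac14 \sum_{j\leq k} w_j(s,a),
\end{align*}
where the second inequality uses $\tfrac14 W - \lnnsa \geq H$ (precisely the Good Set condition) and the third uses $w_k(s,a)\leq H$. This is exactly the claimed bound, so the lemma follows.

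The computation itself is routine, and the one point that deserves emphasis is why the extra additive $+H$ appears in the definition of $\Ltk$. Its role is exactly to make the Good Set inequality dominate not only the $\lnnsa$ deficit carried by equation \ref{eqn:wnevent} but also the unaccounted episode-$k$ contribution $w_k(s,a)\leq H$, so that upgrading the summation range from $j<k$ to $j\leq k$ can be absorbed without degrading the final constant $\tfrac14$. Thus the anticipated obstacle is not analytical depth but bookkeeping: choosing the slack in $\Ltk$ so that the $j<k$ concentration bound cleanly implies the $j\leq k$ conclusion, which the definition has been set up to guarantee.
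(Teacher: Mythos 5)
Your proof is correct and follows essentially the same route as the paper's: both invoke equation \ref{eqn:wnevent} outside the failure event, split $\tfrac12\sum_{j<k}w_j(s,a)$ into two quarters, use the Good Set condition of Definition \ref{def:TheGoodSet} to absorb the $\lnnsa$ deficit and produce an extra $+H$, and then use $w_k(s,a)\leq H$ to upgrade the sum from $j<k$ to $j\leq k$. Your closing remark about the role of the $+H$ slack in the definition of $\Ltk$ is exactly the bookkeeping the paper's proof relies on.
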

\begin{proof}
Outside the failure event equation \ref{eqn:wnevent} justifies the first passage below:
\begin{align}
n_k(s,a) & \geq  \frac{1}{2}\sum_{j< k}w_{j}(s,a) - \lnnsa \\
& = \frac{1}{4}\sum_{j< k}w_{j}(s,a) +  \frac{1}{4}\sum_{j< k}w_{j}(s,a) - \lnnsa \geq \frac{1}{4}\sum_{j< k}w_{j}(s,a) + H \geq \frac{1}{4}\sum_{j< k}w_{j}(s,a) + w_k(s,a) \geq \frac{1}{4}\sum_{j\leq k}w_{j}(s,a)
\end{align}
while the second inequality holds because $(s,a) \in \Ltk$ by assumption and the third because $w_k(s,a) \leq H$.
\end{proof}
Finally, the following corollary ensures that if $(s,a) \not \in \Ltk$ then it will contribute very little to the regret:
\begin{lemma}[Minimal Contribution]
\label{lem:MinimalContribution}
It holds that:
\begin{align}
\sumk \sumt{1} \sumsnLk \visita{s}{t}{k} \leq   \cc{\ref{lem:MinimalContribution},3,1}SAHL.	
\end{align}
\end{lemma}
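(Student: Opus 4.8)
The plan is to collapse the triple sum to a sum over state--action pairs and then exploit the monotonicity baked into Definition~\ref{def:TheGoodSet}. First I would perform the inner sum over timesteps: since $\sumt{1}\visita{s}{t}{k}$ is exactly the total per-episode visitation probability of the pair $(s,a)$, which I will write as $w_k(s,a)$, the left-hand side equals $\sum_{(s,a)}\sum_{k\,:\,(s,a)\notin\Ltk} w_k(s,a)$. Hence it suffices to bound, for each fixed pair $(s,a)$, the cumulated visitation probability over exactly those episodes in which the pair is \emph{not} in the good set, and finally multiply by the number $SA$ of pairs. Note the statement carries no ``outside the failure event'' qualifier, so this is a deterministic fact about the visitation probabilities and no concentration inequality is needed.

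The key structural observation is that the partial sums $\sum_{j<k} w_j(s,a)$ are non-decreasing in $k$, because each $w_j(s,a)\ge 0$. Consequently the threshold condition defining $\Ltk$ can fail only on an initial segment of episodes: the set $\{k:(s,a)\notin\Ltk\}$ has the form $\{1,\dots,K\}$ for some last ``bad'' episode $K=K_{s,a}$ (and is empty if the pair is always good). For this final bad episode the definition of $\Ltk$ still gives $\tfrac14\sum_{j<K} w_j(s,a) < \lnnsa + H$.

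I would then split $\sum_{k=1}^{K} w_k(s,a) = \sum_{j<K} w_j(s,a) + w_K(s,a)$ and bound the two pieces separately: the cumulative term is at most $4(\lnnsa+H)$ by the displayed inequality above, while $w_K(s,a)\le H$ since it is a sum of $H$ per-timestep visitation probabilities each bounded by one. This yields a per-pair bound of $4\lnnsa + 5H$, which is $O(HL)$ upon recalling $\lnnsa = H\ln\frac{SAH}{\delta'}$ and absorbing the logarithmic factor into $L$ as done elsewhere in the appendix. Summing over all $SA$ pairs gives the claimed $\cc{\ref{lem:MinimalContribution},3,1}SAHL$.

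This argument is essentially bookkeeping and I do not expect a real obstacle; the one point that must be handled with care is the claim that the bad episodes form a prefix. That is precisely where monotonicity of the partial sums of the nonnegative $w_j(s,a)$ is used, and it is what prevents any double counting across episodes and keeps the telescoping clean.
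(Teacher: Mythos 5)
Your proof is correct and follows essentially the same route as the paper's: collapse the timestep sum into per-episode visitation weights $w_k(s,a)$, use monotonicity of the nonnegative partial sums so that the bad episodes for each pair form a prefix, and bound the cumulated weight through the last bad episode via the threshold inequality defining $\Ltk$, then sum over the $SA$ pairs. If anything, your explicit split of the final bad episode's weight $w_{K_{s,a}}(s,a)\leq H$ is slightly more careful than the paper, which states the threshold inequality with $\sum_{j\leq k}$ rather than the definition's $\sum_{j<k}$ and thereby absorbs that step implicitly into the constant.
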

\info{Can Remove an H here and down in the lower order term. This shows up as one less $\sqrt{H}$ in the main result; Remove $L_{k}$ and put $L_k$, see cmab analysis}
\begin{proof}
By definition \ref{def:TheGoodSet}, if $(s,a) \not \in \Ltk$ then
\begin{equation}
\label{eqn:minimalcontri}
\frac{1}{4}\sum_{j \leq k} w_{j}(s,a) < \lnnsa + H
\end{equation} 
holds. Now sum over the $(s,a)$ pairs not in $\Ltk$, the timesteps $t$ and episodes $k$ to obtain:
\begin{align}
\sumk \sumt{1} \sumsnLk \visita{s}{t}{k} =  \sumsa \sumk \visita{s}{}{k} \1\{(s,a)\not \in \Ltk\} \leq \sumsa \( 4\lnnsa + 4H \) \leq \cc{\ref{lem:MinimalContribution},3,1}SAH\ll
\end{align}

\end{proof}
\section{Regret Analysis}
\label{app:RegretAnalysis}

We begin our regret analysis of \Alg{}. We will carry out the analysis outside of the failure event to derive a high probability regret bound.
\MainResult{\label{thm:MainResult}}{\label{eqn:Complexity}}{\label{eqn:ComplexityPi}}{\label{eqn:ProblemDependentComplexityBound}}{\label{eqn:AlgorithmDependentComplexityBound}}

Note that $F$ and $D$ are identical to the definitions given in Equations~\ref{eqn:F} and~\ref{eqn:D} respectively. 
\begin{proof}
Outside of the failure event proposition \ref{prop:AlgorithmBracketsVstar} guarantees optimism and thus:
\begin{equation}
\label{eqn:Optimism}	
\realoptimalitygap \leq \optimalitygap
\end{equation}

holds for any state and time, and in particular in particular for $t=1$. Lemma E.15 in \cite{Dann17} is a standard decomposition that allows us to claim:
\begin{align}
& \textsc{Regret}(K) \stackrel{def}{=}   \sum_{k = 1}^K V^{\pi^*}_1(s) - V^{\piok}_1(s) \stackrel{Optimism}{\leq} \sum_{k = 1}^K \overline V^{\piok}_1(s) - V^{\piok}_1(s) \\
& \leq \sumk \sumt{1} \E \emmalemma{t} \\
&  = \sumk \sumt{1} \sumsLk \visita{s}{t}{k} \( \( \roa{s}{t} - \rotruea{s}{s} \) + \( \poa{s}{t} - \potruea{s}{t} \)^{\top} \vo{t+1} \) \\
& + \sumk \sumt{1}   \sumsnLk \visita{s}{t}{k} \underbrace{\( \( \roa{s}{t} - \rotruea{s}{s} \) + \( \poa{s}{t} - \potruea{s}{t} \)^{\top}\vo{t+1}    \)}_{\leq H} \\
&  = \sumk \sumt{1} \sumsLk \visita{s}{t}{k} \( \( \roa{s}{t} - \rotruea{s}{s} \) + \( \poa{s}{t} - \potruea{s}{t} \)^{\top} \vo{t+1} \) + c_{200,1} LSAH^2, 
\end{align}
for some constant $c_{200,1}$, where the bound in the last passage follows from Lemma \ref{lem:MinimalContribution}.
By adding and subtracting $\pohata{s}{t}^\top\vo{t+1}$ and also $\potruea{s}{t}^\top\vtrue{t+1}$ to the above we get the upper bound below:
\begin{align*}
& \leq \sum_{k=1}^K \sum_{t \in [H]} \sum_{(s,a)} w_{tk}(s,a) \Biggm( \underbrace{\(\tilde r_k(s,a) - r(s,a) \)}_{\text{Reward Estimation and Optimism}} + \underbrace{\(\tilde p_k(s,a) - \hat p(s,a) \)^\top \overline V_{t+1}^{\piok}}_{\text{Transition Dynamics Optimism}} \\ +  & \underbrace{\(\hat p_k(s,a) - p(s,a) \)^\top V_{t+1}^{*}}_{\text{Transition Dynamics Estimation}} + \underbrace{\(\hat p_k(s,a) - p(s,a) \)^\top \( \overline V_{t+1}^{\piok} - V^*_{t+1} \)}_{\text{Lower Order Term}}  \Biggm) + c_{200,1} LSAH^2. \\
\numberthis{\label{eqn:chrisdecomposition}}
\end{align*}	
Here $\visita{s}{t}{k}$ is the visitation probability to the $(s,a)$ pair at timestep $t$ in episode $k$.
Each term is bounded in lemmata \ref{lem:RewardEstimationAndOptimism},\ref{lem:TransitionDynamicsEstimation},\ref{lem:TransitionDynamicOptimims} and \ref{lem:LowerOrderTerm} to obtain
\begin{align}
& \leq \cc{\ref{thm:MainResult},1,3} \Bigm( \Bigm( \ROEbound + \TDObound + \\
	& \LOTbound+SAH^2 \Bigm)\ll^2)  \\
	& \leq \cc{\ref{thm:MainResult},1,4} \(  (\sqrt{\ComplexityReward{} SAT} + \TDEbound + \Bp SA +\sqrt{S}\DeltaGbound)\ll^3\) 
\end{align}
after simplification. 
Cauchy-Schwartz immediately implies the following bound:
\begin{align}
\leq \cc{\ref{thm:MainResult},1,5} \( (\sqrt{(\ComplexityReward{}+ \Complexity{}) SAT}  +\sqrt{S}\DeltaGbound)\ll^3 \) 
\end{align}
after absorbing the constant into the lower order term.

We now do the same argument but instead use the variants of Lemmas~\ref{lem:TransitionDynamicsEstimation} and \ref{lem:TransitionDynamicOptimims}  that express their bounds as a function of $\ComplexityPi$. This yields: 
\begin{align}
\leq  \cc{\ref{thm:MainResult},1,6} \((\sqrt{(\ComplexityReward{}+ \ComplexityPi{}) SAT}  +\sqrt{S}\DeltaGbound)\ll^3 + \Bv\sqrt{SAH^2\Regret(K)\ll} \).
\end{align}
We can re-express this regret bound as follows. Outside of the failure event, and where $s_{1k}$ is the arbitrary starting state in episode $k$), we have:
\begin{align}
\sumk \( \vtrue{1}(s_{1k}) - \votrue{1k}(s_{1k}) \) \stackrel{def}{=} \Regret(K) \leq \cc{\ref{thm:MainResult},1,6}  Y + \cc{\ref{thm:MainResult},1,6} M\sqrt{\Regret(k)}	
\end{align}
with 
\begin{align}
Y & = 	(\sqrt{(\ComplexityReward{}+ \ComplexityPi{}) SAT}  +\sqrt{S}\DeltaGbound)\ll^3 \\
M & = \Bv\sqrt{SAH^2\ll}.
\end{align}
This is satisfied as long as:
\begin{align}
\Regret(K)- \cc{\ref{thm:MainResult},1,6} M\sqrt{\Regret(k)}  - \cc{\ref{thm:MainResult},1,6}  Y \leq  0.
\end{align}
We can solve the quadratic equation (quadratic in $\sqrt{\Regret(K)}$). This implies that the largest that $\sqrt{\Regret(K)}$ can be is:
\begin{align}
	\sqrt{\Regret(K)} \leq \frac{1}{2}\(  \cc{\ref{thm:MainResult},1,6}  M + \sqrt{\cc{\ref{thm:MainResult},1,6}^2  M^2 + 4\cc{\ref{thm:MainResult},1,6} Y }\).
\end{align}
By squaring and applying Cauchy-Schwartz we obtain 
\begin{align}
	\Regret(K) \leq  \cc{\ref{thm:MainResult},1,6}^2  M^2 + \cc{\ref{thm:MainResult},1,6}^2  M^2 + 4\cc{\ref{thm:MainResult},1,6} Y, 
\end{align}
completing the proof of the main result.
\end{proof}

\subsection{Regret Bounds with Bernstein Inequality}
We now specialize the result of Theorem \ref{thm:MainResult} when Bernstein Inequality is used. First we check that Bernstein's Inequality satisfies assumption \ref{ass:ConfidenceIntervals} and \ref{ass:FiniteTimeBonusBound}. Bernstein's inequality guarantees that with probability at least $1-{\delta'}$ we have that:
\begin{equation}
\label{eqn:Bernstein}
| \(\hat p_k(s,a) - p(s,a)\)^\top\vtrue{t+1} | \leq \Bernstein{} \stackrel{def}{=}  \phi(p(s,a),\vtrue{t+1}).  
\end{equation}
after a union bound on the number of states $S$,  actions $A$ and visits $1,...,T$ to the specific state-action pair $(s,a)$.

Proposition \ref{prop:BernsteinIsAdmissible} combined with Theorem \ref{thm:MainResult} and a recursive application of the law of total variance is the proof of the following proposition:
\BernsteinResult{}
\begin{proof}
Proposition \ref{prop:BernsteinIsAdmissible} shows that Bernstein Inequality of equation \ref{eqn:BernsteinInequality} is admissible with $\Bp = \tilde O \( H \)$, $\Bv = \tilde O \( 1\)$, $J = \tilde O \(H\)$ so that $F+D = c_{200,3}\ll H\sqrt{S}$, for some constant $c_{200,3}$, by direct computation. This allows us to apply Theorem \ref{thm:MainResult} and compute an explicit form for the lower order term and the constants $\Complexity{}$, $\ComplexityPi{}$.

To obtain the problem dependent bound notice that with the definition of $\ComplexityQ{}$ in the main text in equation \ref{main:Complexity} and of the $\Qrv_t(\cdot,\cdot)$ random variables in the same section \info{be more precise} in the main text:
\begin{align}
\ComplexityReward{} + \Complexity{} & \stackrel{def}{=} \\
& = \frac{1}{T} \sumk \sumt{1}\E_{(s,a) \sim \piok} \( \Var \(R(s,a) \mid (s,a) \) \)   +  \frac{1}{T} \sumk \sumt{1}\E_{(s,a) \sim \piok} \Var \(  \vtrue{t+1}(s^+) \mid (s,a)\) \\
& = \frac{1}{T} \sumk \sumt{1}\E_{(s,a) \sim \piok} \( \Var \(R(s,a) \mid (s,a) \)  + \Var \(  \vtrue{t+1}(s^+) \mid (s,a)\) \) \\
& \stackrel{(a)}{=} \frac{1}{T} \sumk \sumt{1}\E_{(s,a) \sim \piok} \( \Var \(R(s,a)  + \vtrue{t+1}(s^+) \mid (s,a)\) \) \\
& = \frac{1}{T} \sumk \sumt{1}\E_{(s,a) \sim \piok} \( \Var \(\Qrv_t(s,a) \mid (s,a)\) \) \stackrel{def}{\leq} \ComplexityQ{}
\end{align}
Notice that $(a)$ follows by independence of the sampled reward and transition given an $(s,a)$ pair. This gives the problem dependent bound (also in the main text, Theorem \ref{thm:MainResultMainText}). 

To obtain the problem-independent worst case guarantee we use a Law of Total Variance argument.
Using the variant given by equation \ref{eqn:AlgorithmDependentComplexityBound} in Theorem \ref{thm:MainResult} we need to bound :
\begin{align}
\ComplexityPi{} & = \frac{1}{T}\sumk\sumt{1}\E_{\piok}\(  \Var_{\piok} \(\votrue{t+1}(s_{t+1}) \Big| s_t \) \Big| s_1 \) \\ 
& = \frac{1}{T}\sumk \E_{\piok}\( \(\sumt{1} \rotrue{s_t}{t} - \votrue{1}(s_1) \)^2 \Big| s_1 \)  \leq \frac{1}{T}K\MaxReturn^2 = \frac{\MaxReturn^2}{H}
\end{align}
where the second equality follows from a law of total variance argument (see \cite{Azar17} for example) reproduced in lemma \ref{lem:LTV} yielding the stated worst case bound.
Expression $\E_{\piok}\( \(\sumt{1} \rotrue{s_t}{t} - \votrue{1}(s_1) \)^2 \Big| s_1 \)$ is the variance of the returns (with fixed rewards) induced by the MDP dynamics upon starting from $s_1$ and following $\piok$. Since the random per episode return $\sumt{1} R(s_t,\piok(s_t)) \leq \MaxReturn$, it must be that $\sumt{1} r(s_t,\piok(s_t)) \leq \MaxReturn$ as well.
The variance of a random variable is upper bounded by the range square, justifying the inequality. Finally, 
plugging in $\ComplexityPi{}$ and $\ComplexityReward{}$ into equation \ref{eqn:AlgorithmDependentComplexityBound} in Theorem \ref{thm:MainResult} concludes the proof of the result.

This proposition is also restated in the main text as Theorem \ref{thm:MainResultMainText}.
\end{proof}

\subsection{Regret Bound in Deterministic Domain with Bernstein Inequality}
\label{sec:DeterministicDomainAnalysis}
We now examine the regret of \Alg{} when used with Bernstein Inequality in deterministic domains.

\DeterministicDomainRegret{}
\begin{proof}
Define as $\mathcal{N}$ as the set of episodes in which the agent visits an $(s,a)$ that is not in $\Ltk$. Since the domain is deterministic, each time an $(s,a)$ pair is visited we have $\visita{s}{t}{k} = 1$ 
and hence there can be at most $\tilde O(H)$ episodes in which $(s,a)$ is visited but $(s,a) \not \in \Ltk$. Since there are at most $SA$ state and action pairs we have that there are at most $\tilde O(SAH)$ such episodes, with a regret at most $\tilde O(SAH^2)$. Therefore for any starting state $s_k$:
\begin{equation}
\sum_{k \in \mathcal N} \vtrue{1}(s_k) - \votrue{1}(s_k) \leq \cc{\ref{prop:DeterministicDomainRegret},2,3}  SAH^2.
\end{equation}
Under the episodes not in $\mathcal N$ there is zero probability of visiting a new $(s,a)$ pair, and therefore the maximum likelihood estimate the transition probability is exact. That is, using optimism $(a)$:
\begin{align}
	\textsc{Regret}(K) & = \sumk \vtrue{1}(s_k) - \votrue{1}(s_k) \stackrel{(a)}{\leq} \sumk \vo{1}(s_k) - \votrue{1}(s_k) \\
	& = \sum_{k \not \in \mathcal N} \vo{1}(s_k) - \votrue{1}(s_k) + \sum_{k \in \mathcal N} \vo{1}(s_k) - \votrue{1}(s_k) \\	
	& \leq \sum_{k \not \in \mathcal N} \vtrue{1}(s_k) - \votrue{1}(s_k) +  \cc{\ref{prop:DeterministicDomainRegret},2,3} SAH^2 \\
	& \leq \sum_{k \not \in \mathcal N} \sumt{1} \sumsa \visita{s}{t}{k} \(\tilde r(s,a) - r(s,a) + \bonus{s}\) +  \cc{\ref{prop:DeterministicDomainRegret},2,3}SAH^2.
\end{align}
\paragraph{Bounding the Rewards}
An application of lemma \ref{lem:RewardEstimationAndOptimism} yields:
\begin{equation}
\sum_{k \not \in \mathcal N} \sumt{1} \sumsa \visita{s}{t}{k} \(\tilde r(s,a) - r(s,a)\) \leq \cc{\ref{prop:DeterministicDomainRegret},2,1} SA\ll^3	
\end{equation}
since $\ComplexityReward{} = 0$. The lemma can be applied because if an episode is not in $\mathcal N$ then all $(s,a)\in \Ltk$. 

For the rest of the proof we focus on bounding the exploration bonus:
\begin{align}
	& \sum_{k \not \in \mathcal N} \sumt{1} \sumsa \visita{s}{t}{k} \bonus{s}  \\
	& \leq \sum_{k \not \in \mathcal N} \sumt{1} \sumsa \visita{s}{t}{k} b(\pohata{s}{t},\vo{t+1},\vp{t+1}) \\
	& \leq \sum_{k \not \in \mathcal N} \sumt{1} \sumsa \visita{s}{t}{k} \( \phi(\pohata{s}{t},\vo{t+1}) + \ExtraBonus \) . 
\end{align}
using the definition of bonus \ref{def:TransitionBonus} (here $\phi(\cdot,\cdot)$ is the true Bernstein Inequality evaluated with the empirical quantities).
Before bounding the above term we need to understand how the correction term behaves.

\paragraph{Bounding the Delta Optimism in Deterministic Domains}
We wish to show that
\begin{equation}
\label{eqn:DeltaOptimismDeterministic}
\vo{t}(s) - \vp{t}(s) \leq C_1 \sumtau{t} \frac{H}{\nsaa{s_{\tau}}{t}{k}}\times \ll	
\end{equation}
where $C_1$ is some absolute numeric constant and $s_t$ are the states encountered upon following the agent chosen policy. To achieve this proceed as in proposition \ref{prop:DeltaOptimism} until equation \ref{eqn:InDeltaOptimismProofBeforeSimplify} to get:
\begin{align}
\vo{t}(s) - \vp{t}(s) & \leq 2\br{s}{a} + \potruea{s}{t}^\top \( \vo{t+1} - \vp{t+1} \) + \( \pohata{s}{t} - \potruea{s}{t} \)^\top \( \vo{t+1} - \vp{t+1} \) \\
& + 2\frac{g(\potruea{s}{t},\vtrue{t+1})}{\sqrt{\nsaa{s}{t}{k}}} + 2\frac{J}{\nsaa{s}{t}{k}}  + 4 \( \ExtraBonus \) \\
& \stackrel{(a)}{=} 2\br{s}{a} + \potruea{s}{t}^\top \( \vo{t+1} - \vp{t+1} \) + 2\frac{J}{\nsaa{s}{t}{k}}  + 4 \( \ExtraBonus \)
\end{align}
where $(a)$ follows from the fact that the maximum likelihood is exact for episodes not in $\mathcal{N}$ and so the relevant terms above vanish from the expression. If Bernstein Inequality is used then as explained in proposition \ref{prop:BernsteinIsAdmissible} $\Bv = \tilde O(1),\Bp = \tilde O(H), J=\tilde O(H)$ and also $\br{s}{a} = C_1/\nsaa{s}{t}{k} \times \polylog$ since both the variance and the empirical variance are zero. Therefore for appropriate constants $C_1,C_2,\dots$ the above inequality can be written as:
\begin{align}
& = \( \frac{C_1+C_2H}{\nsaa{s}{t}{k}} + (\vo{t+1}(s_{t+1}) - \vp{t+1}(s_{t+1})) + C_3\sqrt{\frac{(\vo{t+1}(s_{t+1}) - \vp{t+1}(s_{t+1}))^2}{\nsaa{s}{t}{k}}} \) \ll \\
& = \( \frac{C_1+C_2H}{\nsaa{s}{t}{k}} + (\vo{t+1}(s_{t+1}) - \vp{t+1}(s_{t+1})) + C_3\frac{\vo{t+1}(s_{t+1}) - \vp{t+1}(s_{t+1})}{\sqrt{\nsaa{s}{t}{k}}} \) \ll \\
& \leq \( \frac{C_1+C_2H}{\sqrt{\nsaa{s}{t}{k}}} + (\vo{t+1}(s_{t+1}) - \vp{t+1}(s_{t+1})) + C_3 \frac{H}{\sqrt{\nsaa{s}{t}{k}}} \) \ll \\
& \leq (\vo{t+1}(s_{t+1}) - \vp{t+1}(s_{t+1})) + C_4 \frac{H}{\sqrt{\nsaa{s}{t}{k}}} \ll \\
& \leq \sumtau{t} C_4 \frac{H}{\sqrt{\nsaa{s_{\tau}}{t}{k}}} \ll \\
\end{align}
The last passage follows by induction and completes the proof of equation \ref{eqn:DeltaOptimismDeterministic} for episodes $\not \in \mathcal{N}$. Equipped with this it remains to bound the bonus on the transition dynamics.

\paragraph{Bounding the Transition Dynamics}
Proceed as in lemma \ref{lem:TransitionDynamicOptimims} up to equation \ref{eqn:InOptimismForTransitionDynamicsKeyEquation}. Since we are using Bernstein Inequality, $\Complexity{} = 0, \Bp = \tilde O(H),J=\tilde O(H)$ in deterministic domains and so the regret reads:
\begin{align}
& \sum_{k \not \in \mathcal N} \sumt{1} \sumsa \visita{s}{t}{k} \bonus{s} \leq \cc{\ref{prop:DeterministicDomainRegret},2,6} \Bigm( SAH  + \\
&  + \underbrace{\Bv \sqrt{\sum_{k \not \in \mathcal N} \sumt{1}\sumsLk \frac{\visita{s}{t}{k}}{\nsaa{s}{t}{k}}}}_{\tilde O (SA)} \times \sqrt{ \sum_{k \not \in \mathcal N} \sumt{1}\sumsLk \visita{s}{t}{k} \devihat{t+1}^2} \\
&  \cc{\ref{prop:DeterministicDomainRegret},2,7} SAH + \cc{\ref{prop:DeterministicDomainRegret},2,8}\sqrt{SA \ll} \times \sqrt{ \sum_{k \not \in \mathcal N} \sumt{1}\sumsLk \visita{s}{t}{k} \devihat{t+1}^2}
\Bigm)
\end{align}
We focus on the last factor:
\begin{align}
&\sqrt{ \sum_{k \not \in \mathcal N} \sumt{1}\sumsLk \visita{s}{t}{k} \pohata{s}{t}(\vo{t+1} - \vp{t+1})^2} = \sqrt{ \sum_{k \not \in \mathcal N} \sumt{1}\sumsLk (\vo{t+1}(s_{t+1}) - \vp{t+1}(s_{t+1}))^2} \\
& \leq \cc{\ref{prop:DeterministicDomainRegret},2,20} \sqrt{ \sum_{k \not \in \mathcal N} \sumt{1}\sumsLk \(\sumtau{t} \frac{H}{\sqrt{\nsaa{s_{\tau}}{t}{k}}}\)^2} \\
& \leq  \cc{\ref{prop:DeterministicDomainRegret},2,21} \sqrt{H \sum_{k \not \in \mathcal N} \sumt{1}\sumsLk \sumtau{t} \(\frac{H}{\sqrt{\nsaa{s_{\tau}}{t}{k}}}\)^2} \\
& \leq  \cc{\ref{prop:DeterministicDomainRegret},2,22} \sqrt{H^3 \sum_{k \not \in \mathcal N} \sumt{1}\sumsLk \sumtau{t} \(\frac{1}{\sqrt{\nsaa{s_{\tau}}{t}{k}}}\)^2} \\
& \leq \cc{\ref{prop:DeterministicDomainRegret},2,23} \sqrt{H^3 \sum_{k \not \in \mathcal N} \sumt{1}\sumsLk \sumtau{1} \(\frac{1}{\sqrt{\nsaa{s_{\tau}}{t}{k}}}\)^2} \\
& \leq  \cc{\ref{prop:DeterministicDomainRegret},2,24} \sqrt{H^4 \sum_{k \not \in \mathcal N} \sumt{1}\sumsLk \frac{\visita{s}{t}{k}}{\nsaa{s}{t}{k}}} \\
& \leq \cc{\ref{prop:DeterministicDomainRegret},2,25} \sqrt{SA\ll}H^2 
\end{align}
Thus
\begin{align}
& 	\sum_{k \not \in \mathcal N} \sumt{1} \sumsa \visita{s}{t}{k} \bonus{s} \leq \cc{\ref{prop:DeterministicDomainRegret},2,7} SAH\polylog +\cc{\ref{prop:DeterministicDomainRegret},2,9} \sqrt{SA\ll}  \times \cc{\ref{prop:DeterministicDomainRegret},2,10} \sqrt{SA \ll}H^2\\
	 & \leq \cc{\ref{prop:DeterministicDomainRegret},2,11} SAH^2\ll.
\end{align}
\paragraph{Concluding the Proof of the Regret Bound on Deterministic Domain}
Summing the regret for episodes not in $\mathcal N$, the reward optimism and the transition dynamics optimism one obtains the final regret bound of order:

\begin{align}
	\leq \cc{\ref{prop:DeterministicDomainRegret},2,30} SAH^2 \ll^3.
\end{align}
Notice that there are no failure events to consider, so this is a deterministic statement.
\end{proof}

\subsection{Reward Estimation and Optimism}

\begin{lemma}[Reward Estimation and Optimism]
\label{lem:RewardEstimationAndOptimism}
Outside of the failure event it holds that:
\begin{eqnarray}
\sumk \sumt{1}\sumsLk  \visita{s}{t}{k} \( \reward{s} \)  &\leq&  \cc{\ref{lem:RewardEstimationAndOptimism},3,1} \ll^3 \( \left( \ROEbound \right) \)  \\
&=& \cc{\ref{lem:RewardEstimationAndOptimism},3,2}  \ll^3 \times \left(\sqrt{\frac{\MaxReturn^2}{H}SAT} + SA \right).
\end{eqnarray}
where 
\begin{equation}
\label{eqn:ComplexityReward}
\ComplexityReward{} = \frac{1}{T} \( \sumk \sumt{1}\sum_{(s,a)\in \Ltk}  \visita{s}{t}{k} \Var R(s,a) \) \leq \frac{\MaxReturn^2}{H}
\end{equation}

\end{lemma}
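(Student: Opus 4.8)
The plan is to invoke optimism to replace the estimation term by the reward exploration bonus, then split the empirical-Bernstein bonus into a dominant (variance) term and a $1/n$ lower-order term, bounding each via the good-set machinery of Appendix~\ref{app:Ltk}. Outside the failure event, Lemma~\ref{lem:RewardBonusIsOptimistic} gives $\hat r_k(s,a)-r(s,a)\le\br{s}{a}$; since the planning step sets $\tilde r_k(s,a)=\hat r_k(s,a)+\br{s}{a}$, we get $\reward{s}\le 2\br{s}{a}$, so it suffices to bound $\sumk\sumt{1}\sumsLk\visita{s}{t}{k}\br{s}{a}$ with $\br{s}{a}=\EmpiricalBernsteinRewardsAlgorithm{}$. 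To pass from the empirical to the true variance I would apply Lemma~\ref{lem:DeltaPhiReward}, which gives $\sqrt{\widehat\Var R(s,a)}\le\sqrt{\Var R(s,a)}+\sqrt{4\ll/\nsaa{s}{t}{k}}$ and hence $\sqrt{\widehat\Var R(s,a)/\nsaa{s}{t}{k}}\le\sqrt{\Var R(s,a)/\nsaa{s}{t}{k}}+2\sqrt{\ll}/\nsaa{s}{t}{k}$; the extra summand, together with the $1/(\nsaa{s}{t}{k}-1)$ term already present in $\br{s}{a}$, is folded into the lower-order term below.

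For the dominant part $\sumk\sumt{1}\sumsLk\visita{s}{t}{k}\sqrt{\Var R(s,a)/\nsaa{s}{t}{k}}$ I would apply Cauchy--Schwarz, writing each summand as $\sqrt{\visita{s}{t}{k}\Var R(s,a)}\cdot\sqrt{\visita{s}{t}{k}/\nsaa{s}{t}{k}}$, to factor the sum as $\sqrt{\sumk\sumt{1}\sumsLk\visita{s}{t}{k}\Var R(s,a)}$ times $\sqrt{\sumk\sumt{1}\sumsLk\visita{s}{t}{k}/\nsaa{s}{t}{k}}$. The first factor is exactly $\sqrt{T\,\ComplexityReward{}}$ by the definition of $\ComplexityReward{}$. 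For the second factor, Lemma~\ref{lem:VisitationRatio} gives $\nsaa{s}{t}{k}\ge\frac14\sum_{j\le k}w_j(s,a)$ for $(s,a)\in\Ltk$, so writing $w_k(s,a)=\sum_t\visita{s}{t}{k}$ the factor is controlled by $\sum_k w_k(s,a)/\sum_{j\le k}w_j(s,a)$, a harmonic-type sum bounded by $\tilde O(1)$ per pair and hence $\tilde O(SA)$ after summing over all pairs. This produces the dominant $\tilde O(\sqrt{\ComplexityReward{}SAT})$ contribution.

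The remaining $1/\nsaa{s}{t}{k}$ terms are handled by the identical visitation-ratio estimate, giving $\sumk\sumt{1}\sumsLk\visita{s}{t}{k}/\nsaa{s}{t}{k}=\tilde O(SA)$, i.e.\ an $\tilde O(SA)$ lower-order regret; collecting everything yields the first displayed bound $\cc{\ref{lem:RewardEstimationAndOptimism},3,1}\ll^3(\ROEbound)$. For the second displayed form I would establish the inequality $\ComplexityReward{}\le\MaxReturn^2/H$ of equation~\ref{eqn:ComplexityReward} through a law-of-total-variance argument (Lemma~\ref{lem:LTV}): for a fixed episode the summed conditional reward variances $\sumt{1}\E_{\piok}\Var\!\big(R(s_t,a_t)\mid s_t,a_t\big)$ are bounded by the variance of the total episode return, which, since that return is nonnegative and at most $\MaxReturn$, is at most $\MaxReturn^2$; summing over the $K$ episodes and dividing by $T=KH$ gives $\ComplexityReward{}\le\MaxReturn^2/H$, so that $\sqrt{\ComplexityReward{}SAT}\le\sqrt{\tfrac{\MaxReturn^2}{H}SAT}$.

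The main obstacle is the harmonic-sum step: one must check that the restriction $(s,a)\in\Ltk$ is precisely what makes Lemma~\ref{lem:VisitationRatio} applicable, so that $\sum_k w_k(s,a)/\sum_{j\le k}w_j(s,a)$ collapses to a single logarithmic factor rather than growing with $K$. The law-of-total-variance bound for $\ComplexityReward{}\le\MaxReturn^2/H$ is the other delicate point, as it requires controlling a sum of per-step conditional variances by the variance of the whole return.
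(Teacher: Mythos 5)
Your proposal is correct and follows essentially the same route as the paper's proof: optimism reduces the reward term to the bonus $\br{s}{a}$, Lemma~\ref{lem:DeltaPhiReward} converts the empirical variance to the true variance, Cauchy--Schwarz plus the visitation-ratio/harmonic-sum bound (which the paper packages as Lemma~\ref{lem:wn}) yields $\tilde O(\sqrt{\ComplexityReward{}SAT}+SA)$, and the bound $\ComplexityReward{}\le\MaxReturn^2/H$ follows from the conditional independence of rewards given the trajectory together with the almost-sure bound $\MaxReturn$ on the episode return. The only cosmetic difference is your citation of Lemma~\ref{lem:LTV}, which as stated concerns next-state \emph{value-function} variance rather than reward variance; the paper instead proves the reward-variance step directly by bounding the expected conditional variance by the conditional second moment of the return, which is exactly the argument you describe in words.
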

\begin{proof}
The optimistic reward is obtained by adding the reward bonus the empirical reward estimate:
\begin{align}
& \sumk \sumt{1}\sumsLk  \visita{s}{t}{k} \( \reward{s} \)  \leq \sumk \sumt{1}\sumsLk \visita{s}{t}{k} \br{s}{a} \\
& \leq \cc{\ref{lem:RewardEstimationAndOptimism},3,4} \sumk \sumt{1}\sumsLk \visita{s}{t}{k} \( \sqrt{\frac{2 \widehat \Var R(s,a) \ln \( \frac{4SAT}{\delta'} \) }{\nsaa{s}{t}{k}}} + \frac{7 \ln \( \frac{4SAT}{\delta'} \)}{3 \nsaa{s}{t}{k}} \)  \\
& \leq \cc{\ref{lem:RewardEstimationAndOptimism},3,5} \sumk \sumt{1}\sumsLk \visita{s}{t}{k} \( \sqrt{\frac{ \widehat \Var R(s,a)}{\nsaa{s}{t}{k}}} + \frac{1}{ \nsaa{s}{t}{k}} \)  \times   3 \ln \( \frac{4SAT}{\delta'} \) \\
& \leq \cc{\ref{lem:RewardEstimationAndOptimism},3,6} \sumk \sumt{1}\sumsLk \visita{s}{t}{k} \( \sqrt{\frac{ \( \sqrt{ \Var R(s,a)} + \sqrt{2 \ln (2SAT / \delta') / \nsaa{s}{t}{k}}   \)^2 }{\nsaa{s}{t}{k}}} + \frac{1}{\nsaa{s}{t}{k}} \) \times  3 \ln \( \frac{4SAT}{\delta'} \) \\ 
& \leq \cc{\ref{lem:RewardEstimationAndOptimism},3,7} \sumk \sumt{1}\sumsLk \visita{s}{t}{k} \( \sqrt{\frac{ \Var R(s,a)}{\nsaa{s}{t}{k}} } + \sqrt{ \frac{ 2 \ln (2SAT / \delta') / \nsaa{s}{t}{k}  }  { \nsaa{s}{t}{k} }  }+\frac{1}{\nsaa{s}{t}{k}} \) \times  3 \ln \( \frac{4SAT}{\delta'} \) \\
& \leq \cc{\ref{lem:RewardEstimationAndOptimism},3,8} \sumk \sumt{1}\sumsLk \visita{s}{t}{k} \( \sqrt{\frac{ \Var R(s,a)}{\nsaa{s}{t}{k}} } + \frac{ \sqrt{  2 \ln (2SAT / \delta') }}  { \nsaa{s}{t}{k} }  +\frac{1}{\nsaa{s}{t}{k}} \) \times  3 \ln \( \frac{4SAT}{\delta'} \) \\
& \leq \cc{\ref{lem:RewardEstimationAndOptimism},3,9}   \ll^2  \times \sumk \sumt{1}\sumsLk \visita{s}{t}{k} \( \sqrt{\frac{ \Var R(s,a)}{\nsaa{s}{t}{k}} } +  \frac{1}{\nsaa{s}{t}{k}} \)  \\
& \leq \cc{\ref{lem:RewardEstimationAndOptimism},3,10} \ll^2  \times \left( \sqrt{\sumk \sumt{1}\sumsLk 
\frac{\visita{s}{t}{k}}{\nsaa{s}{t}{k}}}  \sqrt{ \sumk \sumt{1}\sumsLk \visita{s}{t}{k} \Var R(s,a)}  + \right. \\
& \left( \sumk \sumt{1}\sumsLk  \frac{\visita{s}{t}{k}}{\nsaa{s}{t}{k}} \right)  \label{eqn:extra_term}
\end{align}
where the fourth line follows from lemma \ref{lem:DeltaPhiReward} that bounds the difference 
between the empirical and estimated variances, and the following inequalities come from algebraic 
manipulations and consolidating the $\polylog(S,A,H,T,1/\delta')$ terms into a single expression 
and moving this to outside the sum (since they are independent of the variables in the sum). 
The final inequality follows from Cauchy Schwartz, yielding the result after the application of lemma \ref{lem:wn}.

To compute the upper bound of equation \ref{eqn:ComplexityReward} we proceed as follows. The $\visita{s}{t}{k}$ are the probability of visiting state $s$ and taking action $a$ there in timestep $t$ of episode $k$ given the policy selected by the agent in episode $k$.  The core idea is that $\ComplexityReward{}$ is a per-step average of the reward variance within an episode. Regardless of the policy followed by the agent, the sum of reward random variables $R(\cdot,\cdot)$ cannot exceed \MaxReturn{}. Notice that the rewards random variables are independent when conditioned on the trajectories. In particular, for any fixed trajectory $s_1,\dots,s_H$ and any fixed policy $\pi$ we have that:
\begin{align}
\sum_{t=1}^H R(s_t,\pi(s_t,t)) \leq \MaxReturn	
\end{align}
by definition \ref{def:MaxReturn}. This in particular holds for $\piok$.
Squaring yields:
\begin{align}
\( \sum_{t=1}^H R(s_t,\piok(s_t,t))\)^2 \leq \MaxReturn^2	
\end{align}
We now take expectation over the reward random variables, still conditioned on the trajectory $s_1,\dots,s_H$ to obtain:
\begin{align}
\E \( \sum_{t=1}^H R(s_t,\piok(s_t,t)) \mid s_1,\dots,s_H \)^2 \leq \MaxReturn^2.
\end{align}
Using $\Var X \leq \E X^2$ for a generic random variable $X$ we can write:
\begin{align}
\Var \( \sum_{t=1}^H R(s_t,\piok(s_t,t)) \mid s_1,\dots,s_H\) \leq \MaxReturn^2.
\end{align}
Now we can take the expectation over the trajectories induced by $\piok$ to obtain:
\begin{align}
\E_{(s_1,\dots,s_H)} \Var \( \sum_{t=1}^H R(s_t,\piok(s_t)) \mid s_1,\dots,s_H\) \leq \MaxReturn^2.
\end{align}
Conditioned on the state-action, the reward random variables are independent and so we can take the summation outside:
\begin{align}
\E_{(s_1,\dots,s_H)} \sum_{t=1}^H \Var \(  R(s_t,\piok(s_t)) \mid s_1,\dots,s_H\) \leq \MaxReturn^2.
\end{align}
Written with the usual notation:
\begin{align}
\sumt{1}\sumsa \visita{s}{t}{k} \Var R(s,a)\leq \MaxReturn^2.
\end{align}
Finally summing over $k$ and dividing by the time elapsed:
\begin{align}
\ComplexityReward{} =
 \frac{1}{T}\sumk \sumt{1}\sumsa \visita{s}{t}{k} \Var R(s,a)\leq \frac{K}{T}\MaxReturn^2 = \frac{\MaxReturn^2}{H}.
\end{align}

\end{proof}

\subsection{Transition Dynamics Estimation}

\begin{lemma}[Transition Dynamics Estimation]
\label{lem:TransitionDynamicsEstimation}
Outside of the failure event if $\phi$ is admissible then it holds that:
\begin{equation}
\sumk \sumt{1}\sumsLk  \visita{s}{t}{k} \estimationa{s} \leq \cc{\ref{lem:TransitionDynamicsEstimation},3,0}  \(\TDEbound\)\ll.
\end{equation}
The following bound also holds:
\begin{align}
& \sumk \sumt{1}\sumsLk  \visita{s}{t}{k} \estimationa{s} \\
& \leq \cc{\ref{lem:TransitionDynamicsEstimation},3,1} \( \(\TDEboundPi + \DeltaGbound \) \ll + \Bv \sqrt{SAH^2\Regret(K)\ll} \).
\end{align}
\end{lemma}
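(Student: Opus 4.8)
The plan is to control each summand $\estimationa{s}$ through the admissible confidence interval and then separate its leading $1/\sqrt{n}$ part from the lower-order $1/n$ part, the former being handled by Cauchy--Schwarz together with the defining inequalities for $\Complexity$ and $\ComplexityPi$. Since $\phi$ is admissible, Assumption~\ref{ass:ConfidenceIntervals} gives, outside the failure event, $|\estimationa{s}|\le\phi(\potruea{s}{t},\vtrue{t+1})$, and by the functional form~\eqref{eqn:AdmissiblePhiFunctionalForm} this equals $\dg{s}{t+1}/\sqrt{\nsaa{s}{t}{k}}+j/\nsaa{s}{t}{k}$ with $j\le J$. Weighting by $\visita{s}{t}{k}$ and summing therefore splits the target into a leading term $\sumk\sumt{1}\sumsLk\visita{s}{t}{k}\,\dg{s}{t+1}/\sqrt{\nsaa{s}{t}{k}}$ and a lower-order term $J\sumk\sumt{1}\sumsLk\visita{s}{t}{k}/\nsaa{s}{t}{k}$.

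For the first bound I would apply Cauchy--Schwarz to the leading term, splitting off $\sqrt{\visita{s}{t}{k}/\nsaa{s}{t}{k}}$ from $\sqrt{\visita{s}{t}{k}}\,\dg{s}{t+1}$. The good-set visitation estimate (Lemma~\ref{lem:VisitationRatio}, combined with the pigeonhole bound of Lemma~\ref{lem:wn}) gives $\sumk\sumt{1}\sumsLk\visita{s}{t}{k}/\nsaa{s}{t}{k}\lesssim SA$ up to logarithms; this simultaneously bounds the lower-order term by $JSA$ and the first Cauchy--Schwarz factor by $\sqrt{SA}$. The second factor is $\sqrt{\sumk\sumt{1}\sumsLk\visita{s}{t}{k}\,\dg{s}{t+1}^2}\le\sqrt{\Complexity\,T}$ by the defining inequality~\eqref{eqn:Complexity}, since restricting the sum to $\Ltk$ only drops nonnegative terms. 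The product is $\sqrt{\Complexity SAT}$, yielding $\TDEbound$.

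For the second bound I must replace $\dg{s}{t+1}=g(\potruea{s}{t},\vtrue{t+1})$ by $\dogtrue{s}{t+1}=g(\potruea{s}{t},\votrue{t+1})$, because $\ComplexityPi$ in~\eqref{eqn:ComplexityPi} is phrased through the value $\votrue{t+1}=V^{\piok}_{t+1}$ of the executed policy on the true MDP. The Lipschitz constraint~\eqref{eqn:gVbound} gives $|\dg{s}{t+1}|\le|\dogtrue{s}{t+1}|+\Bv\,\|\vtrue{t+1}-\votrue{t+1}\|_{2,p}$. Re-running the Cauchy--Schwarz argument on the $\dogtrue{s}{t+1}$ piece reproduces $\sqrt{\ComplexityPi SAT}$ via~\eqref{eqn:ComplexityPi}, while the correction leaves $\Bv\sumk\sumt{1}\sumsLk\visita{s}{t}{k}\,\|\vtrue{t+1}-\votrue{t+1}\|_{2,p}/\sqrt{\nsaa{s}{t}{k}}$, which after another Cauchy--Schwarz reduces (up to the same $\sqrt{SA}$ factor) to bounding $\sumk\sumt{1}\sumsLk\visita{s}{t}{k}\,\|\vtrue{t+1}-\votrue{t+1}\|_{2,p}^2$.

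The crux, and the main obstacle, is this last sum. Since $0\le\vtrue{t+1}-\votrue{t+1}\le H$ pointwise (optimality of $\pi^*$ and boundedness of returns), I would use $(\vtrue{t+1}-\votrue{t+1})^2\le H(\vtrue{t+1}-\votrue{t+1})$, push the inner expectation $\|\cdot\|_{2,p}^2=\E_{s'\sim p(\cdot\mid s,a)}(\cdot)^2$ onto timestep $t+1$ through $\sum_{s,a}\visita{s}{t}{k}\,p(s'\mid s,a)=$ the visitation of $s'$ at step $t+1$ (enlarging the index set from $\Ltk$ to all pairs is harmless, since the terms are nonnegative), and then telescope. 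An advantage-gap decomposition of $\vtrue{t}-\votrue{t}$ shows that $\sumt{1}\E_{\piok}[\vtrue{t+1}(s_{t+1})-\votrue{t+1}(s_{t+1})]\le H$ times the per-episode regret, so the whole double sum is at most $H^2\Regret(K)$. This is what produces the self-referential term $\Bv\sqrt{SAH^2\Regret(K)}$ and forces the main theorem to solve a quadratic in $\sqrt{\Regret(K)}$. The remaining lower-order pieces --- the $J/\nsaa{s}{t}{k}$ contribution, the discarded $\Ltk$-complement bounded by Lemma~\ref{lem:MinimalContribution}, and the bracket width $\|\vo{t+1}-\vp{t+1}\|$ controlled through Proposition~\ref{prop:DeltaOptimism} (which is where the constants $F$ and $D$ enter) --- assemble into the additive $\DeltaGbound$ term. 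The delicate point throughout is getting the factor-$H$ telescoping right and accepting a bound in terms of $\Regret(K)$ itself rather than a closed form.
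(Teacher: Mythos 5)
Your proof of the first bound coincides with the paper's: admissibility gives $|\estimationa{s}|\le \dg{s}{t+1}/\sqrt{\nsaa{s}{t}{k}}+J/\nsaa{s}{t}{k}$, and Cauchy--Schwarz together with Lemma~\ref{lem:wn} and the defining inequality for $\Complexity$ yields $(\TDEbound)\ll$. For the second bound you take a genuinely more direct route. The paper invokes Lemma~\ref{lem:BoundBridge}, whose proof detours through the optimistic value function: it splits $\|\vo{t+1}-\votrue{t+1}\|_{2,p}\le\|\vo{t+1}-\vtrue{t+1}\|_{2,p}+\|\vtrue{t+1}-\votrue{t+1}\|_{2,p}$, controls the first piece by the bracket width $\|\vo{t+1}-\vp{t+1}\|_{2,p}$ via Lemma~\ref{lem:CumulativeDeltaOptimism} (this is the sole source of the additive $\DeltaGbound$ term in the statement), and the second piece via Lemma~\ref{lem:UpperBoundOnPartialLoss}. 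You instead apply the Lipschitz constraint of equation~\ref{eqn:gVbound} directly to the pair $(\vtrue{t+1},\votrue{t+1})$, so only the correction $\Bv\|\vtrue{t+1}-\votrue{t+1}\|_{2,p}$ survives; after Cauchy--Schwarz the crux is
\begin{equation*}
\sumk\sumt{1}\sumsLk\visita{s}{t}{k}\,\|\vtrue{t+1}-\votrue{t+1}\|^2_{2,p}\;\le\; H^2\,\Regret(K),
\end{equation*}
which is exactly the paper's Lemma~\ref{lem:UpperBoundOnPartialLoss}, and you prove it the same way the paper does ($0\le\vtrue{t+1}-\votrue{t+1}\le H$ pointwise, push the expectation to timestep $t+1$, then the hybrid-policy argument of Lemma~\ref{lem:PolicyExchange}). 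Your resulting bound, $(\TDEboundPi)\ll+\Bv\sqrt{SAH^2\Regret(K)\ll}$, is in fact tighter than the stated one --- it carries no $\DeltaGbound$ term --- and since that term is nonnegative, it implies the lemma as stated.

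One inaccuracy to flag, though it creates no gap: your closing claim that the $J/n$ contribution, the discarded $\Ltk$-complement, and the bracket width $\|\vo{t+1}-\vp{t+1}\|$ ``assemble into'' $\DeltaGbound$ is wrong on all three counts. The $J/n$ term is the $JSA$ already inside $\TDEboundPi$; there is no complement to discard because the lemma's sum is restricted to $\Ltk$ from the outset (Lemma~\ref{lem:MinimalContribution} is used in the main theorem, not here); and the bracket width never enters your argument at all --- it is an artifact of the paper's Bound-Bridge detour through $\vo{t+1}$. In your route these terms simply do not arise, which is precisely why your bound is stronger.
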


\begin{proof}
Using the definition of $\phi$, outside of the failure event it holds that:
\begin{align}
& \sumk \sumt{1}\sumsLk  \visita{s}{t}{k} \estimationa{s} \stackrel{}{\leq} \sumk \sumt{1}\sumsLk  \visita{s}{t}{k} \( \frac{\dg{s}{t+1}}{\sqrt{\nsaa{s}{t}{k}}} + \frac{J}{\nsaa{s}{t}{k}} \).
\end{align}
Next, Cauchy-Schwartz justifies the following upper bound:
\begin{align}
&\stackrel{}{\leq}  \sqrt{\sumk \sumt{1}\sumsLk  \visita{s}{t}{k}\(\dg{s}{t+1}\)^2} \sqrt{\sumk \sumt{1}\sumsLk \frac{\visita{s}{t}{k} }{\nsaa{s}{t}{k}}} + J\sumk \sumt{1}\sumsLk  \frac{\visita{s}{t}{k} }{\nsaa{s}{t}{k}}
\end{align}
Finally, using lemma \ref{lem:wn} and definition \ref{def:TransitionBonus} of $\Complexity $ we can obtain the statement:
\begin{align}
&\stackrel{}{\leq}  \cc{\ref{lem:TransitionDynamicsEstimation},3,2}  \sqrt{\sumk \sumt{1}\sumsLk  \visita{s}{t}{k}\(\dg{s}{t+1}\)^2} \times \cc{\ref{lem:TransitionDynamicsEstimation},3,3}  \(\sqrt{SA \ll} \)  +  \cc{\ref{lem:TransitionDynamicsEstimation},3,4} \(JSA\)\ll \leq \cc{\ref{lem:TransitionDynamicsEstimation},3,5} \( \TDEbound \)\ll.
\end{align}
To obtain the second bound, we use a similar argument coupled with lemma \ref{lem:BoundBridge}.
\end{proof}

\subsection{Transition Dynamics Optimism}
\begin{lemma}[Transition Dynamics Optimism]
\label{lem:TransitionDynamicOptimims}
Outside of the failure event if $\phi$ is admissible it holds that:
\begin{align}
& \sumk \sumt{1}\sumsLk  \visita{s}{t}{k} \bonus{s} = \\
 & \leq  \cc{\ref{lem:TransitionDynamicOptimims},3,1} \(\TDObound\)\ll^2.
\end{align}
The bound below also hold:
\begin{align}
& \sumk \sumt{1}\sumsLk  \visita{s}{t}{k} \bonus{s} = \\
 & \leq \cc{\ref{lem:TransitionDynamicOptimims},3,2}\(  \( \TDOboundPi\) \ll^2 + \Bv \sqrt{SAH^2\Regret(K)\ll} \).
\end{align}
\end{lemma}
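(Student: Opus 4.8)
The plan is to bound the transition--dynamics optimism term by the exploration bonus that \Alg{} actually adds during planning, and then peel that bonus into a leading problem--dependent piece plus faster--decaying remainders. Since the optimistic backup $\poa{s}{t}^\top\vo{t+1}$ is capped at $H-t$ during planning, we have $\bonus{s}\le \bbonus(\pohata{s}{t},\vo{t+1},\vp{t+1})$ for each $(s,a)\in\Ltk$, so it suffices to bound $\sumk\sumt{1}\sumsLk \visita{s}{t}{k}\,\bbonus(\pohata{s}{t},\vo{t+1},\vp{t+1})$. Expanding this via Definition~\ref{def:TransitionBonus} and applying Lemma~\ref{lem:DeltaPhi} with $V=\vo{t+1}$, together with Lemma~\ref{lem:OptimismOverestimate} (which lets me replace the unknown $\devihatstar{t+1}$ by the computable $\devihat{t+1}$), replaces $\phi(\pohata{s}{t},\vo{t+1})$ by $\phi(\potruea{s}{t},\vtrue{t+1})$ up to corrections of order $\Bv\devihat{t+1}/\sqrt{\nsaa{s}{t}{k}}$ and $(J+\BpPlus)/\nsaa{s}{t}{k}$. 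Using the admissible functional form $\phi(p,V)=g(p,V)/\sqrt{n}+j(p,V)/n$ with $j\le J$, the entire sum reduces to
\[
\sumk\sumt{1}\sumsLk \visita{s}{t}{k}\left(\frac{\gtruea{s}{t}}{\sqrt{\nsaa{s}{t}{k}}}+\frac{\Bv\,\devihat{t+1}}{\sqrt{\nsaa{s}{t}{k}}}+\frac{J+\BpPlus}{\nsaa{s}{t}{k}}\right).
\]

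For the leading piece I would apply Cauchy--Schwarz, splitting it as $\sqrt{\sumk\sumt{1}\sumsLk \visita{s}{t}{k}\left(\gtruea{s}{t}\right)^2}$ times $\sqrt{\sumk\sumt{1}\sumsLk \visita{s}{t}{k}/\nsaa{s}{t}{k}}$. The first factor is at most $\sqrt{\Complexity T}$ directly from the definition of $\Complexity$, and the second is $\lesssim\sqrt{SA\ll}$ by Lemma~\ref{lem:wn}, yielding the $\sqrt{\Complexity SAT}$ term. The third piece is handled immediately: pulling out $J+\BpPlus$ and invoking Lemma~\ref{lem:wn} again gives $(J+\BpPlus)SA$ up to logarithms.

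The main obstacle is the middle piece carrying $\devihat{t+1}=\|\vo{t+1}-\vp{t+1}\|_{2,\hat p}$. I would Cauchy--Schwarz it into $\Bv\sqrt{\sumk\sumt{1}\sumsLk \visita{s}{t}{k}/\nsaa{s}{t}{k}}\lesssim \Bv\sqrt{SA\ll}$ times $\sqrt{\sumk\sumt{1}\sumsLk \visita{s}{t}{k}\,\devihat{t+1}^2}$, and controlling this last factor is the heart of the argument. For it I substitute the Delta--Optimism estimate of Proposition~\ref{prop:DeltaOptimism}, which bounds $\vo{t+1}(s')-\vp{t+1}(s')$ by an expected horizon sum of $\min\{(F+D)/\sqrt{n},H\}$, and then run a nested pigeonhole argument (expanding the squared inner sum, paying one factor of $H$ for the trajectory length, and applying Lemma~\ref{lem:wn}). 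This should give $\sqrt{\sumk\sumt{1}\sumsLk \visita{s}{t}{k}\,\devihat{t+1}^2}\lesssim\sqrt{\CDObound}$, so the middle piece contributes $\Bv\sqrt{SA\ll}\cdot\sqrt{SAH^2(F+D)^2+SAH^5}\lesssim \Bv\,\DeltaGbound$, matching the claimed $\DeltaGbound$ term once $\Bv=\tilde O(1)$ is absorbed into the $\ll^2$ factor. Collecting the three pieces yields the first bound $\cc{\ref{lem:TransitionDynamicOptimims},3,1}(\TDObound)\ll^2$.

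For the $\ComplexityPi$ variant the only change is in the leading piece: rather than keeping $\gtruea{s}{t}=g(\potruea{s}{t},\vtrue{t+1})$, I replace the optimal value $\vtrue{t+1}$ by the on--policy value $\votrue{t+1}$ using the $g$--Lipschitz property of equation~\ref{eqn:gVbound}, at a per--term cost of $\Bv\|\vtrue{t+1}-\votrue{t+1}\|_{2,p}$. The resulting $\left(g(\potruea{s}{t},\votrue{t+1})\right)^2$ sum is at most $\ComplexityPi T$ by definition, while the accumulated Lipschitz residual is bounded with Lemma~\ref{lem:BoundBridge}, which relates $\sumk\sumt{1}\sumsLk \visita{s}{t}{k}\|\vtrue{t+1}-\votrue{t+1}\|_{2,p}/\sqrt{\nsaa{s}{t}{k}}$ to $\Regret(K)$ and contributes the extra $\Bv\sqrt{SAH^2\Regret(K)\ll}$ term, giving the second stated bound.
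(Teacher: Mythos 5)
Your overall architecture matches the paper's proof almost step for step: capping by the planning bonus, invoking Lemma~\ref{lem:DeltaPhi} together with Lemma~\ref{lem:OptimismOverestimate} to replace $\phi(\pohata{s}{t},\vo{t+1})$ by $\phi(\potruea{s}{t},\vtrue{t+1})$, splitting into the three pieces, Cauchy--Schwarz plus Lemma~\ref{lem:wn} and the definition of $\Complexity$ for the leading and $1/n$ pieces, and Lemma~\ref{lem:BoundBridge} for the $\ComplexityPi$ variant. The second bound is handled exactly as in the paper.

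There is, however, one genuine gap, in the step you yourself call the heart of the argument. The factor you must control is
\begin{equation*}
\sqrt{\sumk \sumt{1}\sumsLk \visita{s}{t}{k}\,\devihat{t+1}^2},
\qquad \devihat{t+1}^2 = \pohata{s}{t}^{\top}\(\vo{t+1}-\vp{t+1}\)^2 ,
\end{equation*}
i.e.\ a second moment under the \emph{empirical} transition kernel $\pohata{s}{t}$, and you propose to bound it by substituting Proposition~\ref{prop:DeltaOptimism} and running the trajectory pigeonhole argument (essentially Lemma~\ref{lem:CumulativeDeltaOptimism}). But that lemma, and the pigeonhole machinery behind it, only applies to $\potruea{s}{t}^{\top}\(\vo{t+1}-\vp{t+1}\)^2$: its proof rewrites $\visita{s}{t}{k}\,p(s'\mid s,a)$ as the true visitation probability $w_{t+1,k}(s',s,a)$, sums over $(s,a)$ to obtain $w_{t+1,k}(s')$, and thereby interprets the whole expression as an expectation over trajectories of the true MDP, to which Proposition~\ref{prop:DeltaOptimism} and Lemma~\ref{lem:wn} can be applied. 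With $\pohata{s}{t}$ in place of $\potruea{s}{t}$ this interpretation is simply false ($\visita{s}{t}{k}\,\hat p_k(s'\mid s,a)$ is not a visitation probability), so your ``should give $\lesssim\sqrt{\CDObound}$'' does not follow as written. The paper closes this hole by adding and subtracting the true kernel, $\devihat{t+1}^2 = \devi{t+1}^2 + \(\pohata{s}{t}-\potruea{s}{t}\)^{\top}\(\vo{t+1}-\vp{t+1}\)^2$, bounding the first term with Lemma~\ref{lem:CumulativeDeltaOptimism} and the second, empirical-versus-true discrepancy term by $\sqrt{H}$ times the computation of Lemma~\ref{lem:LowerOrderTerm} (which rests on the component-wise concentration of equation~\ref{eqn:cip}), giving an extra contribution $\sqrt{H}\sqrt{\LOTbound}\,\ll$ that happens to be dominated and absorbed into $\DeltaGbound$. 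Your final answer is therefore numerically right, but only because this omitted term turns out to be lower order; the proof needs the explicit $\hat p \to p$ conversion (or some equivalent component-wise bound such as $\hat p \lesssim p + \ll/n$ outside the failure event) to be complete.
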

\begin{proof}
We begin by using definition \ref{def:TransitionBonus} (for the bonus) to justify $(a)$:	
\begin{align}
& \sumk \sumt{1}\sumsLk  \visita{s}{t}{k} \bonus{s} \\
& \stackrel{a}{\leq} \cc{\ref{lem:TransitionDynamicOptimims},3,3} \sumk \sumt{1}\sumsLk  \visita{s}{t}{k} \( \phi(\pohata{s}{t},\vo{t+1}) + \frac{\Bv \devihat{t+1}}{\sqrt{\nsaa{s}{t}{k}}} + \frac{\BpPlus+J}{\nsaa{s}{t}{k}}\) \\
& \stackrel{b}{\leq} \cc{\ref{lem:TransitionDynamicOptimims},3,4} \sumk \sumt{1}\sumsLk  \visita{s}{t}{k} \( \phi(\potruea{s}{t},\vtrue{t+1}) + \frac{\Bv \devihat{t+1}}{\sqrt{\nsaa{s}{t}{k}}} + \frac{\BpPlus+J}{\nsaa{s}{t}{k}}\)
\end{align}
while $(b)$ is justified by lemma \ref{lem:DeltaPhi} and \ref{lem:OptimismOverestimate}. 
Using the functional form for $\phi$ we obtain the upper bound below $(c)$:
\begin{align}
& \stackrel{c}{\leq} \cc{\ref{lem:TransitionDynamicOptimims},3,4} \sumk \sumt{1}\sumsLk  \visita{s}{t}{k} \( \underbrace{\frac{\dg{s}{t+1}}{\sqrt{\nsaa{s}{t}{k}}}  + \frac{J+\BpPlus}{\nsaa{s}{t}{k}} }_{\approx \text{Transition Dynamics Estimation}} + \underbrace{\frac{ \Bv \devihat{t+1} }{\sqrt{\nsaa{s}{t}{k}}}}_{\text{Lower Order Term}} \)
\end{align}

The term ``$\approx$ Transition Dynamics Estimation'' is nearly identical to what appears in the proof of lemma \ref{lem:TransitionDynamicsEstimation} and can be bounded in the same way. That is, apply Cauchy-Schwartz first and use lemma \ref{lem:wn} along with the definition of $\Complexity$ to get to the bound below:
\begin{align}
& \stackrel{d}{\leq} \cc{\ref{lem:TransitionDynamicOptimims},3,5} \sqrt{\sumk \sumt{1}\sumsLk \frac{\visita{s}{t}{k}}{\nsaa{s}{t}{k}}}  \sqrt{\sumk \sumt{1}\sumsLk \visita{s}{t}{k}\dg{s}{t+1}^2} + (J+\BpPlus)\sumk \sumt{1}\sumsLk \frac{\visita{s}{t}{k}}{\nsaa{s}{t}{k}} \\
& \leq  \cc{\ref{lem:TransitionDynamicOptimims},3,6} \( \sqrt{\Complexity SAT} + (J+\BpPlus)SA \) \ll.
\end{align}
Now we turn our attention to the ``Lower Order Term'' and apply Cauchy-Schwartz to get:
\begin{align}
\label{eqn:InOptimismForTransitionDynamicsKeyEquation}
&  \leq \cc{\ref{lem:TransitionDynamicOptimims},3,7} \Bv \sqrt{\sumk \sumt{1}\sumsLk \frac{\visita{s}{t}{k}}{\nsaa{s}{t}{k}}} \times \sqrt{ \sumk \sumt{1}\sumsLk \visita{s}{t}{k} \devihat{t+1}^2}
\end{align}
The first factor is bounded by $ c_{200,5} \Bv\sqrt{SA \ll}$ (for some constant $c_{200,5}$ by Lemma \ref{lem:wn}. Notice that
\begin{align}
	& \devihat{t+1}^2 = \pohat{s}{t}^\top \(\vo{t+1} - \vp{t+1} \)^2 \\
	 & =  \potrue{s}{t}^\top \(\vo{t+1} - \vp{t+1} \)^2 + (\pohat{s}{t} - \potrue{s}{t})^\top \(\vo{t+1} - \vp{t+1} \)^2 \\
	& =  \devi{t+1}^2 + (\pohat{s}{t} - \potrue{s}{t})^\top \(\vo{t+1} - \vp{t+1} \)^2 
\end{align}
The above inequality and $\sqrt{a+b} \leq \cc{\ref{lem:TransitionDynamicOptimims},3,9} (\sqrt{a} + \sqrt{b})$ for real $a,b$ allows us to write the following upper bound:
\begin{align}
 \Bv \sqrt{SA \ll} \cc{\ref{lem:TransitionDynamicOptimims},3,10} \times & \Biggm( \sqrt{ \underbrace{\sumk \sumt{1}\sumsLk \visita{s}{t}{k} \devi{t+1}^2}_{(a)}} \\
&  + \sqrt{\underbrace{ \sumk \sumt{1}\sumsLk \visita{s}{t}{k} \Big|\pohata{s}{t} - \potruea{s}{t})^\top \(\vo{t+1} - \vp{t+1} \)^2\Big|}_{(b)}}
\end{align}


To bound $(a)$ we use lemma \ref{lem:CumulativeDeltaOptimism}:
\begin{align}
(a) & \stackrel{}{\leq} \cc{\ref{lem:TransitionDynamicOptimims},3,11} \sqrt{ \sumk \sumt{1}\sumsLk \visita{s}{t}{k} \potruea{s}{t}^\top \( \vo{t+1}-\vp{t+1} \)^2} \leq  \cc{\ref{lem:CumulativeDeltaOptimism},3,12} \sqrt{\CDObound}\ll.
\end{align}
We now bound $(b)$, which is a lower order term. We don't leverage this fact here and bound it trivially by:
\begin{align}
(b) & \leq \cc{\ref{lem:TransitionDynamicOptimims},3,13} \sqrt{H}\sqrt{\sumk \sumt{1}\sumsLk \visita{s}{t}{k} (\pohata{s}{t} - \potruea{s}{t})^\top \(\vo{t+1} - \vp{t+1} \)}
\end{align}
The same computation as in Lemma \ref{lem:LowerOrderTerm} now gives:
 \begin{align}
 	(b) \leq \cc{\ref{lem:TransitionDynamicOptimims},3,14} \sqrt{H} \times \sqrt{\LOTbound }\ll 
 \end{align}
 This concludes the proof for the first bound.
 For the second bound proceed analogously but use the variant given by lemma \ref{lem:BoundBridge} when bounding the term ``$\approx$ Transition Dynamics Estimation'' in lemma \ref{lem:TransitionDynamicsEstimation} to obtain:
 \begin{align}
 & \sumk \sumt{1}\sumsLk \visita{s}{t}{k} \( \underbrace{\frac{\dg{s}{t+1}}{\sqrt{\nsaa{s}{t}{k}}}  + \frac{J+\BpPlus}{\nsaa{s}{t}{k}} }_{\approx \text{Transition Dynamics Estimation}} \) \\
 & \leq \cc{\ref{lem:TransitionDynamicOptimims},3,15} \( \(\sqrt{\ComplexityPi S A T} + (J+\Bp)SA + \DeltaGbound \) \ll^2 + \Bv \sqrt{SAH^2\Regret(K)\ll} \).	
 \end{align}
This concludes the proof.
 \end{proof}

\subsection{Lower Order Term}
\begin{lemma}[Lower Order Term]
\label{lem:LowerOrderTerm}
Outside of the failure event for \Alg{} it holds that \info{some are actually even lower order term and may vanish from the bound}:
\begin{align}
& \sumk \sumt{1}\sumsLk  \visita{s}{t}{k} \Big|\lowerorder \Big| = \\ &  \leq \cc{\ref{lem:LowerOrderTerm},3,1} \( \LOTbound \) \ll^2.
\end{align}
\end{lemma}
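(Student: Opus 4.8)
The plan is to bound the summand $\lowerorder$ by exploiting the \emph{component-wise} concentration of $\hat p_k$ rather than the cruder $\ell_1$ bound. A naive Hölder step using equation \ref{eqn:Weissman} would give $|\lowerorder| \le \|\pohata{s}{t}-\potruea{s}{t}\|_1\,\|\vo{t+1}-\vtrue{t+1}\|_\infty \lesssim \sqrt{S/\nsaa{s}{t}{k}}\cdot H$, whose $1/\sqrt{n}$ decay would make this a \emph{leading}-order term. Instead I would start from the triangle inequality
\[
|\lowerorder| \le \sum_{s'} \big| \pohatasp{s}{t}{s'} - \potruespa{s}{t}{s'}\big|\cdot\big|\vo{t+1}(s') - \vtrue{t+1}(s')\big|,
\]
and substitute the per-coordinate Bernstein bound of equation \ref{eqn:cip} (using $1-p\le 1$), splitting it into a ``variance'' piece carrying $\sqrt{\potruespa{s}{t}{s'}/\nsaa{s}{t}{k}}$ and a ``bias'' piece carrying $1/\nsaa{s}{t}{k}$.

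For the variance piece, after pulling out $\sqrt{\ll/\nsaa{s}{t}{k}}$ I would apply Cauchy--Schwarz over the successors $s'$ against the weights $\sqrt{\potruespa{s}{t}{s'}}$, turning $\sum_{s'}\sqrt{\potruespa{s}{t}{s'}}\,|\vo{t+1}(s')-\vtrue{t+1}(s')|$ into $\sqrt{S}\,\devistar{t+1}$. Since Proposition \ref{prop:AlgorithmBracketsVstar} guarantees $\vp{t+1}\le \vtrue{t+1}\le \vo{t+1}$ pointwise, Lemma \ref{lem:OptimismOverestimate} lets me replace the unknown $\vtrue{t+1}$ by the computable bracket, i.e.\ $\devistar{t+1}\le \devi{t+1}$. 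Summing against the visitation weights and applying Cauchy--Schwarz over $(k,t,(s,a))$ factorizes the bound as
\[
\sqrt{S\ll}\;\sqrt{\sumk\sumt{1}\sumsLk \frac{\visita{s}{t}{k}}{\nsaa{s}{t}{k}}}\;\sqrt{\sumk\sumt{1}\sumsLk \visita{s}{t}{k}\,\devi{t+1}^2}.
\]
I would bound the first radical by $\tilde O(\sqrt{SA})$ via Lemma \ref{lem:wn} and the second by $\sqrt{\CDObound}\,\ll$ via Lemma \ref{lem:CumulativeDeltaOptimism}; multiplying out gives $\sqrt{S}\cdot\sqrt{SA}\cdot\sqrt{SAH^2(F+D)^2+SAH^5} = \sqrt{S}\,SAH\big((F+D)+H^{3/2}\big)$, which reproduces the first term of $\LOTbound$.

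For the bias piece, I would bound $|\vo{t+1}(s')-\vtrue{t+1}(s')|\le H$ for each of the at most $S$ successors, yielding a per-term contribution $\lesssim SH\ll/\nsaa{s}{t}{k}$; summing against the weights and invoking $\sumk\sumt{1}\sumsLk \visita{s}{t}{k}/\nsaa{s}{t}{k}\lesssim SA\ll$ (Lemma \ref{lem:wn}) gives $\lesssim S^2AH\,\ll^2$, the second term of $\LOTbound$. Adding the two contributions and absorbing logarithmic factors into $\ll^2$ yields the claim.

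The main obstacle is ensuring the superficial $1/\sqrt{n}$ decay of the variance piece collapses to an effective $1/n$, so that this term is genuinely lower order (i.e.\ $T$-independent in dominant behaviour): this hinges on the second radical $\sqrt{\sum \visita{s}{t}{k}\,\devi{t+1}^2}$ being $T$-independent, which in turn relies entirely on the fast shrinkage of the value-function bracket $\devi{t+1}$ established through the Delta-Optimism machinery (Proposition \ref{prop:DeltaOptimism} / Lemma \ref{lem:CumulativeDeltaOptimism}). Getting the $S,A,H$ exponents to line up across the three Cauchy--Schwarz factors, and being careful that the Bernstein reweighting is done against the \emph{true} $p$ (so that Lemma \ref{lem:CumulativeDeltaOptimism}, stated for $\|\cdot\|_{2,p}$, applies directly), is the delicate bookkeeping.
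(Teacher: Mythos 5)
Your proposal is correct and follows essentially the same route as the paper's proof: the component-wise Bernstein bound of equation \ref{eqn:cip} split into a variance and a bias piece, Cauchy--Schwarz over successors to produce $\sqrt{S}\,\devi{t+1}$ (using the bracket $\vp{t+1}\le\vtrue{t+1}\le\vo{t+1}$ from Proposition \ref{prop:AlgorithmBracketsVstar}), a second Cauchy--Schwarz over $(k,t,s,a)$, and then Lemmas \ref{lem:wn} and \ref{lem:CumulativeDeltaOptimism} to close the bound, with the bias piece yielding the $S^2AH$ term. The only (immaterial) difference is that you invoke Lemma \ref{lem:OptimismOverestimate} on the $2$-norm after the inner Cauchy--Schwarz, whereas the paper replaces $\vo{t+1}-\vtrue{t+1}$ by $\vo{t+1}-\vp{t+1}$ pointwise before it; the two orderings are equivalent.
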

\begin{proof}
Using the concentration inequality on equation \ref{eqn:cip}  we get:
\begin{align}
& \leq \sumk \sumt{1}\sumsLk  \visita{s}{t}{k} \sum_{s'} \sqrt{\frac{\potruespa{s}{t}{s'}(1-\potruespa{s}{t}{s'})}{\nsaa{s}{t}{k}}} \Big|\vtrue{t+1}(s') - \vo{t+1}(s') \Big| \ll^{0.5}  \\
& + \sumk \sumt{1}\sumsLk  \visita{s}{t}{k} \sum_{s'} \frac{H}{\nsaa{s}{t}{k}} \ll
\end{align}
Since $\vo{t+1}-\vtrue{t+1} \leq \vo{t+1}-\vp{t+1}$ pointwise by Proposition \ref{prop:AlgorithmBracketsVstar} and 
by bounding the second term with Lemma \ref{lem:wn} and using $(1-p) \leq 1$ for $ p \in [0,1]$ we obtain:
\begin{align}
& \leq \sumk \sumt{1}\sumsLk  \visita{s}{t}{k} \sum_{s'} \sqrt{\frac{\potruespa{s}{t}{s'}}{\nsaa{s}{t}{k}}} \(\vo{t+1}(s') - \vp{t+1}(s') \)\ll^{0.5} +  \cc{\ref{lem:LowerOrderTerm},3,4} S^2AH \ll^2).
\end{align}
Cauchy-Schwartz leads to the following upper bound:
\begin{align}
& \leq  \cc{\ref{lem:LowerOrderTerm},3,5}  \sumk \sumt{1} \sumsLk \visita{s}{t}{k} \(\sqrt{\frac{ S\times \potruea{s}{t}^{\top}\( \vo{t+1} - \vp{t+1} \)^2 }{\nsaa{s}{t}{k}}}\)\ll^{0.5} + \cc{\ref{lem:LowerOrderTerm},3,5a} S^2AH\ll^2
\end{align}
One more application of Cauchy-Schwartz gives us:
\begin{align}
& \leq   \cc{\ref{lem:LowerOrderTerm},3,6}\sqrt{S} \sqrt{ \sumk \sumt{1} \sumsLk \frac{\visita{s}{t}{k}}{\nsaa{s}{t}{k}}} \times \sqrt{ \sumk \sumt{1} \sumsLk \visita{s}{t}{k}  \potruea{s}{t}^{\top}\( \vo{t+1} - \vp{t+1} \)^2 }\ll^{0.5} +   \cc{\ref{lem:LowerOrderTerm},3,7} S^2AH\ll^2
\end{align}
Recalling lemma \ref{lem:wn} and lemma \ref{lem:CumulativeDeltaOptimism} we obtain:
\begin{align}
& \leq \cc{\ref{lem:LowerOrderTerm},3,8}\(\sqrt{S} \times \sqrt{SA\ll} \) \times  \cc{\ref{lem:LowerOrderTerm},3,9}  (\sqrt{\CDObound})\ll^{0.5} + \cc{\ref{lem:LowerOrderTerm},3,10} S^2AH\ll^2
\end{align}
which can be simplified to obtain the statement.
\end{proof}

\begin{lemma}[Cumulative Delta Optimism]
\info{Check Before Submission} Outside of the failure event it holds that:
\label{lem:CumulativeDeltaOptimism}
\begin{equation}
	\sumk \sumt{1} \sumsa \visita{s}{t}{k} \potruea{s}{t}^\top \(\vo{t+1} - \vp{t+1} \)^2 = \cc{\ref{lem:CumulativeDeltaOptimism},3,1} (\CDObound) \ll.
\end{equation}
where $F$ and $D$ are defined in proposition \ref{prop:DeltaOptimism}.
\end{lemma}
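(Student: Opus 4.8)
The plan is to convert the weighted expected squared value-gap into a sum of future visitation probabilities, control the gap $\vo{t+1}-\vp{t+1}$ via the Delta Optimism bound of Proposition \ref{prop:DeltaOptimism}, and then finish with the pigeonhole estimate of Lemma \ref{lem:wn}. First I would observe that $\visita{s}{t}{k}\potruespa{s}{t}{s'}$ is the joint probability of occupying $(s,a)$ at timestep $t$ and landing in $s'$ next, so summing over $(s,a)$ marginalizes the transition and turns the left-hand side into $\sumk \sumt{1} \sumsa \visita{s}{t+1}{k}\(\vo{t+1}(s)-\vp{t+1}(s)\)^2$, a sum weighted by the visitation at the following timestep (the action being pinned down by the deterministic $\piok$, summing the state-action visitation over $a$ recovers the state visitation, and the squared gap does not depend on $a$).

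Next I would bound $\(\vo{t+1}(s)-\vp{t+1}(s)\)^2$. Proposition \ref{prop:DeltaOptimism} already writes $\vo{t+1}(s)-\vp{t+1}(s)$ as a sum over $\tau \geq t+1$ of conditional expectations of $\min\{(F+D)/\sqrt{\nsaa{s}{\tau}{k}},H\}$ taken along the $\piok$-trajectory. Squaring this, I would apply $\(\sum_{i=1}^m a_i\)^2 \leq m\sum_i a_i^2$ with $m \leq H$ summands to extract a factor of $H$, and then Jensen's inequality to move the square inside each conditional expectation, using $\min\{a,b\}^2=\min\{a^2,b^2\}$ to arrive at a bound in terms of $\E(\min\{(F+D)^2/\nsaa{s}{\tau}{k},H^2\}\mid s,\piok)$.

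The key step is then a tower-property collapse: the product of the timestep-$(t+1)$ visitation $\visita{s}{t+1}{k}$ with the conditional expectation over the continuation of the trajectory is itself a visitation probability at the later timestep $\tau$, so for each $\tau$ the inner sum equals $\sumsa \visita{s}{\tau}{k}\min\{(F+D)^2/\nsaa{s}{\tau}{k},H^2\}$. Reversing the order of the $t$ and $\tau$ summations counts each $\tau$ at most $H$ times and yields a second factor of $H$, leaving $H^2\sumk \sumt{1}\sumsa \visita{s}{t}{k}\min\{(F+D)^2/\nsaa{s}{t}{k},H^2\}$. Finally I would split the $(s,a)$-sum into the good set $\Ltk$, where I drop the cap and invoke Lemma \ref{lem:wn} to obtain the $SAH^2(F+D)^2\ll$ contribution, and its complement, where I keep the cap $H^2$ and use Lemma \ref{lem:MinimalContribution} (which bounds the complementary visitation mass by $\tilde O(SAH)$) to obtain the $SAH^5\ll$ contribution; summing the two recovers $\CDObound$.

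The main obstacle is the tower-property collapse together with the bookkeeping of the powers of $H$: one must verify that the inner conditional expectation along the trajectory, reweighted by the visitation probability that brought the agent to its starting state, exactly reconstitutes the later visitation probabilities, and that the two separate factors of $H$ (one from Cauchy–Schwarz over the $\leq H$ summands, one from reversing the double summation) combine with the capped-versus-uncapped split to produce precisely $SAH^2(F+D)^2+SAH^5$ rather than a stray extra power of $H$.
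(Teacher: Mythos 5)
Your proposal is correct and follows essentially the same route as the paper's own proof: marginalize the transition to rewrite the left-hand side as a next-timestep visitation-weighted sum, square the Proposition \ref{prop:DeltaOptimism} bound using Cauchy--Schwarz over the $\leq H$ summands and Jensen for the conditional expectations, collapse via the tower property, pick up the second factor of $H$ from exchanging the $t,\tau$ sums, and split into $\Ltk$ (Lemma \ref{lem:wn}) versus its complement (Lemma \ref{lem:MinimalContribution}) to get $SAH^2(F+D)^2 + SAH^5$, up to logarithmic factors. Your explicit tracking of the $\min\{\cdot,H\}$ cap through the squaring is slightly more careful than the paper, which drops the cap mid-proof and reinstates it only for the complement term, but the argument is the same.
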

\begin{proof}
Starting from the right hand side
\begin{align}
 & \sumk \sumt{1}  \sum_{(s,a)} \visita{s}{t}{k} \potruea{s}{t}^\top  \(\vo{t+1} - \vp{t+1} \)^2
 \end{align}
 we unroll the inner product between the transition probability vector and the value function obtaining 
 \begin{align}
 & =  \sumk \sumt{1}  \sum_{(s,a)} \visita{s}{t}{k} \( \sum_{s'}  p(s'\mid s,a)  \(\vo{t+1}(s') - \vp{t+1}(s') \)^2\).
  \end{align}
  Next, we move the summation operator (over $s'$) outside
 \begin{align}
  & =  \sumk \sumt{1}  \sum_{(s,a)} \sum_{s'}  \visita{s}{t}{k}  p(s'\mid s,a)  \(\vo{t+1}(s') - \vp{t+1}(s') \)^2 
   \end{align}
   and recall that $w_{t+1,k}(s',s,a) \defeq \visita{s}{t}{k}  p(s'\mid s,a)$ is the probability of taking action $a$ in $s$ and then landing in $s'$ at the next timestep. 
 \begin{align}
    & =  \sumk \sumt{1}  \sum_{(s,a)} \sum_{s'}  w_{t+1,k}(s',  s,a)  \(\vo{t+1}(s') - \vp{t+1}(s') \)^2.
     \end{align}
     Summing over all possible $s,a$ pairs one obtains the probability of being in $s'$ at timestep $t+1$
 \begin{align}
        & =  \sumk \sumt{1}  \sum_{s'}  w_{t+1,k}(s')  \(\vo{t+1}(s') - \vp{t+1}(s') \)^2 
\end{align}
which can be interpreted as an expectation over trajectories identified by $\tilde \pi_k$
 \begin{align}
  & = \sumk \sumt{1} \E_{s_{t+1}\sim \tilde \pi_k} \(\vo{t+1}(s_{t+1}) - \vp{t+1}(s_{t+1}) \)^2 \\
    & \leq \sumk \sumt{1} \E_{s_{t}\sim \tilde \pi_k} \(\vo{t}(s_{t}) - \vp{t}(s_{t}) \)^2. 
\end{align}
The last upper bound follows because we are counting over the same quantities, but we add timestep $t=1$ and drop timestep $t=H+1$ for which the value functions are zero.
Proposition \ref{prop:DeltaOptimism} justifies the first inequality below where $F$ and $D$ are defined in said proposition (here the action $a$ is the action taken by $\tilde \pi_k$ in $s_\tau$):
\begin{align}
& \leq \cc{\ref{lem:CumulativeDeltaOptimism},3,2} \sumk \sumt{1} \E_{s_{t} \sim \tilde \pi_k} \(\sum_{\tau = t}^H \E_{s_{\tau \sim \tilde \pi_k}} \frac{F+D}{\sqrt{\nsaa{s_\tau}{\tau}{k}}} \Bigm| s_{t} \)^2 \\
& \stackrel{a}{\leq}  \cc{\ref{lem:CumulativeDeltaOptimism},3,3}  H \sumk \sumt{1} \E_{s_{t} \sim \tilde \pi_k} \sum_{\tau = t}^H  \( \E_{s_{\tau \sim \tilde \pi_k}} \frac{F+D}{\sqrt{\nsaa{s_\tau}{\tau}{k}}} \Bigm| s_{t} \)^2 \\
 & \stackrel{b}{\leq} \cc{\ref{lem:CumulativeDeltaOptimism},3,4} H \sumk \sumt{1} \E_{s_{t} \sim \tilde \pi_k} \sum_{\tau = t}^H \E_{s_{\tau \sim \tilde \pi_k}} \frac{(F+D)^2}{\nsaa{s_{\tau}}{\tau}{k}} \Bigm| s_{t} \\
  & \leq \cc{\ref{lem:CumulativeDeltaOptimism},3,5} H \sumk \sumt{1} \sum_{\tau = t}^H  \E_{s_{\tau} \sim \tilde \pi_k} \frac{(F+D)^2}{\nsaa{s_{\tau}}{\tau}{k}}  \\
    &  \leq \cc{\ref{lem:CumulativeDeltaOptimism},3,6} H^2 \sumk \sumt{1}  \E_{s_{t} \sim \tilde \pi_k}  \frac{(F+D)^2}{\nsaa{s_t}{t}{k}} \\
   &  \leq \cc{\ref{lem:CumulativeDeltaOptimism},3,7} H^2 \( \sumk \sumt{1} \sumsLk \visita{s}{t}{k} \( \frac{(F+D)^2}{\nsaa{s}{t}{k}}\) + \sumk \sumt{1} \sum_{(s,a)\not \in \Ltk} \visita{s}{t}{k} H^2\)  \\
   &  \leq \cc{\ref{lem:CumulativeDeltaOptimism},3,7} H^2 (F+D)^2 \sumk \sumt{1} \sumsLk \(\frac{\visita{s}{t}{k}}{\nsaa{s}{t}{k}}\) + H^4\sumk \sumt{1} \sum_{(s,a)\not \in \Ltk} \visita{s}{t}{k}  \\
   & = \cc{\ref{lem:CumulativeDeltaOptimism},3,2} (\CDObound) \ll
\end{align}
The last passage follows from lemma \ref{lem:wn} and \ref{lem:MinimalContribution}, while $(a)$ and $(b)$ follow from Cauchy-Schwartz and Jensen, respectively. To obtain the bound $H^2$ for the rightmost term (the one corresponding to states $(s,a) \not \in \Ltk$) we used the hard bound discussed in proposition \ref{prop:DeltaOptimism} that ``caps'' $\frac{(F+D)^2}{\nsaa{s}{t}{k}}$ at $H^2$.
\end{proof}

\subsection{Auxiliary Lemmas}
\begin{lemma}[Visitation Ratio]
\label{lem:wn}	
\begin{equation}
\sqrt{ \sumk \sumt{1}\sumsLk  \frac{\visita{s}{t}{k}}{\nsaa{s}{t}{k}}} \leq \cc{\ref{lem:wn},3,1} \sqrt{SA \ll} 
\end{equation}
\end{lemma}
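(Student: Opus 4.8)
The plan is to reduce this doubly-summed quantity to a per-$(s,a)$ telescoping sum and then exploit the definition of the good set $\Ltk$ (Definition \ref{def:TheGoodSet}) together with Lemma \ref{lem:VisitationRatio}. First I would collapse the summation over timesteps: since $\nsaa{s}{t}{k} = n_k(s,a)$ does not depend on $t$, and $\sum_{t=1}^H \visita{s}{t}{k} = w_k(s,a)$ is the expected number of visits to $(s,a)$ during episode $k$ under \piok{}, the quantity inside the square root equals $\sumk \sumsLk \frac{w_k(s,a)}{n_k(s,a)}$. It therefore suffices to bound this sum by $O(SA\ll)$ and take a square root.

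Next I would invoke the good set. Outside the failure event, for $(s,a) \in \Ltk$ Lemma \ref{lem:VisitationRatio} gives $n_k(s,a) \geq \tfrac14 \sum_{j \leq k} w_j(s,a)$, so each summand is at most $\frac{4\,w_k(s,a)}{\sum_{j\leq k} w_j(s,a)}$. Writing $W_k \stackrel{def}{=} \sum_{j \leq k} w_j(s,a)$ and noting $w_k(s,a) = W_k - W_{k-1}$, the heart of the argument is to bound, for each fixed $(s,a)$, the sum $\sum_k \frac{W_k - W_{k-1}}{W_k}$ taken over those episodes $k$ with $(s,a)\in\Ltk$.

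The key observation is that $\sum_{j<k} w_j(s,a)$ is nondecreasing in $k$, so once the threshold of Definition \ref{def:TheGoodSet} is crossed it stays crossed; hence the set of episodes with $(s,a)\in\Ltk$ is a contiguous block $\{k_0,\dots,K\}$, and on it $W_{k_0-1} \geq 4(\lnnsa + H) > 0$ is bounded away from zero. A standard integral comparison, $\frac{W_k - W_{k-1}}{W_k}\leq \int_{W_{k-1}}^{W_k}\frac{dx}{x} = \ln W_k - \ln W_{k-1}$, then telescopes over the block to $\ln W_K - \ln W_{k_0-1}$. Since $w_j(s,a) = \sum_t w_{tj}(s,a) \leq H$ we have $W_K \leq KH = T$, while $W_{k_0-1}$ is at least a positive constant, so each $(s,a)$ contributes $O(\ll)$. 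Summing over the $SA$ pairs gives $O(SA\ll)$ inside the root, and taking the square root yields the stated $O(\sqrt{SA\ll})$ bound.

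The main obstacle — and precisely the reason the good set $\Ltk$ is introduced — is guaranteeing that the denominators stay strictly positive so the integral/logarithm argument is valid: without the lower bound $W_{k_0-1}\geq 4(\lnnsa+H)$ provided by membership in $\Ltk$, the ratio $W_K/W_{k_0-1}$ could be unbounded and the logarithmic estimate would fail. The rest is bookkeeping: verifying contiguity of the good-set episodes, confirming $W_K \le T$, and absorbing $\ln T$ (and the $\log(1/\delta')$ factors) into $\ll = \log(SAHT/\delta)$.
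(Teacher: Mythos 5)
Your proposal is correct and follows essentially the same route as the paper's proof: collapse the timestep sum, apply Lemma \ref{lem:VisitationRatio} to replace $n_k(s,a)$ by cumulative visitation probabilities, use monotonicity of $\sum_{j<k}w_j(s,a)$ to get a contiguous block of good-set episodes with denominator bounded away from zero, and conclude by an integral comparison giving a telescoping logarithmic bound of order $\ln T \leq \ll$ per $(s,a)$ pair. The only (cosmetic) difference is that you telescope $\ln W_k - \ln W_{k-1}$ directly, whereas the paper constructs an explicit piecewise-linear interpolant $F$ and integrates $f/(H+F)$; both yield the same $O(SA\ll)$ bound inside the square root.
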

\begin{proof}
Recall the definition $\sum_{t=1}^H w_{tk}(s,a) = w_k(s,a)$.
Lemma \ref{lem:VisitationRatio} ensures step $(a)$ below:
\begin{align}
    \sqrt{ \sumk \sumt{1}\sumsLk  \frac{\visita{s}{t}{k}}{\nsaa{s}{t}{k}}} & =  \sqrt{ \sumk \sumsLk  \frac{\visita{s}{}{k}}{\nsaa{s}{t}{k}}} \\
     & =  \sqrt{ \sumk \sumsa  \frac{\visita{s}{}{k}}{\nsaa{s}{t}{k}}\1\{(s,a) \in \Ltk\}} \\
    & \stackrel{(a)}{\leq}\cc{\ref{lem:wn},3,2} \sqrt{ \sumk \sumsa  \frac{\visita{s}{}{k}}{\sum_{\iota\leq k} \visita{s}{}{\iota}}\1\{(s,a) \in \Ltk\}} \\
\end{align}
It suffices to study 
\begin{equation}
\label{eqn:RatioToStudy}
     \sumk \frac{\visita{s}{}{k}}{\sum_{\iota\leq k} \visita{s}{}{\iota}}\1\{(s,a) \in \Ltk\}
\end{equation}
for a fixed $(s,a)$. The above quantity is non-zero only if $(s,a) \in \Ltk$ for some $k$. Since $\sum_{\iota\leq k}w_\iota(s,a)$ is strictly increasing with $k$, if $(s,a) \in \Ltk$ there must exist a critical episode $k_L \leq k$ (that depends on the $(s,a)$ pair) such that for all subsequent episodes $\iota\geq k_L$ we have that $(s,a) \in L_\iota$. Since by definition of $k_L$ it must be that $(s,a)\in L_{k_L}(s,a)$, we must have $ \sum_{\iota < k_L}w_\iota(s,a) + w_{k_L}(s,a) = \sum_{\iota \leq k_L}w_\iota(s,a) > 2H$ by definition \ref{def:TheGoodSet}, implying $\sum_{\iota < k_L}w_\iota(s,a) > H$. This way we lower bound the summation in the denominator as:
\begin{equation}
\sum_{\iota\leq k} \visita{s}{}{\iota} = \sum_{\iota < k_L} \visita{s}{}{\iota} + \sum_{k_L \leq \iota\leq k} \visita{s}{}{\iota} > H +  \sum_{k_L \leq \iota\leq k} \visita{s}{}{\iota}
\end{equation}
Therefore equation \ref{eqn:RatioToStudy} can be upper bounded as:
\begin{equation}
     \sumk \frac{\visita{s}{}{k}}{H + \sum_{k_L \leq \iota\leq k} \visita{s}{}{\iota}} \1\{(s,a) \in \Ltk\}.
\end{equation}
Since the indicator $\1\{(s,a) \in \Ltk\}$ is non-zero only when $k_L \leq k \leq K$, we can rewrite the above equation as:
\begin{equation}
     \sum_{k_L \leq k \leq K} \frac{\visita{s}{}{k}}{H + \sum_{k_L \leq \iota\leq k} \visita{s}{}{\iota}}.
\end{equation}
The above expression can be simplified in notation by setting $a_1 = w_{k_L}(s,a), a_2 = w_{k_L+1}(s,a),\dots, a_{K-k_L+1} = w_{K}(s,a)$.
Now define the function $F(x) = \sum_{i=1}^{\floor{x}} a_i + a_{\ceil{x}}(x-\floor{x})$, which is a function that coincides with the summation $\sum_{i=1}^{x} a_i$ for integer values of $x$ and interpolates between them. Its derivative is $f(x) = a_{\ceil{x}}$. This way we can write:

\begin{align}
 & \sum_{k=1}^{K-k_L+1}   \frac{a_k}{H+\sum_{i=1}^{k}a_i}  = \sum_{k=1}^{K-k_L+1} \frac{f(k)}{H+F(k)}
 \end{align}
 We have that $F(x) = \sum_{i=1}^{\floor{x}} a_i + a_{\ceil{x}}(\ceil{x}-x) \leq \sum_{i=1}^{\floor{x}} a_i + a_{\ceil{x}} = \sum_{i=1}^{\ceil{x}} a_i = F(\ceil{x})$
which justifies
 \begin{align}
\frac{f(\ceil{x})}{H+F(\ceil{x})} \leq  \frac{f(x)}{H+F(x)}.
 \end{align}
 Since the lhs is a step function, integrating the above yields:
 \begin{align}
 & \sum_{k=1}^{K-k_L+1} \frac{f(k)}{H+F(k)} =  \int_{0}^{K-k_L+1} \frac{f(\ceil{x})}{H+F(\ceil{x})}dx \\ 
 & \leq  \int_{0}^{K-k_L+1}\frac{f(x)}{H+F(x)} dx \\
 & = \ln(H + F(K-k_L+1)) -\ln(H+F(0)) \leq \ln(2H + KH) \leq \ln(T) \leq \ll.
 \end{align}
 Summing over all the $(s,a)$ pairs yields the result.
\end{proof}

\begin{lemma}[Bound Bridge]
\label{lem:BoundBridge}
\begin{align}
& \sumk \sumt{1}\sumsLk  \visita{s}{t}{k} \frac{\dg{s}{t+1}}{\sqrt{\nsaa{s}{t}{k}}}	- \sumk \sumt{1}\sumsLk  \visita{s}{t}{k} \frac{\dogtrue{s}{t+1}}{\sqrt{\nsaa{s}{t}{k}}} \\
 & \leq  \cc{\ref{lem:BoundBridge},3,1} \( \DeltaGbound \ll + \Bv \sqrt{SAH^2\Regret(K) \ll} \).
\end{align}
\end{lemma}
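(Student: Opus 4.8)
The plan is to turn the bridge into a single application of the value-Lipschitz property in assumption~\ref{ass:ConfidenceIntervals} followed by Cauchy--Schwarz. First I would collapse the left-hand side into the single sum \(\sumk \sumt{1}\sumsLk \visita{s}{t}{k}\,\frac{\dg{s}{t+1}-\dogtrue{s}{t+1}}{\sqrt{\nsaa{s}{t}{k}}}\) and bound it by the corresponding sum of absolute values. Applying equation~\ref{eqn:gVbound} with \(V_1=\vtrue{t+1}\) and \(V_2=\votrue{t+1}\) replaces the unknown difference of the two \(g\)'s by \(\Bv\,\| \vtrue{t+1}-\votrue{t+1} \|_{2,p}\), i.e.\ by the \(\|\cdot\|_{2,p}\) distance between the optimal value function and the value of the executed policy \(\piok\) in the true MDP. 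This is the step that makes the whole bound tractable, since \(g\) enters only through its sensitivity to changing the value vector.

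Next I would write each summand as \(\frac{\sqrt{\visita{s}{t}{k}}}{\sqrt{\nsaa{s}{t}{k}}}\cdot \sqrt{\visita{s}{t}{k}}\,\Bv\,\| \vtrue{t+1}-\votrue{t+1} \|_{2,p}\) and apply Cauchy--Schwarz over the index \((k,t,(s,a))\). The first factor is \(\sqrt{\sumk\sumt{1}\sumsLk \frac{\visita{s}{t}{k}}{\nsaa{s}{t}{k}}}\le \cc{\ref{lem:wn},3,1}\sqrt{SA\ll}\) by lemma~\ref{lem:wn}, so everything reduces to controlling the second factor \(\sqrt{\sumk\sumt{1}\sumsLk \visita{s}{t}{k}\,\| \vtrue{t+1}-\votrue{t+1} \|_{2,p}^2}\).

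To bound that factor I would pass from the squared \(\ell_2\) distance to a linear gap: since \(0\le \vtrue{t+1}-\votrue{t+1}\le H\) pointwise, each coordinate obeys \((\vtrue{t+1}-\votrue{t+1})^2\le H(\vtrue{t+1}-\votrue{t+1})\), whence \(\| \vtrue{t+1}-\votrue{t+1} \|_{2,p}^2\le H\,\potruea{s}{t}^\top(\vtrue{t+1}-\votrue{t+1})\). Dropping the \(\Ltk\) restriction (harmless since the summand is nonnegative) and summing \(\visita{s}{t}{k}\,\potruespa{s}{t}{s'}\) over \((s,a)\) produces the timestep-\((t+1)\) state visitation of \(\piok\), so the inner sum equals \(\E_{\piok}[\vtrue{t+1}(s_{t+1})-\votrue{t+1}(s_{t+1})]\). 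A performance-difference (telescoping) argument then gives \(\sumt{1}\E_{\piok}[\vtrue{t+1}(s_{t+1})-\votrue{t+1}(s_{t+1})]\le H\big(\vtrue{1}(s_{1k})-\votrue{1}(s_{1k})\big)\), because the suboptimality at \((s_t,t)\) equals the expected sum of the nonnegative per-step Bellman gaps of \(\piok\) from \(t\) onward and each such gap is counted at most \(H\) times when summing over \(t\). Summing over episodes bounds the double sum by \(H\,\Regret(K)\), so the second factor is at most \(\sqrt{H^2\Regret(K)}\); combined with the first factor this yields the term \(\Bv\sqrt{SAH^2\Regret(K)\ll}\).

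Finally, the regret-free term \(\DeltaGbound\,\ll\) would come from the complementary estimate of the same second factor: proposition~\ref{prop:AlgorithmBracketsVstar}, together with the analogous bracket \(\vp{t+1}\le\votrue{t+1}\) for the executed policy, gives \(|\vtrue{t+1}-\votrue{t+1}|\le \vo{t+1}-\vp{t+1}\) pointwise, hence \(\| \vtrue{t+1}-\votrue{t+1} \|_{2,p}^2\le \devi{t+1}^2\), and lemma~\ref{lem:CumulativeDeltaOptimism} bounds \(\sumk\sumt{1}\sumsa \visita{s}{t}{k}\,\potruea{s}{t}^\top(\vo{t+1}-\vp{t+1})^2\) by \(\cc{\ref{lem:CumulativeDeltaOptimism},3,1}(\CDObound)\ll\); pairing this with lemma~\ref{lem:wn} and collecting logarithmic factors reproduces \(\DeltaGbound\,\ll\). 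Taking the larger of the two estimates of the second factor (equivalently, adding them) gives the stated inequality. The main obstacle is the pivot in the third paragraph: proving the telescoping identity \(\sumt{1}\E_{\piok}[\vtrue{t+1}(s_{t+1})-\votrue{t+1}(s_{t+1})]\le H\,\Regret(K)\) cleanly, and establishing the pointwise policy-value bracket \(\vp{t+1}\le\votrue{t+1}\le\vo{t+1}\) needed for the regret-free branch, which requires an induction mirroring proposition~\ref{prop:AlgorithmBracketsVstar} but carried out for the value of \(\piok\) rather than for the optimal value.
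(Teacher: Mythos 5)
Your core argument is correct, but it follows a genuinely different decomposition from the paper's, and your two ``branches'' are not on an equal footing. The paper (implicitly using the pointwise ordering $\votrue{t+1}\leq\vtrue{t+1}\leq\vo{t+1}$) first enlarges the Lipschitz bound to $\Bv\|\vo{t+1}-\votrue{t+1}\|_{2,p}$, then adds and subtracts $\vtrue{t+1}$ inside the norm and triangle-splits into a term $A=\|\vo{t+1}-\vtrue{t+1}\|_{2,p}$ and a term $B=\|\vtrue{t+1}-\votrue{t+1}\|_{2,p}$: term $A$ is bounded by $\devi{t+1}$ through Proposition \ref{prop:AlgorithmBracketsVstar} and Lemma \ref{lem:OptimismOverestimate}, and then by Lemmas \ref{lem:wn} and \ref{lem:CumulativeDeltaOptimism} (this is where $\DeltaGbound\,\ll$ comes from); term $B$ is handled by Cauchy--Schwarz, Lemma \ref{lem:wn} and Lemma \ref{lem:UpperBoundOnPartialLoss}, whose proof is exactly your square-to-linear bound plus the policy-exchange Lemma \ref{lem:PolicyExchange}, giving $\Bv\sqrt{SAH^2\Regret(K)\ll}$. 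You instead apply Equation \ref{eqn:gVbound} directly to the pair $(\vtrue{t+1},\votrue{t+1})$, so after Cauchy--Schwarz you only ever need to control $\sumk\sumt{1}\sumsLk\visita{s}{t}{k}\|\vtrue{t+1}-\votrue{t+1}\|^2_{2,p}$, i.e.\ only term $B$. Your regret branch for this quantity is correct (your performance-difference justification of $\sumt{1}\E_{\piok}[\,\cdot\,]\leq H(\vtrue{1}-\votrue{1})(s_{1k})$ is a valid substitute for Lemma \ref{lem:PolicyExchange}), and it \emph{alone} already implies the stated inequality, since $\DeltaGbound\,\ll$ is nonnegative additive slack. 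In this sense your route is more direct and in fact yields a slightly tighter bound than the paper's.

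The only genuine gap is in your ``regret-free branch'', which, fortunately, you do not need. The bracket $\vp{t+1}\leq\votrue{t+1}$ is nowhere established in the paper, and it does not follow from ``an induction mirroring Proposition \ref{prop:AlgorithmBracketsVstar}'': that induction works because it replaces $\vp{t+1}$ by $\vtrue{t+1}$ \emph{before} invoking concentration, and Assumption \ref{ass:ConfidenceIntervals} only controls $(\pohata{s}{t}-\potruea{s}{t})^\top\vtrue{t+1}$ for the fixed vector $\vtrue{t+1}$. Running the same induction against the value of $\piok$ would require a concentration bound on $(\pohata{s}{t}-\potruea{s}{t})^\top \votrue{t+1}$, but $\piok$ (hence $\votrue{t+1}$) is a function of the same data that built $\pohata{s}{t}$, so no union bound over fixed vectors applies; falling back on the $\ell_1$ (Weissman-type) bound introduces a $\sqrt{S}$ factor that the exploration bonus does not contain. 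The paper's triangle split is designed precisely to avoid this issue: it only ever needs the bracket around $\vtrue{}$, and pays for that with the extra $\DeltaGbound\,\ll$ term. So drop the regret-free branch (and the max/sum step that combines the two estimates), keep the regret branch, and your proof is complete.
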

\begin{proof}
Equation \ref{eqn:gVbound} in assumption \ref{ass:ConfidenceIntervals} ensures:
\begin{align}
& \sumk \sumt{1}\sumsLk  \visita{s}{t}{k} \frac{\dg{s}{t+1}}{\sqrt{\nsaa{s}{t}{k}}}	- \sumk \sumt{1}\sumsLk  \visita{s}{t}{k} \frac{\dogtrue{s}{t+1}}{\sqrt{\nsaa{s}{t}{k}}} \\
& \leq \Bv \sumk \sumt{1}\sumsLk  \visita{s}{t}{k} \frac{\| \vo{t+1} - \votrue{t+1} \|_{2,p}}{\sqrt{\nsaa{s}{t}{k}}}.
\end{align}
By adding and subtracting $\vtrue{t+1,k}$ inside the norm operator we obtain:
\begin{align}
& = \Bv \sumk \sumt{1}\sumsLk  \visita{s}{t}{k} \frac{\| \vo{t+1} - \vtrue{t+1,k} + \vtrue{t+1,k} - \votrue{t+1} \|_{2,p}}{\sqrt{\nsaa{s}{t}{k}}} \\
& \leq \underbrace{\Bv \sumk \sumt{1}\sumsLk  \visita{s}{t}{k} \frac{\| \vo{t+1} - \vtrue{t+1,k} \|_{2,p}}{\sqrt{\nsaa{s}{t}{k}}}}_{A} + \underbrace{\Bv \sumk \sumt{1}\sumsLk  \visita{s}{t}{k} \frac{\|\vtrue{t+1,k} - \votrue{t+1} \|_{2,p}}{\sqrt{\nsaa{s}{t}{k}}} }_{B}.
\end{align}
In particular, the upper bound follows by the triangle inequality. Below we bound term $A$. Lemma \ref{lem:OptimismOverestimate} and proposition \ref{prop:AlgorithmBracketsVstar} ensure the upper bound below:
\begin{align}
A \leq \Bv \sumk \sumt{1}\sumsLk  \visita{s}{t}{k} \frac{\| \vo{t+1} - \vp{t+1} \|_{2,p}}{\sqrt{\nsaa{s}{t}{k}}}
\end{align}
from which Cauchy-Schwartz yields:
\begin{align}
& \leq \Bv \sqrt{\sumk \sumt{1}\sumsLk  \frac{\visita{s}{t}{k}}{\nsaa{s}{t}{k}}} \sqrt{\sumk \sumt{1}\sumsLk  \visita{s}{t}{k}\| \vo{t+1} - \vp{t+1} \|^2_{2,p}} \\
& \leq  \cc{\ref{lem:BoundBridge},3,4}\sqrt{SA\ll} \times \Bv \sqrt{\sumk \sumt{1}\sumsLk  \visita{s}{t}{k}\| \vo{t+1} - \vp{t+1} \|^2_{2,p}} \\
& \leq  \cc{\ref{lem:BoundBridge},3,4} \sqrt{SA\ll} \times \Bv\sqrt{\sumk \sumt{1} \sumsa \visita{s}{t}{k} \potruea{s}{t}^\top \(\vo{t+1} - \vp{t+1} \)^2}  \\
& \leq \cc{\ref{lem:BoundBridge},3,4}\sqrt{SA\ll} \times \sqrt{\CDObound \ll}   \leq \cc{\ref{lem:wn},3,5} \DeltaGbound \ll
\end{align}
where the bounds follow from lemma \ref{lem:wn} and \ref{lem:CumulativeDeltaOptimism}.
It now remains to bound term $B$. By an identical argument using Cauchy-Schwartz we have that:
\begin{align}
	B & = \Bv \sumk \sumt{1}\sumsLk  \visita{s}{t}{k} \frac{\|\vtrue{t+1,k} - \votrue{t+1} \|_{2,p}}{\sqrt{\nsaa{s}{t}{k}}} \\
	& \leq \Bv \sqrt{\sumk \sumt{1}\sumsLk  \frac{\visita{s}{t}{k}}{\nsaa{s}{t}{k}}} \sqrt{\sumk \sumt{1}\sumsLk  \visita{s}{t}{k}\| \vtrue{t+1,k} - \votrue{t+1} \|^2_{2,p}} \\ 
	& \leq \cc{\ref{lem:BoundBridge},3,5} \Bv \sqrt{SA\ll}\sqrt{H^2\Regret(K)}.
\end{align}
The last passage follows from lemma \ref{lem:wn} and \ref{lem:UpperBoundOnPartialLoss}.
\end{proof}

\begin{lemma}[LTV]
\label{lem:LTV}
\info{Check}
The following inequality holds true:
\begin{equation}
\E_{\piok}\( \(\sumt{1} \rotrue{s_t}{t} - \votrue{1}(s_1) \)^2 \Big| s_1 \)  = \E_{\piok}\( \sumt{1} \Var_{\piok} \(\votrue{t+1}(s_{t+1}) \Big| s_t \) \Big| s_1 \).
\end{equation}
where the expectation $\E_{\piok}(\cdot \mid s_1)$ is taken with respect to the trajectories followed by the agent upon following policy \piok starting from $s_1$.
\end{lemma}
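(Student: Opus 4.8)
The plan is to recognize the left-hand side as the total variance of the (fixed-reward) return along the trajectory and to establish the identity via the Law of Total Variance through a martingale-difference decomposition. Write $G = \sumt{1} \rotrue{s_t}{t}$ for the return accumulated along the random trajectory $s_1,\dots,s_{H+1}$ generated by $\piok$ in the true MDP, and note that, since the per-step rewards are the deterministic means $r(\cdot,\cdot)$, the only randomness in $G$ is carried by the transitions. Because $\votrue{1}(s_1) = \E_{\piok}(G \mid s_1)$ by definition of the value function, the left-hand side is precisely the conditional total variance $\Var_{\piok}(G \mid s_1)$, and the goal becomes to split this total variance into a sum of one-step conditional variances.

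First I would introduce the filtration $\mathcal F_t = \sigma(s_1,\dots,s_t)$ and the auxiliary process
$$M_t = \sum_{\tau=1}^{t-1} \rotrue{s_\tau}{\tau} + \votrue{t}(s_t), \qquad t = 1,\dots,H+1,$$
so that $M_1 = \votrue{1}(s_1)$ and, using the terminal condition $\votrue{H+1}\equiv 0$, $M_{H+1} = G$. The key computation is the increment: the Bellman equation $\votrue{t}(s_t) = \rotrue{s_t}{t} + \E_{\piok}(\votrue{t+1}(s_{t+1}) \mid s_t)$ gives
$$M_{t+1} - M_t = \votrue{t+1}(s_{t+1}) - \E_{\piok}\(\votrue{t+1}(s_{t+1}) \mid s_t\).$$
This exhibits $(M_t)$ as a martingale adapted to $(\mathcal F_t)$, since $\E_{\piok}(M_{t+1}-M_t \mid \mathcal F_t)=0$, and shows that conditionally on $\mathcal F_t$ the only source of randomness in $M_{t+1}-M_t$ is the next state $s_{t+1}\sim p(\cdot\mid s_t,\piok(s_t,t))$, whence $\E_{\piok}\((M_{t+1}-M_t)^2 \mid \mathcal F_t\) = \Var_{\piok}\(\votrue{t+1}(s_{t+1}) \mid s_t\)$.

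Then I would write $G - \votrue{1}(s_1) = \sumt{1}(M_{t+1}-M_t)$ and expand the square. The cross terms vanish in expectation: for $\tau < t$ the factor $M_{\tau+1}-M_\tau$ is $\mathcal F_t$-measurable, so conditioning on $\mathcal F_t$ and invoking $\E_{\piok}(M_{t+1}-M_t \mid \mathcal F_t)=0$ kills the product (orthogonality of martingale differences). Hence
$$\E_{\piok}\(\(G-\votrue{1}(s_1)\)^2 \mid s_1\) = \sumt{1} \E_{\piok}\((M_{t+1}-M_t)^2 \mid s_1\),$$
and applying the tower property to each summand together with the conditional second-moment identity above produces $\sumt{1}\E_{\piok}\(\Var_{\piok}(\votrue{t+1}(s_{t+1})\mid s_t)\mid s_1\)$, which is exactly the right-hand side.

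The step demanding the most care is the vanishing of the cross terms — i.e. justifying the orthogonality of the martingale differences through the repeated use of the tower property — and the accompanying verification that, conditioned on $\mathcal F_t$, the increment $M_{t+1}-M_t$ has conditional mean zero with conditional second moment equal to the stated one-step variance of $\votrue{t+1}$. Everything else, including the telescoping identity $M_{H+1}-M_1 = G - \votrue{1}(s_1)$ and the final tower-property bookkeeping, is routine.
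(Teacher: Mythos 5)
Your proof is correct, and it reaches the identity by a genuinely different organization than the paper. The paper's proof (which it attributes as equivalent to one in Azar et al.) is a front-to-back recursion: it peels off the first timestep, uses the Bellman equation to rewrite $\votrue{1}(s_1)$ as $\rotrue{s_1}{1} + \E_{\piok}\(\votrue{2}(s_2)\mid s_1\)$, adds and subtracts $\votrue{2}(s_2)$, shows the single resulting cross term vanishes by conditioning on $s_2$, and thereby extracts $\Var_{\piok}\(\votrue{2}(s_2)\mid s_1\)$ plus a structurally identical remainder term on which it recurses down the horizon. You instead set up the full martingale $M_t = \sum_{\tau<t}\rotrue{s_\tau}{\tau} + \votrue{t}(s_t)$, telescope $G - \votrue{1}(s_1) = \sumt{1}(M_{t+1}-M_t)$, and kill all cross terms simultaneously by orthogonality of martingale differences. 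The underlying mechanism is the same in both arguments --- the Bellman equation makes each increment $\votrue{t+1}(s_{t+1}) - \E_{\piok}\(\votrue{t+1}(s_{t+1})\mid s_t\)$ conditionally centered, and the tower property removes the cross terms --- but the packaging differs in what it buys: your martingale formulation handles all $\binom{H}{2}$ cross terms uniformly in one stroke, makes the filtration and Markov-property usage explicit, and generalizes immediately (e.g.\ to random rewards, by absorbing the reward-noise into the increments), whereas the paper's recursion is more elementary, requiring no filtration machinery, at the cost of being an induction that confronts only one cross term per level and is somewhat harder to read off as a closed-form decomposition. Both correctly use that the rewards entering $G$ are the deterministic means, so the transitions carry all the randomness.
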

\begin{proof}
\begin{align}
& \E_{\piok}\( \(\sumt{1} \rotrue{s_t}{t} - \votrue{1}(s_1) \)^2 \Big| s_1 \) = \\
& = \E_{\piok}\( \(\( \rotrue{s_1}{t} + \sumt{2}\rotrue{s_t}{t} \) - \( \rotrue{s_1}{t} + \E_{\piok} \votrue{2}(s_2) \) \)^2 \Big| s_1 \) \\
& = \E_{\piok}\(\(\sumt{2}\rotrue{s_t}{t} - \E_{\piok} \votrue{2}(s_2) \)^2 \mid  s_1 \) \\
& = \E_{\piok}\(\(\sumt{2}\rotrue{s_t}{t} - \votrue{2}(s_2) + \votrue{2}(s_2) - \E_{\piok} \votrue{2}(s_2) \)^2 \Big| s_1 \) \\
& = \E_{\piok}\(\( \sumt{2}\rotrue{s_t}{t} - \votrue{2}(s_2) \)^2\Big| s_1 \) + \E_{\piok} \(\votrue{2}(s_2) - \E_{\piok}\votrue{2}(s_2) \Big| s_1 \)^2 \\
& = \E_{\piok}\(\E \( \( \sumt{2}\rotrue{s_t}{t} - \votrue{2}(s_2) \)^2\Big| s_2\) \Big| s_1 \) + \Var_{\piok} \(\votrue{2}(s_2) \Big| s_1 \) \\
& = \E_{\piok}\( \sumt{2} \E_{\piok}\( \sumt{2} \Var_{\piok} \(\votrue{t+1}(s_{t+1}) \Big| s_t \) \Big| s_2 \) \mid s_1\) + \Var_{\piok} \(\votrue{2}(s_2) \Big| s_1 \) \\
& \stackrel{}{=} \E_{\piok}\( \sumt{1} \Var_{\piok} \(\votrue{t+1}(s_{t+1}) \Big| s_t \) \Big| s_1 \) \\
\end{align}
See for example \cite{Azar12} for a proof equivalent to this.
\end{proof}

\begin{lemma}[Upper Bound on Partial Loss]
\label{lem:UpperBoundOnPartialLoss}
Define the regret (with the starting states $\{s_{1k} \}_{k=1,\dots,K}$) up to episode $K$ as:
\begin{align}
\Regret(K) \stackrel{def}{=} \sumk \(\vtrue{1} - \votrue{1k} \)(s_{1k}).
\end{align}
Then it holds that:
\begin{align}
\sumk \sumt{1} \sumsa \visita{s}{t}{k} \potruea{s}{t}^\top \(\vtrue{t+1} - \votrue{t+1} \)^2  \leq H^2 \Regret(K).
 \end{align}
 
\end{lemma}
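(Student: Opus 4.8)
The plan is to reduce the squared value-gap to a linear one and then telescope the resulting expected gap back to the per-episode regret using only the optimality of $\pi^*$. Write $g_{t+1}(s') \stackrel{def}{=} \vstrue{t+1}{s'} - \vostrue{t+1}{s'}$ for the pointwise gap between the optimal and executed-policy value functions. Since $\pi^*$ is optimal, $g_{t+1}(s') \geq 0$, and since rewards lie in $[0,1]$ both value functions lie in $[0,H]$, so $0 \leq g_{t+1}(s') \leq H$. Hence $g_{t+1}^2(s') \leq H\, g_{t+1}(s')$ componentwise, which gives $\potruea{s}{t}^\top (\vtrue{t+1} - \votrue{t+1})^2 \leq H\, \potruea{s}{t}^\top (\vtrue{t+1} - \votrue{t+1})$. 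Substituting this into the left-hand side extracts one factor of $H$, and it remains to bound $\sumk \sumt{1}\sumsa \visita{s}{t}{k}\, \potruea{s}{t}^\top (\vtrue{t+1}-\votrue{t+1})$ by $H\,\Regret(K)$.

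First I would rewrite this weighted sum as an expectation over trajectories. Using the marginalization identity $\sum_{s,a} \visita{s}{t}{k}\, \potruespa{s}{t}{s'} = w_{t+1,k}(s')$ (the probability of occupying $s'$ at step $t+1$ under $\piok$), already exploited in Lemma \ref{lem:CumulativeDeltaOptimism}, the inner double sum collapses to $\E_{\piok}( g_{t+1}(s_{t+1}) \mid s_{1k})$, the expected gap at the state visited at time $t+1$.

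The crux is to show that this expected gap is non-increasing in the timestep. Fix an episode $k$ and set $m_\tau \stackrel{def}{=} \E_{\piok}( g_\tau(s_\tau) \mid s_{1k})$. Because $\pi^*$ is optimal, for every state $s$ the Bellman optimality inequality gives $\vstrue{\tau}{s} \geq \rotrue{s}{\tau} + \potrue{s}{\tau}^\top \vtrue{\tau+1}$, while $\piok$ satisfies its Bellman equation with equality, $\vostrue{\tau}{s} = \rotrue{s}{\tau} + \potrue{s}{\tau}^\top \votrue{\tau+1}$; subtracting yields the pointwise one-step inequality $g_\tau(s) \geq \potrue{s}{\tau}^\top g_{\tau+1}$. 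Taking the expectation over $s_\tau \sim \piok$ and recognizing the right-hand side as the occupancy at step $\tau+1$ gives $m_\tau \geq m_{\tau+1}$. Since $m_1 = \vstrue{1}{s_{1k}} - \vostrue{1}{s_{1k}}$ is exactly the per-episode regret, monotonicity gives $m_{t+1}\leq m_1$ for each $t$, so $\sum_{t=1}^H m_{t+1} \leq H\, m_1$. Summing over $k$ turns $\sum_k m_1$ into $\Regret(K)$, and combining with the factor of $H$ from the first step yields the claimed $H^2\,\Regret(K)$ bound.

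The main obstacle is this monotonicity step: it is the only place where the optimality of $\pi^*$ in the \emph{true} MDP (as opposed to optimism of $\piok$ in the optimistic MDP) is invoked, and one must check that the Bellman inequality points the right way and that the expectation is taken under the executed policy $\piok$ so the occupancy measures chain correctly from step $\tau$ to step $\tau+1$. The remaining ingredients, namely the $x^2 \leq Hx$ reduction, the marginalization identity, and the telescoping sum, are routine.
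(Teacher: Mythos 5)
Your proof is correct, and it shares the paper's overall skeleton: the componentwise reduction $(\vtrue{t+1}-\votrue{t+1})^2 \leq H(\vtrue{t+1}-\votrue{t+1})$, the marginalization of $\sum_{s,a}\visita{s}{t}{k}\,\potruea{s}{t}$ into the occupancy measure $w_{t+1,k}(\cdot)$, and the final summation over $t$ and $k$ to collect the two factors of $H$. Where you genuinely differ is in the key step bounding the expected gap at step $t+1$ by the per-episode regret. The paper proves this via Lemma \ref{lem:PolicyExchange}: it introduces the hybrid policy $\mu$ that follows $\piok$ up to time $t$ and $\pi^*$ thereafter, identifies $\sum_{s'} w_{t+1,k}(s')\(\vtrue{t+1}-\votrue{t+1}\)(s')$ with the performance difference $V^{\mu}_{1}(s_{1k}) - \votrue{1}(s_{1k})$, and then invokes $\vtrue{1}(s_{1k}) \geq V^{\mu}_{1}(s_{1k})$. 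You instead establish monotonicity $m_\tau \geq m_{\tau+1}$ of the expected gap by subtracting the Bellman equation of $\piok$ from the Bellman optimality inequality of $\pi^*$ evaluated at the action $\piok(s,\tau)$, and chaining through the occupancy measures; this is sound because $w_{tk}$ is defined under the true dynamics and $\piok$, so the push-forward of the step-$\tau$ occupancy through $p$ is exactly the step-$(\tau+1)$ occupancy. The two arguments encode the same fact — yours is the unrolled one-step version of the paper's policy-exchange comparison — but yours is more self-contained (no auxiliary policy needed) and yields the slightly stronger conclusion that the expected gaps are non-increasing in $\tau$, whereas the paper's hybrid-policy lemma compares each time slice to $t=1$ in a single shot and makes the performance-difference interpretation explicit. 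Both routes use optimality of $\pi^*$ in the true MDP in a valid way, so there is no gap in your argument.
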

\begin{proof}
\begin{align}
\sumk \sumt{1} \sumsa \visita{s}{t}{k} \potruea{s}{t}^\top \(\vtrue{t+1} - \votrue{t+1} \)^2 & \leq H \sumk \sumt{1} \sumsa \visita{s}{t}{k} \potruea{s}{t}^\top \(\vtrue{t+1} - \votrue{t+1} \) \\
 	& = H \sumk \sumt{1} \sum_{s'} w_{t+1,k}(s') \(\vtrue{t+1} - \votrue{t+1} \)(s') \\
 	& \stackrel{(a)}{\leq} H \sumk \sumt{1} \(\vtrue{1} - \votrue{1k} \)(s_{1k}) \\
 & = H^2 \underbrace{\sumk \(\vtrue{1} - \votrue{1k} \)(s_{1k})}_{\Regret(K)} \\
 & = H^2 \Regret(K).
\end{align}
Here $(a)$ follows from lemma \ref{lem:PolicyExchange}.

\end{proof}

\begin{lemma}
\label{lem:PolicyExchange}
		Let $s_{1k}$ be the starting state in episode $k$, and $w_{t+1,k}(s') = \sum_a \visita{s'}{t+1,}{k}$. It holds that:
	\begin{align}
	\sum_{s'} w_{t+1,k}(s') \(\vtrue{t+1} - \votrue{t+1} \)(s') \leq \(\vtrue{1} - \votrue{1k} \)(s_{1k})	
	\end{align}
\end{lemma}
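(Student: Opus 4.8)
The plan is to exploit the Bellman optimality equation for $\vtrue{}$ together with the policy-evaluation identity for $\votrue{}$ in the true MDP, and then propagate a per-state value gap forward along the trajectory distribution of $\piok$. Define the per-state gap $\Delta_\tau(s) \stackrel{def}{=} \vtrue{\tau}(s) - \votrue{\tau}(s)$, which is pointwise nonnegative by optimality of $\pi^*$. The key observation is that the left-hand side of the statement equals the expected gap at timestep $t+1$ under the trajectory distribution induced by $\piok$ from $s_{1k}$, i.e. $\sum_{s'} w_{t+1,k}(s')\Delta_{t+1}(s')$, while the right-hand side is simply $\Delta_1(s_{1k})$; the whole lemma thus reduces to showing that this expected gap is non-increasing as $\tau$ grows.

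First I would establish a one-step inequality valid for every state $s$ and timestep $\tau$. Writing $a = \piok(s,\tau)$, policy evaluation for $\piok$ in the true MDP gives $\votrue{\tau}(s) = \rotrue{s}{\tau} + \potrue{s}{\tau}^\top \votrue{\tau+1}$, whereas the Bellman optimality equation yields $\vtrue{\tau}(s) \geq \rotrue{s}{\tau} + \potrue{s}{\tau}^\top \vtrue{\tau+1}$, because the optimal value at $s$ is at least the backup obtained using the (possibly suboptimal) action chosen by $\piok$. Subtracting and cancelling the common reward term gives
\[
\Delta_\tau(s) \geq \potrue{s}{\tau}^\top \Delta_{\tau+1} = \sum_{s'} \potruesp{s}{\tau}{s'}\,\Delta_{\tau+1}(s').
\]

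Next I would introduce the scalar $g_\tau \stackrel{def}{=} \sum_s w_{\tau,k}(s)\,\Delta_\tau(s)$, the expected gap at timestep $\tau$ along $\piok$'s trajectory, and show $g_\tau \geq g_{\tau+1}$. Multiplying the one-step inequality by $w_{\tau,k}(s)$ and summing over $s$ gives $g_\tau \geq \sum_s w_{\tau,k}(s)\sum_{s'} \potruesp{s}{\tau}{s'}\Delta_{\tau+1}(s')$, and the forward-flow identity $\sum_s w_{\tau,k}(s)\,\potruesp{s}{\tau}{s'} = w_{\tau+1,k}(s')$ (the law of total probability for the state-marginal under a fixed time-dependent policy) collapses the right-hand side to $\sum_{s'} w_{\tau+1,k}(s')\Delta_{\tau+1}(s') = g_{\tau+1}$.

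Finally, chaining $g_1 \geq g_2 \geq \cdots \geq g_{t+1}$ and using that the episode starts deterministically at $s_{1k}$, so $w_{1,k}(s) = \1\{s = s_{1k}\}$ and hence $g_1 = \Delta_1(s_{1k}) = (\vtrue{1} - \votrue{1k})(s_{1k})$, yields the claim. The only delicate point is the direction of the one-step inequality: it relies specifically on $\pi^*$ being optimal, so that $\vtrue{\tau}(s)$ dominates the $\piok$-greedy backup of $\vtrue{\tau+1}$ while $\votrue{\tau}(s)$ is \emph{exactly} the $\piok$-backup of $\votrue{\tau+1}$; once this asymmetry is set up correctly, the remainder is bookkeeping with the visitation-flow identity and a telescoping argument, and nonnegativity of $\Delta$ is not even strictly required.
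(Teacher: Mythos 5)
Your proof is correct, but it takes a genuinely different route from the paper's. The paper's proof (true to the lemma's name) is a policy-exchange argument: it introduces a hybrid policy $\mu$ that follows $\piok$ for the first $t$ steps and $\pi^*$ thereafter, notes that $V^{\mu}_{t+1} = \vtrue{t+1}$, uses the fact that $\mu$ and $\piok$ share the same trajectory distribution and expected rewards on the common prefix to get the exact identity $\sum_{s'} w_{t+1,k}(s')\(\vtrue{t+1}-\votrue{t+1}\)(s') = V^{\mu}_{1}(s_{1k}) - \votrue{1k}(s_{1k})$, and then invokes optimality of $\pi^*$ once, at the initial state, to conclude $V^{\mu}_{1}(s_{1k}) \leq \vtrue{1}(s_{1k})$. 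You never construct an auxiliary policy: instead you derive the per-state one-step inequality $\Delta_\tau(s) \geq p(\cdot \mid s,\piok(s,\tau))^\top \Delta_{\tau+1}$ by combining the Bellman optimality inequality for $\vtrue{\tau}$ (applied at the action $\piok(s,\tau)$) with the policy-evaluation identity for $\votrue{\tau}$, propagate it through the visitation-flow identity $\sum_s w_{\tau,k}(s)\,p(s'\mid s,\piok(s,\tau)) = w_{\tau+1,k}(s')$, and telescope the monotone sequence $g_1 \geq g_2 \geq \cdots \geq g_{t+1}$. The ingredients are the same (optimality of $\pi^*$ and the fact that the weights are generated by $\piok$ under the true dynamics), but your version is more elementary and makes explicit the unrolling that the paper's middle equality leaves implicit; it also proves the slightly stronger fact that the expected gap $g_\tau$ is non-increasing in $\tau$, so the bound holds simultaneously at every intermediate timestep, and — as you correctly note — it needs only nonnegativity of the weights, not of $\Delta$. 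What the paper's argument buys in exchange is brevity and the performance-difference interpretation of the left-hand side as the suboptimality of a concrete policy.
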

\begin{proof}
Define the policy $\mu$ as the policy that follows $\piok$	 up to timestep $t$ and $\pistar$ afterwards (until the end of the episode).
We have that for any starting state $s_{1k}$:
\begin{align}
	\vtrue{1k}(s_{1k}) \geq V^{\mu}_{1k}(s_{1k}) \geq \vostrue{1k}{s_{1k}}.
\end{align}
The rightmost inequality is true because $\mu$ follows $\pistar$ once it gets to timestep $ \geq t+1$. This argument also justifies the step below:
\begin{align}
\sum_{s'} w_{t+1,k}(s') \(\vtrue{t+1} - \votrue{t+1} \)(s') = \sum_{s'} w_{t+1,k}(s') \(V^{\mu}_{t+1} - \votrue{t+1} \)(s') = V^{\mu}_{1k}(s_{1k}) -  \vostrue{1k}{s_{1k}} \leq V^{\star}_{1}(s_{1k}) -  \vostrue{1k}{s_{1k}}.
\end{align}
\end{proof}

\end{document}